\setlist{nosep}
\newtheorem*{mdresult}{Result}
\newcommand{\argmax}{\arg\!\max}
\newcommand{\real}{\mathbb{R}}
\newcommand{\opt}{\mathsf{Opt}}
\newcommand{\cp}{\mathsf{CP}}
\newcommand{\pre}{\mathsf{Pre}}
\newcommand{\subr}{\mathsf{SubAlg}}
\newcommand{\alg}{\mathsf{Alg}}
\newcommand{\rev}{\mathsf{Rev}}
\newcommand{\loss}{\mathsf{Loss}}
\newcommand{\lcb}{\ensuremath{\ell}}
\newcommand{\ucb}{\ensuremath{r}}
\newcommand{\D}{\mathcal{D}}
\newcommand{\se}{{T^{-100}}}
\newcommand{\epsprm}{\delta}
\newcommand{\OTild}{\widetilde{O}}
\newcommand{\one}{\mathbf{1}}%
\newcommand{\Ex}[2][]{\mbox{\rm\bf E}_{#1}\left[#2\right]}%
\renewcommand{\Pr}[2][]{\mbox{\rm\bf Pr}_{#1}\left[#2\right]}%
\newcommand{\IGNORE}[1]{}
\newcommand{\poly}{\ensuremath{\mathsf{poly}}}
\newcounter{note}[section]
\title{\textmd{\bf Bandit Sequential Posted Pricing via Half-Concavity }}
\date{\today}
\author{ 
    Sahil Singla\thanks{
        (ssingla@gatech.edu)
        School of Computer Science,
        Georgia Tech. Supported in part by NSF award CCF-2327010.
        }
    \and
    Yifan Wang\thanks{
        (ywang3782@gatech.edu)
        School of Computer Science,
        Georgia Tech. Supported in part by NSF award CCF-2327010.
        }
}
\newtheorem{Theorem}{Theorem}[section]
\newtheorem{main}{Main Result}
\newtheorem{Lemma}[Theorem]{Lemma}
\newtheorem{Definition}[Theorem]{Definition}
\newtheorem{Claim}[Theorem]{Claim}
\newtheorem{Corollary}[Theorem]{Corollary}
\begin{document}



\maketitle 

\begin{abstract}
\medskip 
Sequential posted pricing auctions are popular  because of their simplicity in practice and their tractability in theory.  A usual assumption in their study is that the Bayesian prior distributions of the buyers are known to the seller, while in reality these priors can only be accessed from historical data. To overcome this assumption, 
we study  sequential posted pricing in the bandit learning model, where the seller interacts with  $n$ buyers over $T$ rounds: In each round the seller posts $n$ prices for the $n$ buyers and the first buyer with a valuation higher than the price takes the item. The only feedback that the seller receives in each round is the revenue. 

\smallskip

Our main results obtain nearly-optimal  regret bounds for single-item sequential posted pricing in the bandit learning model. In particular, we achieve an $\OTild(\poly(n)\sqrt{T})$ regret  for buyers with (Myerson's) regular distributions and an $\OTild(\poly(n)T^{{2}/{3}})$ regret for buyers with general distributions, both of which are tight in the number of rounds $T$. Our result for regular distributions was previously not known even for the single-buyer setting   and relies on a new \textit{half-concavity} property of the revenue function in the value space. For  $n$ sequential buyers, our technique is to run a generalized single-buyer algorithm for all the buyers and to carefully bound the regret from the sub-optimal pricing of the suffix buyers. 
\end{abstract}


\bigskip
\setcounter{tocdepth}{1}
{\small
\begin{spacing}{0}
\vspace{-0.5cm}
   \tableofcontents
\end{spacing}
}

\clearpage

\section{Introduction}

Sequential Posted Pricing (SPP) schemes are well-studied  in  mechanism design because of their simplicity/convenience in practice and their tractability in theory. In the basic setting, a seller wants to sell a single item to a group of buyers. The buyers arrive one-by-one and the seller presents a take-it-or-leave-it price. The first buyer with a value higher than the posted price takes the item by paying the price. The benefit of SPP is that it gives approximately-optimal revenue while being ``simple''. For instance, although it is known that Myerson's mechanism \cite{myerson1981optimal} is optimal for selling a single item to $n$ buyers with regular distributions, this mechanism is often impractical. This is because the winner's payment is defined via ``complicated'' virtual valuations and all  buyer bids need to be simultaneously revealed, which is not possible in large markets. In contrast, SPP are known to give at least $1/2$-fraction of the optimal revenue for  regular buyers, while having fixed prices and buyers arriving one-by-one {\color{black}(see the books \cite{roughgarden2017twenty,HartlineBook}  for regular buyers,  and \cite{DBLP:conf/soda/Yan11} for a discussion on how ironing  extends this result to general buyers). }
For multi-item (multi-parameters) setting, the difference between optimal and SPP mechanisms is even more stark. For instance,  the optimal mechanism to sell $n$ items to a single unit-demand bidder is known to be impossible (unless $P^{NP}=P^{\#P}$) \cite{ChenDOPSY15}, but we can use SPP to obtain $1/4$-fraction of the optimal revenue~\cite{CHK-EC07,CHMS-STOC10,CMS-Games15}. 



\medskip
\noindent \textbf{Sample Complexity.} A common assumption among initial works on revenue-maximization 
was that the underlying distributions of the buyers are known to the seller.  This  is unrealistic  in many  applications since the distributions are unknown and  need to be learnt  from historical data. Inspired by this, a  recent line of research  studies the sample complexity for different types of mechanisms, i.e., how many samples are sufficient to learn an $\epsilon$-optimal mechanism. 
For instance, starting with the pioneering work in \cite{DBLP:conf/stoc/ColeR14,BalcanBHM-JCSS08}, several papers   studied the sample complexity of Myerson's mechanism \cite{HMR-SICOMP18,DBLP:conf/stoc/Devanur0P16,DBLP:conf/sigecom/RoughgardenS16,DBLP:conf/stoc/GonczarowskiN17}, and finally \cite{GHZ-STOC19}  obtained the tight sample complexity bounds for regular buyers and for $[0, 1]$ bounded-support buyers. Sample complexity of many other classes of mechanisms have been studied, e.g., \cite{BSV-EC18,JLX-ITCS23} study  second-price auctions, \cite{BSV-EC18,BDDKSV-STOC21} study posted-pricing mechanisms, \cite{GHTZ-COLT21} study strongly-monotone auctions (which includes Myerson and SPP), and  \cite{MR-COLT16,CD-FOCS17,GW21,COZ-STOC22} study the sample complexity of  multi-parameter auctions.

\medskip
\noindent \textbf{Bandit Learning.} A stronger model of learning than the sample complexity model is the well-studied bandit learning model; see books \cite{LS-Book20,BubeckC-Book12}.  In this model, the seller interacts with the buyers over $T$ days. On each day, the seller proposes a parameterized mechanism to the buyer  and  sees only the revenue as  feedback. The goal is to minimize the total regret, which is the difference between the total revenue achieved by the learning algorithm and the  optimal  mechanism over $T$ days. It is stronger than   sample complexity  in two aspects: Firstly, it requires the learner to continually play near-optimal mechanisms, whereas in  sample complexity  the learner may lose a lot of the revenue while  learning. Secondly, the feedback in the bandit model on any day is limited to only the  proposed mechanism, and not every possible mechanism that could have been proposed.

The study of learning mechanisms with bandit feedback goes back to at least \cite{DBLP:conf/focs/KleinbergL03}, where the authors provide an $\OTild(T^{{2}/{3}})$ regret bound for learning  single-buyer posted pricing mechanism. Other examples of learning auctions in this model include 
bandits with knapsacks~\cite{BKS-JACM18,ISSS-JACM22}, 
  reserve price for i.i.d buyers \cite{CGM15}, and   non-anonymous reserve prices \cite{DBLP:conf/sigecom/NiazadehGWSB21}.

Despite a lot of work on learning 
SPP mechanisms and on learning auctions with bandit feedback,  learning SPP in the bandit  model was unknown. We answer this question and provide near-optimal regret bounds for SPP with $n$  buyers having regular or general distributions.


\subsection{Model and Results}
\label{sec:prelim}


In \emph{Sequential Posted Pricing} (SPP),  there are $n$ sequential buyers with independent valuation distributions $\D_1, \cdots, \D_n$. We make the standard normalization assumption that each distribution $\D_i$ is supported in $[0, 1]$. The seller plays a group of prices $(p_1, \cdots, p_n) \in [0, 1]^n$ and then the buyers arrive one-by-one with valuation $v_i \sim \D_i$. The first buyer with  $v_i \geq p_i$ takes the item and pays the price $p_i$ as the revenue.  The goal of the seller is to play prices to maximize the expected revenue.
We define $R(p_1, \cdots, p_n)$ to be the expected revenue when playing prices  $(p_1, \cdots, p_n)$, i.e.,
\[
    \textstyle R(p_1, \cdots, p_n)~:=~ \sum_{i = 1}^n p_i \cdot \Pr{i\text{ gets the item}} ~=~ \sum_{i = 1}^n p_i \cdot (1-F_i(p_i))\prod_{j = 1}^{i-1} F_j(p_j),
\]
where $F_i(x)$ is the cumulative distribution function (CDF) of distribution $\D_i$. For SPP, the optimal prices are defined as
\[
    (p^*_1, \cdots, p^*_n) ~:=~ {\argmax}_{(p_1, \cdots, p_n)} R(p_1, \cdots, p_n).
\]
These optimal prices can be easily calculated in polynomial time using a reverse dynamic program:  price $p^*_n = \arg\!\max_p p(1-F_n(p))$, and with known optimal prices $p^*_{i+1}, \cdots, p^*_n$, we can calculate \[ p^*_i ~=~ \argmax_p p(1-F_i(p)) + F_i(p) \cdot \Ex{\text{revenue from }i+1, \cdots, n}.\]

\medskip
\noindent \textbf{Bandit Sequential Posted Pricing (BSPP).} 
In many practical applications of SPP, the valuation distributions $\D_i$  are unknown, so we will study them in the bandit learning model. Consider the following toy example to motivate the model: Suppose you want  to sell an item each day for the entire next month using SPP. For simplicity, assume that each day exactly 3 bidders arrive: one each in the morning,  afternoon, and  evening. If you know the value distributions of these 3 bidders on each day, then you can find the optimal prices $(p^*_1, p^*_2, p^*_3)$ as discussed above, and play them every day to maximize the total revenue. However, if the distributions are unknown then you need to learn the optimal prices during the month. A major challenge is that you don't even get to see the true valuations of the 3 bidders who show up each day, only whether they decide to buy the item, or not, at the played price.

Formally, BSPP is a $T$ rounds/days repeated game where
on each day $t\in [T]$ the seller  proposes a group of $n$ prices $(p_1^{(t)}, \cdots, p_n^{(t)})$. The buyers then arrive one-by-one with valuations $v_i^{(t)} \sim \D_i$ and the first buyer $i$ with valuation $v_i^{(t)} \geq  p_i^{(t)}$ takes the item. The seller receives revenue  $\text{Rev}^{(t)} = p_i^{(t)}$ as the reward for this day and sees $p_i^{(t)}$ as the feedback, or equivalently sees the identity of the buyer that takes the item\footnote{By adding an arbitrarily small noise to each price, the seller can retrieve buyer's identity from the revenue.}. Note that none of the buyer valuations are ever revealed. If there is no buyer with a valuation higher than their price, the seller receives  reward  $0$. The goal of the seller is to minimize in expectation the \textit{total regret}: 
\[
    \textstyle \text{Regret} ~:=~ T \cdot R(p^*_1, \cdots, p^*_n) - \sum_{t \in [T]} \text{Rev}^{(t)}.
\]

  Our first main result gives a near-optimal regret algorithm for BSPP with regular buyers.

\begin{main}[Informal \Cref{thm:newGRMain}] \label{mainResult1}
     For bandit sequential posted pricing with $n$ buyers having regular distribution inside $[0,1]$, there exists an algorithm with $\OTild(\poly(n)\sqrt{T})$ regret. 
\end{main}

It should be noted that this is the first bandit learning algorithm for regular distributions with optimal regret bound in $T$, even for the single-buyer   or i.i.d. buyers setting.
A very recent paper of \cite{LSTW-SODA23} 
studies ``pricing query complexity'' of this problem in the special case of a single-buyer. Their  model is weaker than our bandit learning model since it only gives a \textit{final regret} bound, i.e., one might incur a large regret during the  $T$ days of learning but the final learnt prices have a low $1$-day regret. Hence, their $\Omega({1}/{\epsilon^2})$ lower bound for single-buyer query complexity   immediately implies an $\Omega(\sqrt{T})$ lower bound for BSPP,   showing  tightness of the $\sqrt{T}$ factor in \Cref{mainResult1}. However, their single-buyer pricing query complexity upper bounds do not apply to our stronger bandit learning model.  Moreover, it's unclear how to extend their techniques beyond a single-buyer.

   Our next result gives a near-optimal regret bound  for BSPP with $n$ buyers having general distributions. The authors of \cite{LSTW-SODA23} give an $\Omega({1}/{\epsilon^3})$ query complexity lower bound for single-buyer setting with general distributions, implying an $\Omega(T^{2/3})$ regret lower bound for  BSPP with general distributions. Our result achieves a tight upper bound dependency on $T$.

\begin{main}[Informal \Cref{thm:ggmain}]
     For bandit sequential posted pricing problem 
     with $n$ buyers having value distribution inside $[0,1]$,  there exists an algorithm with $\OTild(\poly(n)T^{{2}/{3}})$ regret. 
\end{main}
 
 This result generalizes the single-buyer result of \cite{DBLP:conf/focs/KleinbergL03} to $n$ sequential buyers. Interestingly, our techniques are very different from theirs  since we are in the stochastic bandit model. Although \cite{DBLP:conf/focs/KleinbergL03} show an $\OTild(T^{{2}/{3}})$ regret algorithm   even when the buyer valuations are chosen by an adversary, we observe in \Cref{sec:lowerBound} that such a result is impossible  for  multiple buyers since already for $2$ buyers with  adversarial valuations, every online algorithm incurs  $\Omega(T)$ regret.

\subsection{Techniques}

\noindent \textbf{Single Regular Buyer via Half-Concavity.} The proof of our first main result is based on a key observation that  the revenue curve of a regular distribution is ``half-concave''. Recall that a distribution is  regular if its revenue curve is concave in the quantile space. 
Simple examples show that regular distributions need not be concave in the value space (e.g., exponential distributions).
Our half-concavity  shows that regular distributions are still concave on one side of its maximum.

\begin{main}[Informal \Cref{lma:1RHalfConSpecial} and \Cref{clm:GRHalfCon}]
    Let $R(p)$ be the revenue function of a regular distribution supported on $[0,1]$ and let $p^* = \argmax_p R(p)$. Then, $R(p)$ is concave in $[0, p^*]$.
\end{main}

 Concavity is a strong property that often allows efficient learning. We show that it's sufficient to learn a single-peaked function even if it is only half-concave. The high-level intuition for learning a half-concave follows from the standard recursive algorithm for learning concave functions: first, consider a (fully) concave function $R(x)$ defined on interval $[\ell, r]$. We set $a = \frac{2\ell + r}{3}$ and $b = \frac{\ell + 2r}{3}$, and test $R(a)$ and $R(b)$ with sufficiently many samples. There can be three cases:
 
 \begin{itemize}
     \item Case 1: $R(a) < R(b)$. We can drop $[\ell, a]$ and recurse on $[a, r]$.
     \item Case 2: $R(a) > R(b)$. We can drop $[b, r]$ and recurse on $[\ell, b]$.
     \item Case 3: $R(a) \approx R(b)$. Concavity implies that $R(x)$ is nearly a constant for all $x \in [\ell, r]$, so we are done.
 \end{itemize}
 Now suppose $R(x)$ is only half-concave. We can still perform a similar algorithm: the first two cases use single-peakedness, so they remain the same. For the third case, half-concavity still guarantees that $R(x)$ can't be too high for $x \in [b, r]$, so we can drop $[b, r]$ and recurse on $[\ell, b]$.

 Our final algorithm and proofs combine the above Case 2 and 3; see details in \Cref{sec:SingleRegular}.


 \medskip
\noindent \textbf{Generalizing Single-Buyer to Sequential Buyers.} In the case of sequential buyers, our main idea is to run the single-buyer algorithm for all $n$ buyers. A major difference compared to the single buyer setting is that the revenue function for this buyer becomes $R(p) = p\cdot (1 - F(p)) + C\cdot F(p)$, where  $C\cdot F(p)$  represents the case that this buyer does not take the item and the seller receives an expected revenue $C$ from the buyers after the current buyer. Therefore, the main for the generalization is to take care of this extra $C\cdot F(p)$ term. We show that for regular distributions the half-concavity still holds for the new function $R(p)$ in the interval $[C, 1]$, along with some other nice properties in the interval $[0, C]$. For general distributions, we show that the extra $C\cdot F(p)$ term does not make a huge difference.

\subsection{Further Related Work}

\noindent \textbf{Bandit Learning and Single-Buyer Posted Pricing.} The bandit learning model is well-established and we refer the readers to these books for classical results~\cite{LS-Book20,BubeckC-Book12,CL-Book06}. The recent book of \cite{Slivkins-Book19} is a great reference for  work at the intersection of bandits and economics. In particular, the problem of learning the single-buyer posted pricing mechanism with bandit feedback has a long history, dating back to  \cite{DBLP:conf/focs/KleinbergL03}. In their paper, the authors put forth an $\OTild(T^{2/3})$ regret bound for general buyer distributions and an $\OTild(\sqrt{T})$ regret bound under a non-standard assumption that the revenue curve is single-peaked, and its second derivative at this peak is a \textit{strictly negative constant}, independent of the parameter $T$.
This result is not directly comparable to our $\OTild(\sqrt{T})$ bound for regular distributions. This is primarily because the second derivative at the maximum of the revenue curve for a regular distribution can range from negligibly small to zero. 
Additionally, \cite{cesa2019dynamic} proposes an $\OTild(\sqrt{KT})$ regret bound, assuming the buyer's value resides within a discrete set with cardinality $K$. This result, however, is also not directly comparable to ours, as regular distributions inherently exhibit continuity, contrasting with the assumption of discrete value sets.

 \medskip \noindent \textbf{Threshold Query Model.} Recently, the threshold query structure that underpins the bandit feedback model of the single-buyer posted pricing problem is extensively studied. In this model, the learner queries a threshold $\tau$ and only observes $\one[\tau<X \sim \D]$. The goal is to determine the minimum number of queries to learn a key parameter (e.g., median, mean, or CDF) of a distribution. We refer the readers to {\color{black}\cite{DBLP:conf/alt/MeisterN21, LSTW-SODA23, okoroafor2023non, DBLP:conf/sigecom/LemeSTW23}} for   learning complexity of the threshold query model.

\medskip \noindent \textbf{Sequential Posted Pricing and Prophet Inequality.} Sequential  posted pricing (SPP) and its variants have been long popular, both in theory and in practice. 
One of the first results in their theoretical study is that  posted prices obtain 78\% of the optimal revenue  for selling a single item in a large market~\cite{BH-EC08}. Subsequently, posted prices have been successfully analyzed in both single-dimensional and multi-dimensional settings; see this survey \cite{Lucier-Survey17}  and book \cite{HartlineBook}. A beautiful paper on the popularity of posted prices in practice is  \cite{einav2018auctions}.

 A closely related problem to SPP is the  Prophet Inequality (PI) problem from optimal stopping theory. In both these problems a sequence of $n$ independent buyers arrive one-by-one. However, in SPP we want to maximize the revenue and in PI we want to maximize the welfare. 
 Interestingly, for known buyer value distributions, both these problems are  equivalent~\cite{CFPV19}, but this reduction requires virtual-value distributions and doesn't work for unknown distributions.
 In the bandit model, PI and SPP behave very differently.  For instance, \cite{GKSW-arXiv22} recently obtained $\widetilde{O}(\sqrt{T})$ regret algorithm for PI with general distributions, whereas  an $\Omega(T^{2/3})$ regret lower bound exists for SPP \cite{LSTW-SODA23}. Our $\widetilde{O}(\sqrt{T})$ regret results crucially  rely on half-concavity of regular distributions. 


\medskip \noindent \textbf{Regular Distributions and Learning.} Myerson's regularity of distributions has been greatly studied since it was introduced in \cite{myerson1981optimal}. 
In particular, it is a standard assumption in learning theory for auctions \cite{DRY-GEB15,DBLP:conf/stoc/ColeR14,HMR-SICOMP18,DBLP:conf/stoc/Devanur0P16,DBLP:conf/sigecom/RoughgardenS16,GHZ-STOC19}.
For basic properties of regular distribution, and its important subclass of Monotone Hazard Rate  distributions, see the books \cite{HartlineBook,roughgarden2017twenty}. A recent paper of \cite{LSTW-SODA23}, which studies pricing query complexity of the single-buyer single-item problem, proves an interesting ``relative flatness'' property of regular distributions in the value space. Roughly it says that  if the revenue curve has nearly 
the same value at 4 different equidistant points then there cannot be a high revenue point in between. 
Comparing it to our idea of half-concavity,  the two properties are in general incomparable as no one implies the other. However, we found our    half-concavity to be  more intuitive and 
convenient to work with, since proofs based on relative flatness often lead to a long case analysis.

\section{A Single Buyer with Regular Distribution}
\label{sec:SingleRegular}

In this section, we present an $\OTild(\sqrt{T})$ regret algorithm for BSPP in the special case of a single buyer, where we call the problem Bandit Posted Pricing. Moreover, we focus on the case when the buyer's value distribution is \textit{regular}, which is a standard assumption in economics \cite{myerson1981optimal}.

\begin{Definition}[Regularity]
    Distribution $\D$  with CDF $F(x)$ and PDF $f(x)$  is called \emph{regular} when 
    $\phi(v) ~:=~ v - \frac{1 - F(v)}{f(v)}$
    is monotone non-decreasing, or equivalently, its \emph{revenue curve} $R_q(q)$ is concave in the quantile space,  where
    \[
    R_q(q) ~:=~ q \cdot F^{-1}(1 - q).
    \]
\end{Definition}

In the \emph{Bandit Posted Pricing} problem  there is a single regular buyer with an unknown regular value distribution $\D$  having support $[0,1]$ and CDF $F(x)$. Our goal is to approach the optimal price $p^*$ that maximizes the revenue function $R(p):= p\cdot (1 - F(p))$ in the the following bandit learning game over $T$ days: On day $t \in \{1,\ldots, T\}$, we post a price $p_t \in [0,1]$ and the environment draws $v_t \sim \D$. Our reward is $p_t \cdot \textbf{1}_{v_t \geq p_t}$, and the goal is to minimize in expectation the total \emph{regret}:
$ 
T \cdot R(p^*) - \sum_{t \in [T]} p_t \cdot \textbf{1}_{v_t \geq p_t}.
$
Our $\OTild(\sqrt{T})$ regret algorithm for this problem uses ``half-concavity''.

\subsection{Half-Concavity}

Our $\OTild(\sqrt{T})$ regret algorithm works beyond regular distributions, for the class of \emph{half-concave} distributions. We first give the definition of half-concavity:

\begin{Definition}[Half-Concavity]
\label{def:HalfConcSpecial}
A function $R(x): \real \rightarrow \real$  is \emph{half-concave} in interval $[\lcb, \ucb] \subseteq [0,1]$ if the following conditions hold:
\begin{enumerate}[label=(\roman*)]
    \item $R(x)$ is single-peaked in $[\lcb, \ucb]$, i.e., $\exists p^* \in [\lcb, \ucb]$ satisfying that $R(x)$ is non-decreasing in $[\lcb, p^*]$ and non-increasing in $[p^*, \ucb]$.
    \item $R(x)$ is 1-Lipschitz in $[\lcb, p^*]$. 
    \item $R(x)$ is concave   in $[\lcb, p^*]$.
\end{enumerate}
\end{Definition}

\begin{figure}[H]
    \centering
    \includegraphics[width=2.5in]{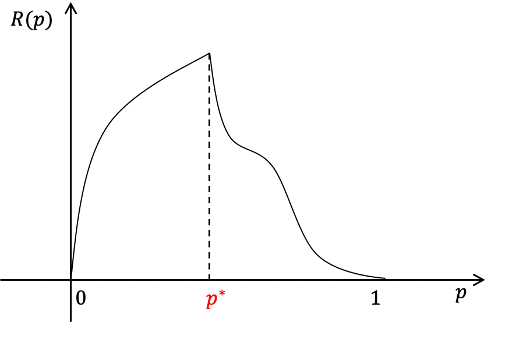}
    \caption{An example of a half-concave function, where the function is Lipschitz and concave before $p^*$.}
    \label{fig:regular}
\end{figure}

The concept of half-concavity implies that a function has nice properties properties on one side of the maximum (see \Cref{fig:regular}), rendering it learnable from this advantageous side. In the definition of half-concavity, both Lipschitzness and some kind of concavity are vital to ensure learnability. Without Lipschitzness, a function becomes unlearnable when $p^*$ is exceedingly close to $0$ due to the inability of accurately detecting the value of $p^*$. On the other hand, without concavity, simple examples show that a regret of $\Omega(T^{2/3})$ is inevitable. For instance, a revenue function characterized by multiple peaks where we can only ascertain the value of $p^*$ when we examine each peak with a sufficient number of samples. {We note that related works, such as \cite{DBLP:conf/icml/CombesP14, DBLP:journals/corr/MagureanuCP14}, address bandit problems with assumptions of unimodularity or Lipschitzness. However, these works only provide instance dependent regret bounds, which are at least $\omega(\sqrt{T})$ in the worst case, as they lack at least one of the key assumptions of Lipschitzness and concavity.}

Given both Lipschitzness  and (half) concavity, the following theorem gives $\OTild(\sqrt{T})$ regret.

\begin{Theorem}
    \label{thm:srmain}
    For Bandit Posted Pricing  with a single buyer,
    if the revenue function $R(p) := p\cdot (1 - F(p))$ is half-concave, then there exists an algorithm with $O(\sqrt{T} \log T)$ regret.
\end{Theorem}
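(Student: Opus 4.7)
The plan is a phase-based noisy ternary search on $[0,1]$ that directly implements the high-level intuition from the introduction. I will maintain a shrinking interval $[\ell_k, r_k]$ starting from $[0,1]$, and in each of $K = \Theta(\log T)$ phases form the trisection points $a_k = \ell_k + (r_k - \ell_k)/3$ and $b_k = \ell_k + 2(r_k - \ell_k)/3$. In phase $k$, I post each of $a_k, b_k$ for $\tau$ rounds (for a suitable $\tau = \OTild(\sqrt{T})$) and form empirical revenue estimates $\hat R(a_k), \hat R(b_k)$. Letting $\epsilon = \Theta(\sqrt{\log T/\tau})$ be the Hoeffding confidence radius, I set $[\ell_{k+1}, r_{k+1}] = [a_k, r_k]$ when $\hat R(a_k) + 2\epsilon < \hat R(b_k)$ (drop-left branch), and $[\ell_{k+1}, r_{k+1}] = [\ell_k, b_k]$ otherwise (drop-right branch). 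After all $K$ phases, the algorithm plays the empirically best price among the $2K$ tested points for the remaining rounds.

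The heart of the argument is a per-phase invariant: with high probability over Hoeffding concentration, the kept interval after phase $k$ still contains a tested price with revenue within $O(\epsilon)$ of $\max_{x \in [\ell_k, r_k]} R(x)$. I will establish this by the three-case analysis from the introduction. \textbf{(i)} If drop-left triggers, Hoeffding yields $R(a_k) < R(b_k)$; single-peakedness (Definition~\ref{def:HalfConcSpecial}(i)) then forces $\argmax_{[\ell_k, r_k]} R \geq a_k$, so the argmax survives. \textbf{(ii)} If drop-right triggers and $\argmax_{[\ell_k,r_k]} R \leq b_k$, the argmax survives trivially. \textbf{(iii)} The subtle case is drop-right with $p^{\dagger} := \argmax_{[\ell_k,r_k]} R > b_k$: then both $a_k, b_k$ lie in the concave region $[\ell, p^*]$, and the decision rule plus Hoeffding give $R(b_k) - R(a_k) \leq 4\epsilon$. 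Concavity on $[\ell, p^*]$ (Definition~\ref{def:HalfConcSpecial}(iii)) forces the slope of $R$ on $[b_k, p^{\dagger}]$ to be at most its slope on $[a_k, b_k]$, whence
\[
R(p^{\dagger}) - R(b_k) \;\leq\; \frac{R(b_k) - R(a_k)}{b_k - a_k}\cdot (p^{\dagger} - b_k) \;\leq\; 4\epsilon,
\]
using $p^{\dagger} - b_k \leq r_k - b_k = b_k - a_k$. Hence $b_k \in [\ell_{k+1}, r_{k+1}]$ witnesses a near-optimal revenue.

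Given the invariant, the regret decomposes into exploration (the $2K\tau$ rounds spent probing $a_k, b_k$) and exploitation (the remaining rounds). A key sharpening in the exploration accounting is to use the $1$-Lipschitz property (Definition~\ref{def:HalfConcSpecial}(ii)): while the interval still contains $p^*$, the per-round regret at $a_k$ or $b_k$ is at most $L_k = (2/3)^k$, rather than the trivial bound of $1$; in post-drop phases where $p^*$ has been (harmlessly) dropped, monotonicity on the kept interval again bounds the per-round regret by the earlier $\epsilon$. For exploitation, the invariant plus Hoeffding on the tested points shows that the empirically best price has revenue within $O(\epsilon) + L_K$ of $R(p^*)$, where $L_K$ comes from the Lipschitz bound applied to the final interval. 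Choosing $K = \Theta(\log T)$ so that $L_K = O(1/\sqrt{T})$ and $\tau = \OTild(\sqrt{T})$ to balance the exploration cost $K\tau$ against the per-round exploitation loss, I obtain total regret $O(\sqrt{T}\log T)$. The Hoeffding failure probability is handled by a union bound over the $O(\log T)$ phases at per-bound failure $O(1/T^2)$, contributing $O(1)$ to expected regret.

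The main obstacle is case \textbf{(iii)} above, where the concavity component of half-concavity is essential: without it, the slope of $R$ past $b_k$ would be unconstrained and dropping a right-side containing $p^*$ could incur arbitrarily large loss. A secondary technical challenge is the careful calibration of $\tau$ together with the Lipschitz-based per-phase regret accounting, since a naive bound that charges $\tau$ per phase (rather than $\tau \cdot L_k$) would only yield $\OTild(T^{2/3})$ regret as in \cite{DBLP:conf/focs/KleinbergL03}; exploiting the shrinking $L_k$ together with the $O(\epsilon)$ bound from case (iii) is exactly what makes the $\OTild(\sqrt{T})$ rate achievable under half-concavity.
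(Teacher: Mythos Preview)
Your plan has a fundamental parameter-balancing gap that prevents it from reaching $O(\sqrt{T}\log T)$. With a \emph{single fixed} sample size $\tau$ (and hence a single confidence radius $\epsilon \asymp \sqrt{\log T/\tau}$) across all $K$ phases, the only control you have on the exploited price's quality in case~(iii) is the concavity bound $R(p^\dagger)-R(b_k)\le 4\epsilon$. This is tight: for a suitable regular $R$ the case-(iii) witness $b_{k_0}$ can genuinely be $\Theta(\epsilon)$ below $R(p^*)$, and no later phase improves on it. Hence exploitation regret is $\Omega(T\epsilon)=\Omega\big(T\sqrt{\log T/\tau}\big)$, while exploration is at least $\Omega(\tau)$; balancing gives $\Omega(T^{2/3})$, not $O(\sqrt{T})$. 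Your final paragraph's choice $\tau=\OTild(\sqrt{T})$ makes $\epsilon=\Theta(T^{-1/4}\sqrt{\log T})$, so exploitation alone is $\Theta(T^{3/4})$.

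There is also a gap in the Lipschitz ``sharpening'' of exploration. Half-concavity gives $1$-Lipschitzness only on $[0,p^*]$, so when $a_k\le p^*<b_k$ you cannot bound $R(p^*)-R(b_k)$ by $L_k$. Concretely, for a point mass at $p^*=0.9$ (so $R(p)=p$ on $[0,0.9]$ and $R(p)=0$ for $p>0.9$), after three drop-lefts you reach $b_4=73/81\approx 0.901$ with $R(b_4)=0$: the per-round regret at $b_4$ is $0.9$, while $L_4=8/27\approx 0.296$. So the telescoping $\sum_k \tau L_k=O(\tau)$ does not hold, and in any case (as above) would still only give $T^{2/3}$.

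The paper avoids both issues by (a) using a \emph{geometrically decreasing} accuracy $\epsilon_i$ with $O(\epsilon_i^{-2}\log^2 T)$ rounds in phase~$i$, and (b) adding a second step after the ternary search: two binary searches that move \emph{both} endpoints inward until every price in the new interval $[\lcb',\ucb'-T^{-100}]$ satisfies $R(x)\ge R(p^*)-\epsilon_i$ (\Cref{lma:Step2}). Step~(b) is exactly what your scheme is missing: it certifies low regret on the \emph{right} side of $p^*$ where Lipschitzness fails, so that every price probed in phase $i{+}1$ already has per-round regret $\le \epsilon_i$. The total then becomes $\sum_i \epsilon_{i-1}\cdot O(\epsilon_i^{-2}\log^2 T)=O(\epsilon_K^{-1}\log^2 T)=O(\sqrt{T}\log T)$. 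The varying $\epsilon_i$ is essential---a single fixed accuracy cannot simultaneously make the case-(iii) loss small and keep the sample budget within $T$.
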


The main application of  \Cref{thm:srmain} is for regular distributions due to the following lemma.

\begin{Lemma}
\label{lma:1RHalfConSpecial}
Let $F(x)$ be the CDF of a regular distribution with support in $[0, 1]$, and 
$R(p) := p \cdot (1 - F(p)) $
be the expected revenue on playing price $p$. Then, $R(p)$ is half-concave in $[0, 1]$.
\end{Lemma}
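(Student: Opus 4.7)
The plan is to verify each of the three half-concavity conditions by exploiting the identity
\[
    R'(p) ~=~ (1-F(p)) - pf(p) ~=~ -f(p)\,\phi(p),
\]
where $\phi(p) = p - (1-F(p))/f(p)$ is the Myerson virtual value. I will assume for exposition that $F$ is differentiable with density $f$; the general case can be reduced to this by a standard mollification. Regularity gives that $\phi$ is non-decreasing, so there is a threshold $p^*$ (unique up to a flat region) with $\phi(p) \leq 0$ on $[0, p^*]$ and $\phi(p) \geq 0$ on $[p^*, 1]$. Combined with $f \geq 0$, the identity above immediately yields $R'(p) \geq 0$ on $[0, p^*]$ and $R'(p) \leq 0$ on $[p^*, 1]$, giving condition (i) with peak $p^*$. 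For $1$-Lipschitzness on $[0, p^*]$, note $0 \leq R'(p) = 1 - F(p) - pf(p) \leq 1 - F(p) \leq 1$, so condition (ii) is immediate.

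The crux is condition (iii), concavity on $[0, p^*]$. This reduces via $R''(p) = -2f(p) - p f'(p)$ to proving $2f(p) + p f'(p) \geq 0$. Expanding the regularity condition $\phi'(p) \geq 0$ yields the weaker-looking inequality $2f(p)^2 + (1-F(p)) f'(p) \geq 0$. If $f'(p) \geq 0$ the target inequality is trivial. Otherwise, regularity rearranges to $|f'(p)| \leq 2f(p)^2/(1-F(p))$, and here I would invoke the defining property of $p^*$: on $[0, p^*]$, $\phi(p) \leq 0$ means $p f(p) \leq 1 - F(p)$, i.e., $p/(1-F(p)) \leq 1/f(p)$. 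Multiplying the regularity bound by $p$ and substituting this gives $p |f'(p)| \leq 2 f(p)$, which is exactly $p f'(p) \geq -2 f(p)$.

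I expect the main obstacle to be this last substitution step, or more precisely the conceptual insight that it encodes: regularity alone is too weak to force concavity of $R$ globally, since the density can decay rapidly enough past the peak to break concavity there (and this is precisely why we only get ``half''-concavity). But on the \emph{left} of the peak, the slack supplied by $\phi(p) \leq 0$, i.e. $p f(p) \leq 1 - F(p)$, is exactly what is needed to absorb the destabilizing $-p f'$ term. Boundary issues such as $f(p) = 0$, atoms of $F$, or intervals on which $F$ is flat can be handled cleanly by restricting to the closure of $\{f > 0\}$ and appealing to continuity of $R$ on $[0,1]$; alternatively one can work directly from the equivalent definition that the quantile-space revenue curve $R_q(q) = q F^{-1}(1-q)$ is concave, and transfer concavity via a change of variables $q = 1 - F(p)$ on the range $q \in [1 - F(p^*), 1]$.
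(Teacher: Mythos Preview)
Your proposal is correct and follows essentially the same approach as the paper's primary proof: the paper also computes $R''(p) = -2f(p) - pf'(p)$, case-splits on the sign of $f'(p)$, and in the $f'(p) \leq 0$ case combines the regularity inequality $2f^2 + (1-F)f' \geq 0$ with $R'(p) = 1 - F(p) - pf(p) \geq 0$ on $[0,p^*]$ (your $pf \leq 1-F$) to conclude. The paper also supplies the quantile-space alternative you allude to at the end, precisely to bypass the differentiability assumption on $f$.
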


Before proving this lemma, we observe the following immediate corollary.

\begin{Corollary}[Corollary of \Cref{thm:srmain} and \Cref{lma:1RHalfConSpecial}]
\label{thm:sr-regmain}
    For Bandit Posted Pricing  with a single buyer having regular valuation distribution, there exists an algorithm with $O(\sqrt{T} \log T)$ regret.
\end{Corollary}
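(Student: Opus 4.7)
The plan is to observe that the corollary follows by directly chaining the two results stated immediately above it, so the work reduces to checking that the hypotheses of \Cref{thm:srmain} are met whenever the buyer's distribution is regular. Concretely, I would fix an arbitrary regular distribution $\D$ supported on $[0,1]$ with CDF $F$, form the revenue function $R(p) = p(1-F(p))$, and then invoke \Cref{lma:1RHalfConSpecial} to conclude that $R$ is half-concave on $[0,1]$ in the sense of \Cref{def:HalfConcSpecial} (single-peaked at some $p^*$, $1$-Lipschitz on $[0,p^*]$, and concave on $[0,p^*]$).

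Once half-concavity of $R$ is established, the hypotheses of \Cref{thm:srmain} hold verbatim for the Bandit Posted Pricing instance with buyer distribution $\D$. Applying \Cref{thm:srmain} therefore produces an algorithm whose expected regret after $T$ rounds is $O(\sqrt{T}\log T)$ against the benchmark price $p^* = \arg\max_{p\in[0,1]} R(p)$, which is precisely the benchmark defining regret in the single-buyer BSPP problem. Combining the two bounds yields the stated guarantee and completes the proof.

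The proof is thus essentially a two-line citation argument: there are no additional case analyses, no reductions between different benchmarks, and no new probabilistic estimates needed beyond those already packaged inside \Cref{thm:srmain}. The only minor step worth writing out explicitly is confirming that the interval of half-concavity $[0,1]$ promised by \Cref{lma:1RHalfConSpecial} matches the price domain $[0,1]$ assumed by \Cref{thm:srmain}, which is immediate from the normalization that $\D$ is supported in $[0,1]$ (and hence $p^*\in[0,1]$, so no prices outside $[0,1]$ need ever be posted). There is no substantive obstacle; the heavy lifting lives entirely in the proofs of \Cref{thm:srmain} and \Cref{lma:1RHalfConSpecial}, which this corollary simply assembles.
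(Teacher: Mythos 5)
Your proposal is correct and matches the paper exactly: the corollary is stated as an immediate consequence of \Cref{thm:srmain} and \Cref{lma:1RHalfConSpecial}, obtained by precisely the two-step chaining you describe, and the paper offers no further argument. Nothing is missing.
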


\begin{proof}[Proof of \Cref{lma:1RHalfConSpecial}] 
We first provide a proof assuming the PDF $f(p)$ is non-zero and differentiable.
We will prove the three properties of half-concavity one-by-one.

\noindent (i) \emph{Single-peakedness}: By definition, revenue function  of a regular distribution  is  concave in the quantile space. Hence, it's single-peaked in the quantile space. Suppose $q^*$ is the quantile that achieves the maximum, then choosing price  $p^* = F^{-1}(1-q^*)$ proves single-peakedness of $R(p)$.

\medskip

\noindent (ii) \emph{Lipschitzness}: For $0 \leq a \leq b \leq p^*$, we have 
\[
   0~\leq~  R(b) - R(a) ~=~ (b-a) - F(b)\cdot b + F(a) \cdot a ~ \leq~ b - a,
\]
where the first inequality follows from single-peakedness and that $a\leq b \leq p^*$, and the second
inequality uses $F(b) \geq  F(a) \geq 0$ and that $b \geq a \geq 0$.

\medskip
\noindent (iii) \emph{Half-concavity}:  We first provide a direct proof via the monotonicity of virtual valuation $\phi$.

Recall that $\phi(p) := p - \frac{1 - F(p)}{f(p)}$ is non-decreasing for regular distribution, so we have
\[
\phi'(p) ~=~ 1 - \frac{-f^2(p) - (1-F(p)) \cdot f'(p)}{f^2(p)} ~\geq~ 0. 
\]
Rearranging the above inequality, 
\begin{align}
\label{eq:hc1}
    2f^2(p) + f'(p) \cdot (1 - F(p)) ~\geq~ 0.
\end{align}

Consider the derivative and the second order derivative of $R(p) = p \cdot (1 - F(p))$, i.e.,
\begin{align*}
    R'(p) ~=~ 1 - F(p) - p \cdot f(p) \qquad \text{and} \qquad 
    R''(p) ~=~ -2f(p) - p \cdot f'(p).
\end{align*}

For half-concavity, we prove that $R''(p) \leq 0$ for every $p \in [0, p^*]$. Consider  two cases:
\begin{itemize}
    \item Case 1: $f'(p) > 0$. In this case, $R''(p) = -2f(p) - p \cdot f'(p) \leq 0$ immedately holds.
    \item Case 2: $f'(p) \leq 0$. In this case, we have
    \begin{align*}
        R''(p) ~&=~ -2f(p) - p \cdot f'(p)  \\
        ~&=~ \frac{1}{f(p)} \cdot \left(-2f^2(p) - f(p) \cdot p \cdot f'(p)\right) \\
        ~&\leq~ \frac{1}{f(p)} \cdot \left(f'(p) \cdot (1 - F(p)) - f(p) \cdot p \cdot f'(p)\right) \\
        ~&=~ \frac{f'(p)}{f(p)} \cdot \big(1 - F(p) - p \cdot f(p)\big) ~\leq ~0,
    \end{align*}
    where the first inequality is from \eqref{eq:hc1} and the second inequality uses $f'(p) \leq 0$ along with the fact that $R'(p) \geq 0$ when $p \in [0, p^*]$. 
\end{itemize}

The above proof requires the PDF $f(p)$ to be non-zero and differentiable. Below we provide an alternate proof via the concavity in the quantile space, which bypasses this assumption for $f(p)$. We keep both the proofs here since the reader may find one proof more instructive than the other.

\noindent \textbf{An alternate proof for half-concavity.} We prove half-concavity by contradiction. Assume  there exist $0\leq a< b < c \leq p^*$ satisfying  $c = b+t(b-a)$   and   $R(c) > R(b) + t\cdot \big( R(b) - R(a) \big)$. 

Define $q(x):=1 - F(x)$ and $\overline{q} := q(b) + t\cdot \big(q(b) - q(a) \big)$.
We first show that $\overline{q} >  q(c)$: 
Regularity implies that the revenue function is concave in the quantile space. Hence, for all $q' \geq \overline{q}$,  
\[ R(q^{-1}(q')) ~\leq~ R(b) + \frac{q(b) - q'}{q(a) - q(b)}\cdot (R(b) - R(a)) ~\leq~ R(b) + t\cdot (R(b) - R(a)) ~<~ R(c). \] 
For every $q' \geq \bar q$, we have $q' \neq q(c)$. So there must be $q(c) < \bar q$.
This gives the desired contradiction: 
\begin{align*}
    R(c) &~>~ R(b) + t\cdot (R(b) - R(a)) \\
    & ~=~ b\cdot q(b) + t\cdot \big(b \cdot q(b) - a\cdot q(a) \big) \\
    & ~\geq~ b\cdot q(b) + t\cdot \big(b \cdot q(b) - a\cdot q(a) \big) - t(t+1)(b-a)(q(a)-q(b)) \\
    &~=~ \big((t+1)b-ta \big) \cdot \big((t+1)q(b)-tq(a) \big)   \\
    &~=~ c\cdot \overline{q} ~\geq~ c \cdot q(c)   ~=~ R(c).   \qedhere
\end{align*}    
\end{proof}


\subsection{$\OTild(\sqrt{T})$ Regret  for Half-Concave Functions: Proof of \Cref{thm:srmain}}
\label{sec:1RMainAlg}

Our algorithm relies on a subroutine that takes a parameter $\epsilon$ and a confidence interval $[\lcb, \ucb] \ni p^*$ as input, and then after $\OTild(\epsilon^{-2})$ rounds it generates with high probability a new confidence interval $[\lcb', \ucb']$ that contains the optimal price $p^*$ and every price in $[\lcb', \ucb']$ has $1$-day regret bounded by $\epsilon$. Our final algorithm runs in $O(\log T)$ phases, where in each phase we call the sub-routine with the parameter $\epsilon$, and $\epsilon$ is halved in the next phase. The  sub-routine is captured by the
following lemma, which is the heart of the proof and will be proved in \Cref{sec:1RSubR} using half-concavity of $R(p)$. 

\begin{Lemma}
\label{thm:1RMain}
Let $R(p)$ be a half-concave revenue function defined in $[0, 1]$. Given  $\epsilon \geq {1}/{\sqrt{T}}$ and $[\lcb, \ucb] \ni p^*$, there exists an algorithm that tests $O\big(\epsilon^{-2}{\log^2 T} \big)$ rounds with prices inside $[\lcb, \ucb-\frac{1}{T^{100}}]$ and outputs an interval $[\lcb', \ucb'] \subseteq [\lcb, \ucb]$ satisfying  with probability $1 - T^{-5}$ that 
\begin{itemize}
    \item $p^* \in [\lcb', \ucb']$
    \item   $R(x) \geq R(p^*) - \epsilon$ for any $x \in [\lcb', \ucb'- \frac{1}{T^{100}}]$.  
\end{itemize}
\end{Lemma}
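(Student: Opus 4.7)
The plan is to implement a ternary-search subroutine. Initialize $[\ell_0, r_0]=[\ell,r]$, and for $K=\Theta(\log T)$ iterations set $a_k = (2\ell_k+r_k)/3$ and $b_k=(\ell_k+2r_k)/3$, posting each of $a_k$ and $b_k$ for $N = \Theta(\epsilon^{-2}\log T)$ rounds to obtain empirical revenues $\hat R(a_k), \hat R(b_k)$. Following the paper's sketch, I would combine Cases~2 and~3: if $\hat R(b_k) > \hat R(a_k) + c\epsilon$ for a small constant $c$, recurse on $[a_k, r_k]$ (Case~1); otherwise recurse on $[\ell_k, b_k]$. Output $[\ell', r']=[\ell_K, r_K]$. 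The total round count is $2NK = O(\epsilon^{-2}\log^2 T)$, matching the lemma, and every posted price lies in the interior of $[\ell, r]$ by construction.

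The analysis proceeds in three parts. First, Hoeffding's inequality with $N = \Theta(\epsilon^{-2}\log T)$ makes each $|\hat R - R|$ at most $\epsilon/10$ with probability at least $1 - T^{-10}$ per test; a union bound over all $2K$ tests yields the $1-T^{-5}$ success probability, and I would condition on this event throughout. Second, I would induct on $k$ to maintain $p^* \in [\ell_k, r_k]$: Case~1 follows from concentration plus single-peakedness (since $R(b_k) > R(a_k)$ forces $p^* > a_k$); merged Case~2/3 gives $R(b_k) \leq R(a_k) + O(\epsilon)$, where I invoke half-concavity on $[\ell, p^*]$ to note that the chord slope on $[a_k, b_k]$ upper bounds the chord slope on $[b_k, p^*]$ by concavity. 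With $r_k - b_k = b_k - a_k$, this caps $R(p^*) - R(a_k)$ by $O(\epsilon)$ even in the difficult scenario $p^* > b_k$. Third, each iteration shrinks the width by a factor of $2/3$, so with $K = \Theta(\log T)$ and a sufficiently large constant we obtain $r_K - \ell_K \leq 1/T^{100}$.

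For the $\epsilon$-optimality on $[\ell', r' - 1/T^{100}]$: since the final width is at most $1/T^{100}$, this set is either empty or the singleton $\{\ell'\}$. In the non-empty case, Lipschitzness of $R$ on $[\ell, p^*]$ directly yields $R(p^*) - R(\ell') \leq p^* - \ell' \leq r' - \ell' \leq 1/T^{100} \leq \epsilon$, using the hypothesis $\epsilon \geq 1/\sqrt{T}$.

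The hard part, I expect, is formally establishing $p^* \in [\ell_k, r_k]$ strictly through merged Case~2/3. When the test is genuinely ambiguous and $p^*$ lies just above $b_k$ on a near-flat portion of the revenue curve, recursing on $[\ell_k, b_k]$ literally drops $p^*$. Half-concavity only bounds the revenue loss to $O(\epsilon)$ per dropped step, not zero. Handling this cleanly probably requires either an amortized analysis showing the total leakage across $O(\log T)$ iterations remains $O(\epsilon)$ (not $O(\epsilon \log T)$), a slight algorithmic modification such as retaining a $1/T^{100}$-sized buffer on the right in Case~2/3 so that $p^*$ is always captured to within that tolerance, or reinterpreting $p^*$ as the restricted argmax on the shrinking interval with $O(\epsilon)$-slack that is then shown equivalent to the global $p^*$ by half-concavity.
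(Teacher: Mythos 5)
There is a genuine gap, and it is exactly where you suspect --- but it is worse than a missing argument: the algorithm you propose does not satisfy the lemma. Running the ternary search all the way down to width $T^{-100}$ and outputting the final interval breaks \emph{both} bullets. For containment, once the merged Case~2/3 fires with $p^*$ just above $b_k$, the peak is discarded, and no $T^{-100}$-sized buffer can repair this: on a nearly flat curve $p^*$ may sit at distance $\Omega(1)$ to the right of the cut. Your own $\epsilon$-optimality argument then collapses, because the chain $R(p^*)-R(\ell')\le p^*-\ell'\le r'-\ell'$ silently assumes $p^*\le r'$, i.e.\ the very containment that is in question. Moreover, the hoped-for amortization of the leakage to $O(\epsilon)$ is false for this algorithm: take a revenue curve that behaves like $R(x)=R(\ell)+c\,\epsilon\log(x-\ell)$ on $[\ell,p^*]$ (capped so that the slope never exceeds $1$, hence half-concave), with $p^*$ near $r$. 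Every tested gap $R(b_k)-R(a_k)\approx c\epsilon\log 2$ stays below the threshold, so Case~2/3 fires at every scale, $p^*$ is dropped in the first iteration, and the final interval sits next to $\ell$ with $R(p^*)-R(\ell)=\Theta(\epsilon\log(1/\epsilon))=\Theta(\epsilon\log T)$ for $\epsilon\approx 1/\sqrt{T}$. So the suboptimality of the output interval genuinely accumulates across the $\Theta(\log T)$ levels; the per-step $O(\epsilon)$ bound from concavity does not telescope into $O(\epsilon)$. Finally, note that the containment bullet is not cosmetic: the outer algorithm re-invokes this lemma with $p^*\in[\ell,r]$ as a precondition, so an output interval that has lost $p^*$ poisons all later phases.

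The paper avoids both problems by decoupling the two goals. In Step~1 (\Cref{lma:Step1}) the ternary search is stopped at width $\Theta(\epsilon)$, and crucially the output is not the surviving interval but the \emph{best tested price} $\hat p$; the analysis only looks at the single iteration in which the restricted argmax is dropped, where concavity gives $R(p^*)-R(b_i)\le R(b_i)-R(a_i)=O(\epsilon)$, and $b_i$ is a tested price, so no accumulation occurs. In Step~2 (\Cref{lma:Step2}) the confidence interval is built by two separate binary searches that use $R(\hat p)$ as a benchmark: by single-peakedness, any price whose estimated revenue is significantly below $\hat R(\hat p)$ can be safely excluded on the correct side of the peak, which is what guarantees $p^*\in[\ell',r']$, while Lipschitzness to the left of $p^*$ (and only there --- hence the $T^{-100}$ tail on the right) gives the $\epsilon$-optimality of $[\ell',r'-T^{-100}]$. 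If you want to salvage a one-pass scheme you would have to add something playing the role of this second phase; as written, the proposal cannot be completed into a proof of the stated lemma.
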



\begin{algorithm}[tbh]
\caption{$O(\sqrt{T} \log T)$ Regret Algorithm}
\label{alg:sr-gen}
\KwIn{Hidden Revenue Function $R(p)$, time horizon $T$}
Let $\epsilon_1 \gets 1, [\lcb_1, \ucb_1] \gets [0, 1]$, and $i \gets 1$. \\
\While{$\epsilon_i > \frac{\log T}{\sqrt{T}}$}
{
    Run the algorithm described in \Cref{thm:1RMain} with $\epsilon_i, [\lcb_i, \ucb_i]$ as input, and get $[\lcb', \ucb']$. \\
    Let $[\lcb_{i+1}, \ucb_{i+1}] \gets [\lcb', \ucb']$ and $\epsilon_{i+1} \gets \frac{1}{2}\epsilon_i$. \\
    Let $i \gets i+1$.
}
Finish remaining rounds with any price in $[\lcb_i, \ucb_i - \se]$.
\end{algorithm}


Given \Cref{thm:1RMain},   our  algorithm for \Cref{thm:srmain} is  natural and has a simple proof.
 
\begin{proof}[Proof of \Cref{thm:srmain}]
    We claim that with probability at least $1 - T^{-4}$, the regret of \Cref{alg:sr-gen} is $O(\sqrt{T} \log T)$. \Cref{alg:sr-gen} uses \Cref{thm:1RMain} for multiple times with a halving error parameter. Assume \Cref{alg:sr-gen} ends with $i = k +1$, i.e., the while loop runs $k$ times. Since we obtain $\epsilon_{i+1} = \epsilon_i /2$ and $\epsilon_k > \frac{\log T}{\sqrt{T}}$ holds, there must be $k = O(\log T)$.
    
    For $i \in [k]$, let $\alg_i$ represent the corresponding algorithm we call when using \Cref{thm:1RMain} with $\epsilon_i, [\lcb_i, \ucb_i]$ as the input. To use \Cref{thm:1RMain} we need to verify that $p^* \in [\lcb_i, \ucb_i]$ for all $i \in [k]$. The condition $p^* \in [\lcb_1, \ucb_1] = [0, 1]$ clearly holds. For $i =2, 3, \cdots, k$, the condition $p^* \in [\lcb_i, \ucb_i]$ is guaranteed by \Cref{thm:1RMain} when calling $\alg_{i-1}$. The failing probability of \Cref{thm:1RMain} is $T^{-5}$. By the union bound, with probability $1 - k\cdot T^{-5} > 1 - T^{-4}$, \Cref{thm:1RMain} always holds in \Cref{alg:sr-gen}.

    Now we prove the regret of \Cref{alg:sr-gen}. \Cref{thm:1RMain}  guarantees that $R(p^*) - R(p) \leq \epsilon_i$ holds for all $p \in [\lcb_{i+1}, \ucb_{i+1} - \se]$, while when calling $\alg_{i+1}$, only the prices in $[\lcb_{i+1}, \ucb_{i+1} - \se]$ is tested. Therefore, the total regret of \Cref{alg:sr-gen} can be bounded by
    \[ \textstyle
    1\cdot O\big(\epsilon^{-2}_1\log^2 T\big) ~+~ \sum_{i = 2}^k \epsilon_{i-1} \cdot O(\epsilon^{-2}_i\log^2 T) ~+~ T \cdot \epsilon_k  ~=~ O(\sqrt{T} \log T).
    \]
    In this expression, the first term is the regret of $\alg_1$. The second term is the sum of the regret from $\alg_2, \cdots, \alg_k$. The third term is the regret for the remaining rounds.

    Finally, we also need to check that \Cref{alg:sr-gen} uses no more than $T$ rounds. \Cref{thm:1RMain} suggests that $\alg_i$ runs $O({\epsilon^{-2}_i}{\log^2 T})$ rounds. So the total number of rounds in the while loop is
    $
    \sum_{i \in [k]} O({\epsilon^{-2}_i}{\log^2 T}) ~=~ O(T).
    $
    Therefore, \Cref{alg:sr-gen} is feasible.
\end{proof}

\subsection{Main Sub-Routine: Proof of \Cref{thm:1RMain} via Half-Concavity}
\label{sec:1RSubR}

In this section, we present the main sub-routine of our bandit algorithm. It utilizes half-concavity to generate a confidence interval for the optimal price $p^*$ in $\OTild(\frac{1}{\epsilon^2})$ rounds while ensuring that every price in the interval is $\epsilon$-optimal. 


\paragraph{Algorithm Overview.} The algorithm contains two major steps (also see \Cref{fig:sr-twosteps}):
\begin{itemize}
    \item Step 1: Find an approximation $\hat p$ such that $R(p^*) - R(\hat p) \leq \frac{\epsilon}{2}$ via half-concavity. This step gives a sufficiently precise price estimate approximating $p^*$.
    
    \item Step 2: Given $\hat p$, construct a new confidence interval $[\lcb', \ucb']$ via binary search. As $R(p)$ exhibits a single-peak property, we can use $R(\hat p)$ as a benchmark and implement a standard binary search algorithm to identify the leftmost point $\lcb'$ and the rightmost point that are $\frac{\epsilon}{2}$-close to $R(\hat p)$. These two endpoints describe the desired new confidence interval. 
\end{itemize}
We note an important detail in Step 2: the Lipschitzness assumption in the range $[p^*, \ucb]$ is absent. Consequently, the binary search algorithm fails to provide a satisfactory loss guarantee for a small tail of the new confidence interval. This introduces the non-standard error factor $\se$ as discussed in \Cref{thm:1RMain}.

\begin{figure}
    \centering
        \includegraphics[width=2.7in]{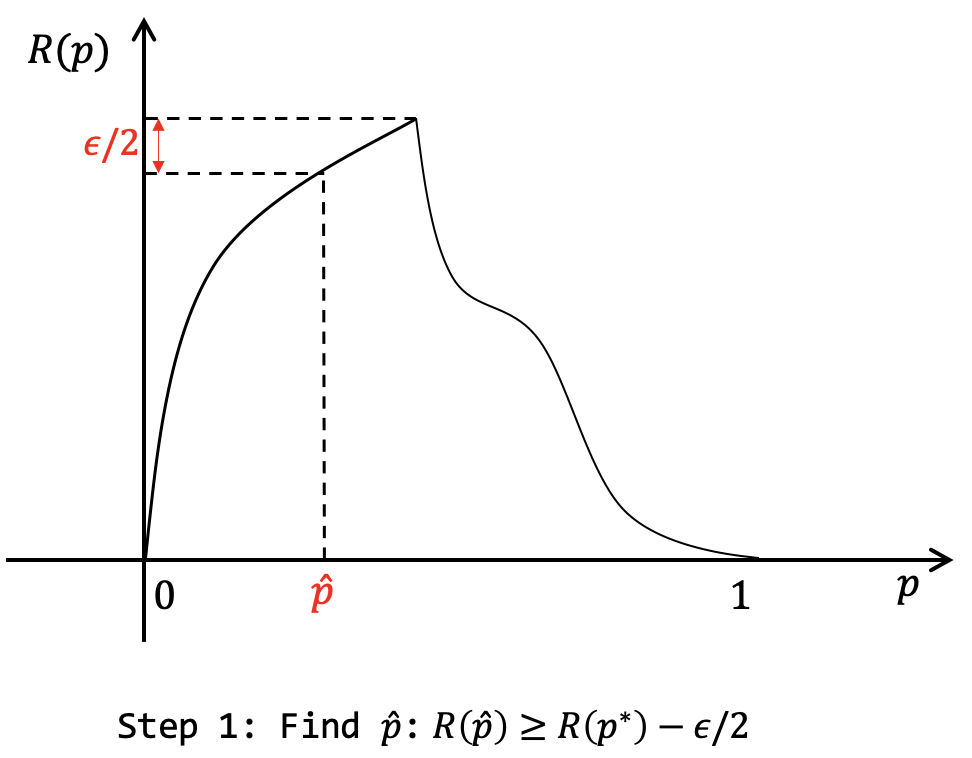}
        \includegraphics[width=2.7in]{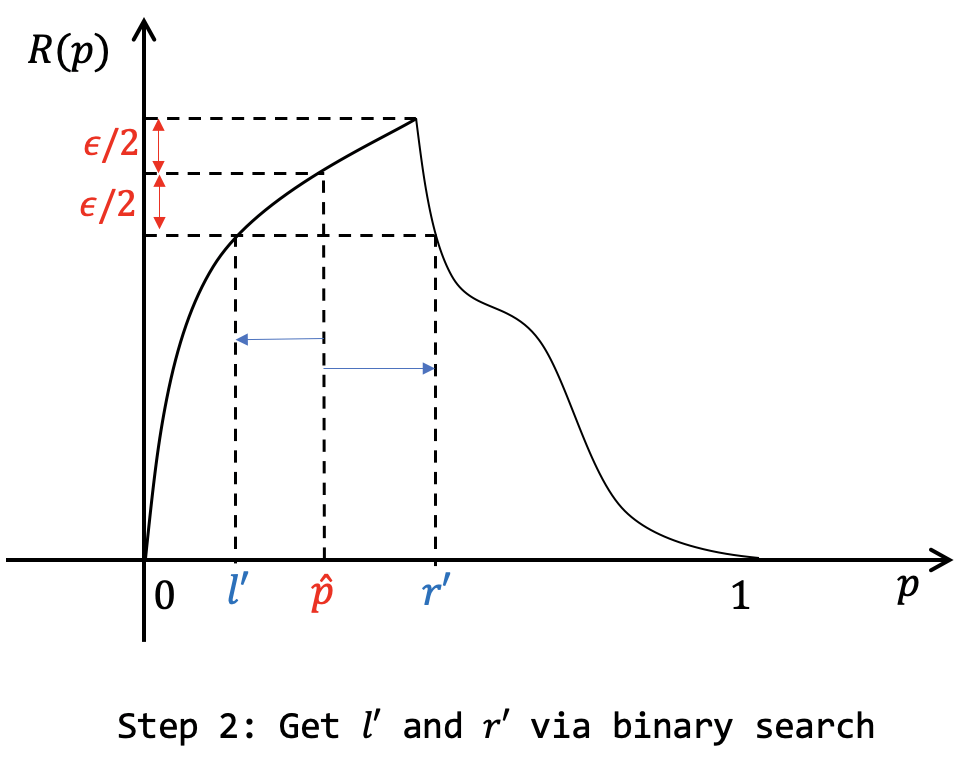}
    \caption{The two main steps of \Cref{thm:1RMain}}
    \label{fig:sr-twosteps}
\end{figure}

\subsubsection{Step 1: Find $\hat p$ to Approximate $p^*$}

The first step of the sub-routine is to find $\hat p$ that approximates $p^*$. Specifically, we want to prove the following lemma:

\begin{Lemma}
\label{lma:Step1}
    Assume $R(p)$ is a half-concave revenue function defined in $[0, 1]$. Given interval $[\lcb, \ucb]$ and error parameter $\epsilon > {1}/{\sqrt{T}}$, there exists an algorithm that tests $O({\epsilon^{-2}}{\log^2 T})$ rounds with prices inside $[\lcb, \ucb - {T^{-100}}]$ and outputs $\hat p \in [\lcb, \ucb - \se]$ satisfying with probability at least $1 - T^{-6}$ that $\max_{p \in [\lcb, \ucb]} R(p) - R(\hat p) < \frac{\epsilon}{2}$.
\end{Lemma}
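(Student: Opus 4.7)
The plan is a noisy ternary-search algorithm that exploits half-concavity. Maintain a shrinking confidence interval $[\lcb_k, \ucb_k]$, initialized to $[\lcb_0, \ucb_0] := [\lcb, \ucb-\se]$. At iteration $k$, probe the trisection points
\[
a_k ~:=~ \lcb_k + \tfrac{1}{3}(\ucb_k - \lcb_k), \qquad b_k ~:=~ \lcb_k + \tfrac{2}{3}(\ucb_k - \lcb_k),
\]
playing each for $N = O(\tau^{-2}\log T)$ rounds, where $\tau := \epsilon/(16K)$ and $K = \Theta(\log T)$ is chosen so that $\ucb_K - \lcb_K \leq \se$. By Hoeffding, the empirical revenues $\hat R(a_k), \hat R(b_k)$ each satisfy $|\hat R - R| \leq \tau$ with probability at least $1 - T^{-10}$. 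If $\hat R(b_k) - \hat R(a_k) > 2\tau$ declare the ``right-recurse'' case and set $[\lcb_{k+1}, \ucb_{k+1}] = [a_k, \ucb_k]$; otherwise declare the ``left-recurse'' case and set $[\lcb_{k+1}, \ucb_{k+1}] = [\lcb_k, b_k]$. Output $\hat p := \lcb_K$.

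\textbf{Main invariant.}
By a union bound the failure probability is at most $T^{-6}$; I condition on all empirical estimates being within $\tau$ of the truth. I prove by induction on $k$ that $M_k := \max_{p \in [\lcb_k, \ucb_k]} R(p) \geq M_0 - 4k\tau$. In the right-recurse case, concentration gives $R(b_k) > R(a_k)$, so by single-peakedness (property~(i) of half-concavity) the peak $p^*$ satisfies $p^* > a_k$, and therefore the dropped segment $[\lcb_k, a_k)$ cannot contain the maximizer of $[\lcb_k, \ucb_k]$; hence $M_{k+1} = M_k$. In the left-recurse case, if the maximizer of $[\lcb_k, \ucb_k]$ already lies in $[\lcb_k, b_k]$ then $M_{k+1} = M_k$; otherwise the maximizer lies in $(b_k, \ucb_k]$, so single-peakedness forces $p^* > b_k$, putting $a_k, b_k \in [\lcb, p^*]$ where $R$ is concave (property~(iii)). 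Concentration gives $R(b_k) - R(a_k) \leq 4\tau$, and concavity on $[\lcb, p^*]$ bounds the chord-slope on $[b_k, \min(\ucb_k, p^*)]$ by the chord-slope on $[a_k, b_k]$; combined with the identity $\ucb_k - b_k = b_k - a_k$, this yields $M_k - R(b_k) \leq 4\tau$, so $M_{k+1} \geq R(b_k) \geq M_k - 4\tau$.

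\textbf{Finishing and obstacles.}
After $K$ iterations $M_K \geq M_0 - \epsilon/4$; truncating the tail $[\ucb-\se, \ucb]$ in the initialization costs at most $\se$ by Lipschitzness (property~(ii)), so $M_K \geq \max_{p \in [\lcb, \ucb]} R(p) - \epsilon/4 - \se$. The invariant also guarantees that $p^*$ is always weakly to the right of $\lcb_K$ (the left endpoint only moves right in the safe right-recurse case, where $p^* > a_k$), so Lipschitzness on $[\lcb_K, \min(\ucb_K, p^*)]$ gives $R(\hat p) = R(\lcb_K) \geq M_K - \se$, completing the claim $R(\hat p) > \max_{[\lcb, \ucb]} R - \epsilon/2$ for $T$ sufficiently large. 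The total round count is $2KN = \OTild(\epsilon^{-2})$. The main obstacle is the left-recurse inductive step, which uses half-concavity in both halves simultaneously: concavity furnishes the chord inequality that bounds the loss from dropping $(b_k, \ucb_k]$, and Lipschitzness is what allows us to report the left endpoint $\lcb_K$ as a faithful proxy for the maximizer of the final interval.
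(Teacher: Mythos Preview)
Your ternary-search scheme is the same idea as the paper's, and the invariant you prove is correct. The gap is quantitative: as written, your algorithm does \emph{not} meet the stated $O(\epsilon^{-2}\log^2 T)$ round bound. You accumulate $4\tau$ error per iteration over $K=\Theta(\log T)$ iterations, which forces $\tau=\Theta(\epsilon/\log T)$; hence each probe uses $N=O(\tau^{-2}\log T)=O(\epsilon^{-2}\log^3 T)$ rounds and the total is $2KN=O(\epsilon^{-2}\log^4 T)$. Your final line ``$2KN=\OTild(\epsilon^{-2})$'' hides this, but the lemma asks for $\log^2 T$ specifically.

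The paper avoids the accumulation by two small changes: it stops when the interval width drops below $\epsprm:=\epsilon/100$ (not $\se$), and it outputs $\hat p=\arg\max_{p\in P}\hat R(p)$ over \emph{all} tested prices rather than $\lcb_K$. The analysis then tracks the single iteration $i$ at which the original maximizer $p^*_{\lcb,\ucb}$ is first dropped; that drop can only happen in the left-recurse branch, and at that moment concavity on $[a_i,b_i,p^*_{\lcb,\ucb}]\subseteq[0,p^*]$ gives $R(p^*_{\lcb,\ucb})-R(b_i)\le R(b_i)-R(a_i)\le 4\epsprm$. So some tested price is already $4\epsprm$-close to optimal, with no per-step loss to sum. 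This lets $\epsprm$ be a constant fraction of $\epsilon$, giving $O(\epsilon^{-2}\log T)$ samples per probe and $O(\log T)$ iterations, hence $O(\epsilon^{-2}\log^2 T)$ total.

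One minor sloppiness: your claim that ``$p^*$ is always weakly to the right of $\lcb_K$'' is not guaranteed when $p^*<\lcb$ initially (the lemma does not assume $p^*\in[\lcb,\ucb]$). This is harmless, since in that regime $R$ is non-increasing on every $[\lcb_k,\ucb_k]$, you never right-recurse, and $M_K=R(\lcb_K)$ directly; but it should be said.
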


When we have the assumption $p^* \in [\lcb, \ucb]$ from \Cref{thm:1RMain}, \Cref{lma:Step1} implies we are able to find $\hat p$ such that $R(p^*) - R(\hat p) \leq \frac{\epsilon}{2}$ in the first step. We first give the pseudo-code of the algorithm required by \Cref{lma:Step1} and its high-level idea.

\begin{algorithm}[tbh]
\caption{Finding $\hat p$}
\label{alg:sr-esp}
\KwIn{Hidden Revenue Function $R(p)$, Interval $[\lcb, \ucb]$, Error Parameter $\epsilon$}
Let $\epsprm = \frac{\epsilon}{100}$ be the scaled error parameter, $\lcb_1 \gets \lcb, \ucb_1 \gets  \ucb$, $i = 1$. \\
\While{$\ucb_i - \lcb_i > \epsprm$}
{
    Let $a_i \gets ({2\lcb_i + \ucb_i})/{3}$  and $b_i \gets ({\lcb_i + 2\ucb_i})/{3}$. \\
    Test $C \cdot {\epsprm^{-2}}{\log T}$ rounds with price $p=a_i$. Let $\hat R(a_i)$ be the average reward. \\
    Test $C \cdot {\epsprm^{-2}}{\log T}$ rounds with price $p=b_i$. Let $\hat R(b_i)$ be the average reward. \\
    \eIf{$\hat R(a_i) < \hat R(b_i) - 2\epsprm$}{Let $\lcb_{i+1} \gets a_i, \ucb_{i+1} \gets \ucb_i$}{Let $\lcb_{i+1} \gets \lcb_i, \ucb_{i+1} \gets b_i$}
    $i \gets i+1$
}
Test $C \cdot {\epsprm^{-2}}{\log T}$ rounds with price $p=\lcb_i$. Let $\hat R(\lcb_i)$ be the average reward. \\
Let $\hat p = \arg \max_{p \in P} \hat R(p)$, where $P = \{p:p \text{ is tested}\}$. \\
\KwOut{$\hat p$. }
\end{algorithm}

\Cref{alg:sr-esp} is a recursive algorithm that runs $O(\log \frac{1}{\epsprm})$ rounds. In each round, the algorithm tests the two points $a, b$ that divide the current interval $[\lcb, \ucb]$ into thirds. Each price is tested for $\OTild(\frac{1}{\epsprm^2})$ rounds, where $\epsprm = {\epsilon}/{100}$ is the scaled error parameter. A standard concentration inequality guarantee both $|R(a) - \hat R(a)| \leq \epsprm$ and $|R(b) - \hat R(b)| \leq \epsprm$ hold. Then, the algorithm drops one third of the interval according to the test results. There are two different cases (see \Cref{fig:sr-twocases}):

\begin{figure}
    \centering
        \includegraphics[width=2.7in]{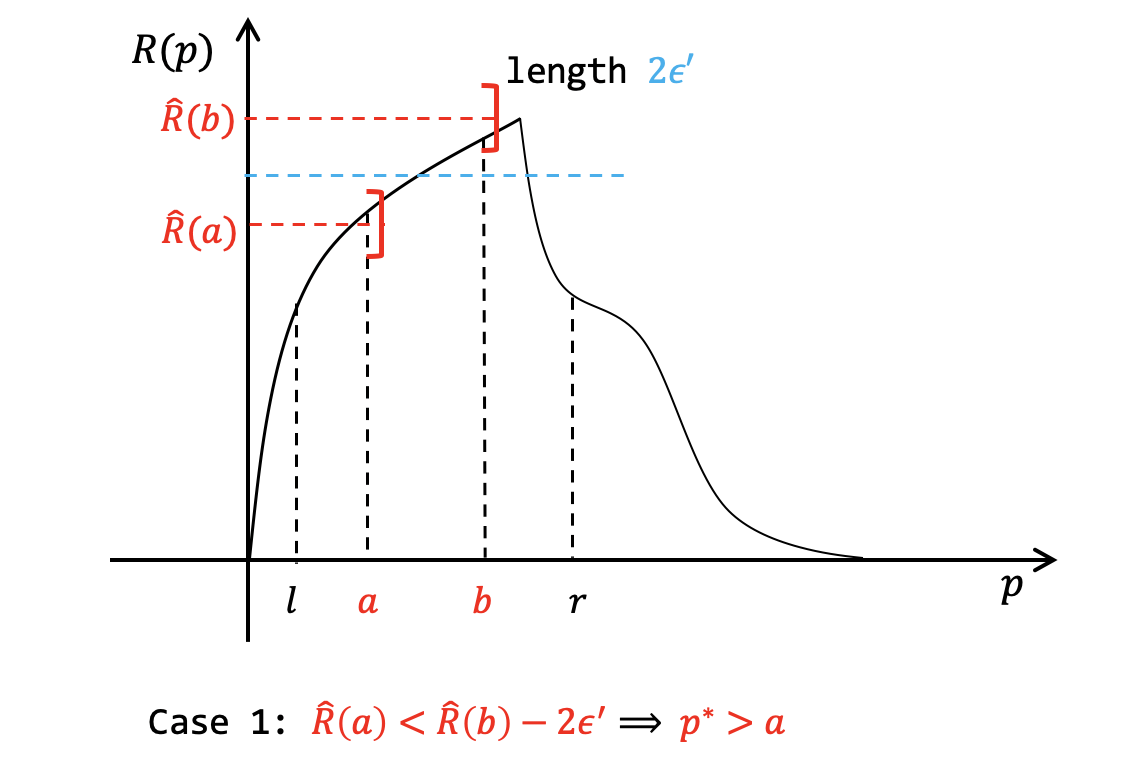}
        \includegraphics[width=2.7in]{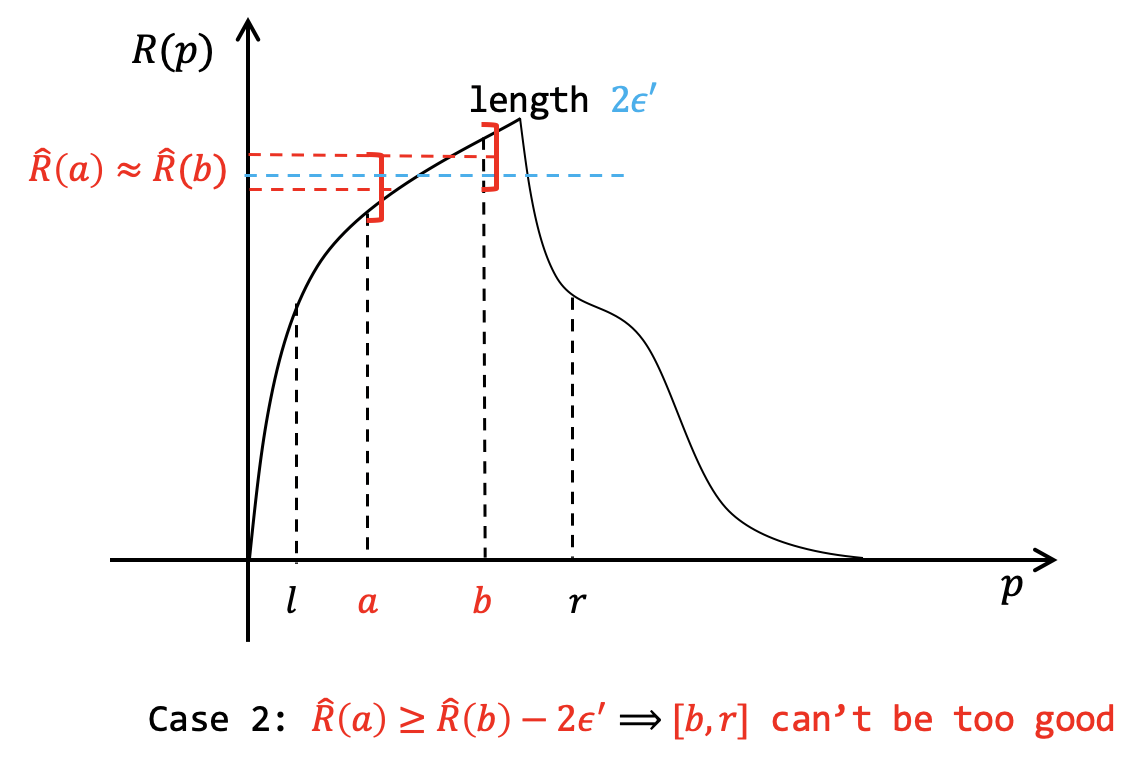}
    \caption{Two cases of Step 1 as discussed in \Cref{alg:sr-esp}.}
    \label{fig:sr-twocases}
\end{figure}

\noindent \textit{Case 1}: $\hat R(a) < \hat R(b) - 2\epsprm$. The inequality is sufficient to show that $R(a) < R(b)$. Since function $R(p)$ is single-peaked, there must be $p^* \geq a$ and the sub-interval $[\lcb, a]$ can be dropped.

\noindent \textit{Case 2}: $\hat R(a) \geq \hat R(b) - 2\epsprm$. This inequality implies $R(b) - R(a)$ is sufficiently small. In this case, observe that if $p^* \in [b, \ucb]$, the value of $R(p^*)$ can't be much better than $R(b)$, because the concavity in $[a, p^*]$ guarantees that $R(p^*) - R(b) \leq R(b) - R(a)$. Therefore, the sub-interval $[b, \ucb]$ can be dropped because we don't need to further test it.

Finally, the recursive algorithm stops when the length of the remaining interval is bounded by $\epsprm$, and the Lipschitzness guarantees the final interval is good.

Now, we give the formal proof of \Cref{lma:Step1}.

\begin{proof}[Proof of \Cref{lma:Step1}]

 We first show that \Cref{alg:sr-esp}  tests $O(\frac{\log^2 T}{\epsilon^2})$ prices in $[\lcb, \ucb - \se]$.
 
 The main body of \Cref{alg:sr-esp} is a while loop that maintains an interval to be tested. Assume the while loop stops when $i = k + 1$. Then, for $i \in [k]$, the length of $[\lcb_{i+1}, \ucb_{i+1}]$ is two-thirds of $[\lcb_i, \ucb_i]$, and we have $\ucb_k - \lcb_k > \epsprm$. Therefore, $k = O(\log \frac{1}{\epsprm}) = O(\log T)$. For interval $[\lcb_i, \ucb_i]$, two prices $a_i, b_i$ are tested for $O({\epsprm^{-2}}{\log T})$ times, so in total \Cref{alg:sr-esp} tests $O({\epsprm^{-2}}{\log^2 T})$ rounds. To show every tested price is in $[\lcb, \ucb- \se]$, let $p$ be a price tested in \Cref{alg:sr-esp}. $p \in [\lcb, \ucb]$ directly follows from the algorithm. For the inequality $p \leq \ucb - \se$, observe that the largest tested price must be $p = b_i$ for some $i \in [k]$, and $b_i \leq \ucb - \se$ is guaranteed by the fact that $\ucb_i - b_i \geq \frac{1}{3} \epsprm \gg \se$.

 It only remains to show $\max_{p \in [\lcb, \ucb]} - R(\hat p) \leq \frac{\epsilon}{2}$. Define $p^*_{\lcb, \ucb} := \arg \max_{p \in [\lcb, \ucb]} R(p)$ to be the optimal price in $[\lcb, \ucb]$. We will show that  $R(p^*_{\lcb, \ucb}) - R(\hat p) \leq 6\epsprm < \frac{\epsilon}{2}$ holds with probability $1 - T^{-6}$. We first need the following claim, which follows from standard concentration inequalities:
    \begin{restatable}{Claim}{clmcona}
    \label{clm:sr-step1-con}
    In \Cref{alg:sr-esp}, $|\hat R(p) - R(p)| \leq \epsprm$ simultaneously holds for every tested price $p$ with probability at least $1 - T^{-6}$.
    \end{restatable}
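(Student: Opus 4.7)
\textbf{Proof proposal for Claim \ref{clm:sr-step1-con}.} The plan is a completely standard concentration plus union bound argument; the only thing to track carefully is the number of distinct prices ever tested. For a fixed tested price $p$, the per-round reward is $p \cdot \mathbf{1}_{v_t \geq p}$, a random variable bounded in $[0,1]$ with mean $R(p)$, and $\hat R(p)$ is the empirical mean of $n := C \cdot \epsprm^{-2} \log T$ independent draws. By Hoeffding's inequality,
\[
\Pr\!\big[\,|\hat R(p) - R(p)| > \epsprm \,\big] ~\leq~ 2\exp\!\big(-2 n \epsprm^2\big) ~=~ 2\exp(-2C \log T) ~=~ 2 T^{-2C}.
\]

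Next I would bound the total number of prices tested by \Cref{alg:sr-esp}. As already argued in the proof of \Cref{lma:Step1}, the while loop runs for at most $k = O(\log \frac{1}{\epsprm}) = O(\log T)$ iterations, each testing the two prices $a_i, b_i$, and one additional price $\lcb_i$ is tested after the loop. Hence the total number of distinct tested prices is at most $2k+1 = O(\log T)$.

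Applying the union bound over all such tested prices, the probability that any one of them violates the $\epsprm$-concentration bound is at most $O(\log T) \cdot 2 T^{-2C}$. By choosing the absolute constant $C$ in the algorithm sufficiently large (e.g.\ $C = 10$ suffices for all $T \geq 2$), this probability is bounded by $T^{-6}$, which gives the claim. The main (and only) obstacle is essentially bookkeeping: making sure one counts \emph{distinct} tested prices across all recursive phases rather than overcounting, and verifying that the constant $C$ hidden in the sample size in the pseudocode is in fact large enough to absorb both the $\log T$ factor from the union bound and the desired $T^{-6}$ failure probability. No half-concavity or structural property of $R$ is used here; the statement is purely a concentration bound for the specific sampling rule used inside \Cref{alg:sr-esp}.
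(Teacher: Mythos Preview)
Your proposal is correct and follows essentially the same approach as the paper: Hoeffding for each tested price, then a union bound over all tested prices, with $C$ chosen large enough. The only (minor) difference is that the paper uses the cruder bound ``at most $T$ prices are tested'' rather than your tighter $O(\log T)$ count, which lets it avoid referencing the loop-length argument; either choice works once $C$ is a sufficiently large constant.
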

    \begin{proof}
    For a single tested price $p$, $\hat R(p)$ is estimated by calculating the average of $N = C \cdot \frac{\log T}{\epsprm^2}$ samples. By Hoeffding's Inequality, 
    \[
    \Pr{|\hat R(p) - R(p)| > \epsprm} ~\leq 2\exp\left(-2 N \epsprm^2\right) ~=~ 2T^{-2C} ~<~ T^{-7}.
    \]
    The last inequality holds when $C$ is a constant greater than $4$. Then, we have $|\hat R(p) - R(p)| \leq \epsprm$ holds with probability $1 - T^{-7}$ for a single tested price $p$.
    
    Notice that \Cref{alg:sr-esp} can't test more than $T$ prices. By the union bound, $|\hat R(p) - R(p)| \leq \epsprm$ simultaneously holds for all tested prices with probability $1 - T^{-6}$.
\end{proof}
  
    
    Next, we use \Cref{clm:sr-step1-con} to show $R(p^*_{\lcb, \ucb}) - R(\hat p) \leq 6\epsprm$. We consider the following two cases:
    
    The first case is that there exists $i \in [k]$, such that $p^*_{\lcb, \ucb}$ falls in $[\lcb_i, \ucb_i]$ but not in $[\lcb_{i+1}, \ucb_{i+1}]$. In this case, there must be $\hat R(a_i) \geq \hat R(b_i) - 2\epsprm$ and $[b_i, \ucb_i]$ is dropped. If not, we have $\hat R(a_i) \geq \hat R(b_i) - 2\epsprm$ and $[\lcb_i, a_i]$ is dropped. However, the inequality $\hat R(a_i) \geq \hat R(b_i) - 2\epsprm$ together with \Cref{clm:sr-step1-con} gives $R(a_i) < R(b_i)$. Then, the single-peakedness of $R(p)$ gives $p^*_{\lcb, \ucb} \geq a_i$, which is in contrast to the assumption that $p^*_{\lcb, \ucb} \notin [\lcb_{i+1}, \ucb_{i+1}]$.  Therefore, we have $\hat R(a_i) \geq \hat R(b_i) - 2\epsprm$, and the assumption $p^*_{\lcb, \ucb} \notin [\lcb_{i+1}, \ucb_{i+1}]$ implies $p^*_{\lcb, \ucb} \in [b_i, \ucb_{i}]$. Then, the concavity of $R(p)$ in $[a_i, p^*_{\lcb, \ucb}]$ guarantees 
    \[R(p^*_{\lcb, \ucb}) - R(b_i) ~\leq~ R(b_i) - R(a_i) ~\leq~ (\hat R(b_i) + \epsprm) - (\hat R(a_i) - \epsprm) ~\leq~ 4\epsprm.\]

    The second case is that the desired $i$ in the first case doesn't exist. In this case, the only possibility is that $p^*_{\lcb, \ucb} \in [\lcb_{k+1}, \ucb_{k+1}]$. Then, the 1-Lipschitzness of function $R(p)$ guarantees that $R(p^*_{\lcb, \ucb}) - R(\lcb_k) \leq  \epsprm$.

    In both cases, we find a tested price $p'$ satisfying $R(p') \geq R(p^*_{\lcb, \ucb}) - 4\epsprm$. Then,
\[
R(\hat p) ~\geq~ \hat R(\hat p) - \epsprm ~\geq \hat R(p') - \epsprm ~\geq ~ R(p') - 2\epsprm ~\geq R(p^*_{\lcb, \ucb}) - 6\epsprm.     \qedhere
\]
\end{proof}

\IGNORE{
\paragraph{Comparision to \cite{LSTW-SODA23}} We note that a similar $\OTild(\frac{1}{\epsilon^2})$ threshold query complexity result is also given in \cite{LSTW-SODA23}. Comparing to the algorithm in \cite{LSTW-SODA23}, we make the following improvements:
\begin{itemize}
    \item \Cref{alg:sr-esp} has a lower dependency on the logarithmic factors.
    \item The algorithm in \cite{LSTW-SODA23} is built on a non-standard flatness observation for regular distributions. The algorithm requires a case discussion with 7 different cases. Our algorithm is based on a cleaner idea of half-concavity and avoids involved case discussions.
    \item A generalization of \Cref{alg:sr-esp} works for sequential posted pricing setting. It's unclear whether the algorithm in \cite{LSTW-SODA23} generalizes to sequential buyers.
\end{itemize}
}

\subsubsection{Step 2: Generating New Confidence Interval}

Upon executing \Cref{alg:sr-esp}, we obtain an approximate optimal price $\hat p$. The subsequent stage of the algorithm uses $R(\hat p)$ as a benchmark to establish the new upper and lower bounds of the confidence interval. Given that the function $R(p)$ is single-peaked, two independent binary search algorithms suffice to independently determine these new upper and lower bounds, leading to the following \Cref{lma:Step2}:

\begin{restatable}{Lemma}{lmasrbs}
\label{lma:Step2}
    Assume $R(p)$ is a half-concave revenue function defined in $[0, 1]$. Given interval $[\lcb, \ucb] \ni p^*$, error parameter $\epsilon > \frac{1}{\sqrt{T}}$, near-optimal price $\hat p \in [\lcb, \ucb-\se]$ that satisfies $R(p^*) - R(\hat p) \leq \frac{\epsilon}{2}$, there exists an algorithm that tests $O(\epsilon^{-2} \log^2 T )$ rounds with prices in $[\lcb, \ucb - \se]$ and outputs $[\lcb',\ucb'] \subseteq [\lcb, \ucb]$ satisfying with probability $1 - T^{-6}$ that:
    \begin{itemize}
        \item $p^* \in [\lcb', \ucb']$.
        \item For $p \in [\lcb', \ucb'-\se]$, we have $R(p^*)  - R(p) \leq \epsilon$.
    \end{itemize}
\end{restatable}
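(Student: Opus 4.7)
\textbf{Proof plan for Lemma \ref{lma:Step2}.} The plan is to run two independent binary searches that both use $R(\hat p)$ as a common benchmark. Let $c := R(\hat p) - \epsilon/2$. By half-concavity $R$ is single-peaked, so the super-level set $\{p : R(p) \geq c\}$ restricted to $[\lcb, \ucb]$ is a single interval $[a,b]$ that contains both $p^*$ and $\hat p$ (the latter because trivially $R(\hat p) \geq c$). Hence $a$ can be located by a monotone-style binary search on $[\lcb, \hat p]$, and $b$ can be located by an analogous binary search on $[\hat p, \ucb - \se]$ --- restricting the right-hand domain this way is exactly what enforces the ``tested prices in $[\lcb, \ucb-\se]$'' requirement.

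Each binary search runs for $O(\log T)$ iterations and halves its active sub-interval each time. In every iteration, the midpoint $m$ is tested with $C\epsilon^{-2}\log T$ samples, so Hoeffding's inequality plus a union bound over the $O(\log T)$ midpoints yields $|\hat R(m) - R(m)| \leq \epsprm$ simultaneously at all tested prices with probability at least $1 - T^{-6}$, exactly as in Claim~\ref{clm:sr-step1-con}. Conditioned on this concentration event, we compare $\hat R(m)$ to $c$ with a $\pm\epsprm$ slack and contract either the left or the right end of the active sub-interval, maintaining the invariants (right search) $R(u) \geq c$ and $R(v) < c$, and symmetrically on the left. After $O(\log T)$ halvings, each terminal sub-interval has width at most $\se$, giving the claimed $O(\epsilon^{-2}\log^2 T)$ total rounds.

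I then set $\lcb'$ to be the left end of the terminal left-search interval and $\ucb'$ to be the right end of the terminal right-search interval, and verify both conclusions. For $p^* \in [\lcb', \ucb']$, the binary-search invariants combined with single-peakedness give $\lcb' \leq a \leq p^* \leq b \leq \ucb'$. For the loss guarantee on $[\lcb', \ucb' - \se]$, observe that the terminal right interval $[u_R, v_R]$ has width at most $\se$, so $\ucb' - \se = v_R - \se \leq u_R \leq b$; symmetrically the terminal left interval gives $\lcb' \geq $ (the right endpoint of that interval) $\geq a$. Hence $[\lcb', \ucb' - \se] \subseteq [a,b]$, which means $R(p) \geq c \geq R(p^*) - \epsilon$ throughout this range as required.

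The main obstacle, and the only genuinely delicate point, is the asymmetry between the two halves of the domain: half-concavity gives Lipschitzness and concavity only on $[\lcb, p^*]$, so past $p^*$ the function $R$ may plunge arbitrarily fast. Binary search can therefore only pin down the right transition point $b$ to a window of width $\se$, inside which the loss bound simply cannot be enforced. This is precisely what forces the non-standard $\ucb' - \se$ on the right endpoint in the statement, and it also dictates restricting the right search to $[\hat p, \ucb - \se]$ rather than $[\hat p, \ucb]$ so that no tested midpoint ever leaves the allowed region. The remaining bookkeeping --- choosing the $\epsprm$-slack in the midpoint comparisons so that the true transition point is never wrongly discarded, and propagating the $1-T^{-6}$ confidence via the union bound --- is standard.
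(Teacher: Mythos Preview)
Your approach is essentially the paper's (two binary searches on $[\lcb,\hat p]$ and $[\hat p,\ucb-\se]$ benchmarked against $\hat R(\hat p)$, Hoeffding plus a union bound, termination when the sub-interval has width at most $\se$), and your ``main obstacle'' paragraph correctly identifies why the $\ucb'-\se$ slack is forced on the right.

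There is, however, one genuine slip in the loss-bound argument. You write ``symmetrically the terminal left interval gives $\lcb' \geq$ (the right endpoint of that interval) $\geq a$'', but this is false: you defined $\lcb'$ as the \emph{left} endpoint of the terminal left interval, so $\lcb'$ is at most the right endpoint, and in fact your own left-search invariant would give $R(\lcb')<c$, i.e.\ $\lcb'<a$ in general. Hence the inclusion $[\lcb',\ucb'-\se]\subseteq[a,b]$ fails, and the loss at $\lcb'$ does not follow from membership in the super-level set. The fix, which is exactly what the paper does, is to use the $1$-Lipschitzness of $R$ on $[0,p^*]$ (part of half-concavity): the terminal left interval $[\lcb_{b1},\ucb_{b1}]$ has width at most $\se$ and its right endpoint satisfies $R(\ucb_{b1})\geq R(\hat p)-O(\epsprm)$, so Lipschitzness yields $R(\lcb')\geq R(\ucb_{b1})-\se$. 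Thus the two sides are \emph{not} symmetric: on the right the $\se$ window is absorbed into the statement via $\ucb'-\se$; on the left it is absorbed via Lipschitzness.

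A smaller omission: your right-search invariant $R(v)<c$ can fail at initialization when $R(\ucb-\se)\geq c$. The paper handles this by first testing $\hat R(\ucb-\se)$ and setting $\ucb'=\ucb$ directly in that case.
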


Combining \Cref{lma:Step1} and \Cref{lma:Step2} proves \Cref{thm:1RMain}:

\begin{proof}[Proof of \Cref{thm:1RMain}]
    We first run \Cref{alg:sr-esp} to get $\hat p$, and then run the algorithm described in \Cref{lma:Step2} with $\hat p$ being part of the input. Combining \Cref{lma:Step1} and \Cref{lma:Step2} with union bound guarantees that we have the desired output with success probability at least $1 - 2T^{-6} > 1 - T^{-5}$.
\end{proof}

It only remains to prove \Cref{lma:Step2}. We first give the following \Cref{alg:sr-bs}, and then show \Cref{alg:sr-bs} is the desired algorithm for \Cref{lma:Step2}.

\begin{algorithm}[tbh]
\caption{Getting $[\lcb', \ucb']$ via Binary Search}
\label{alg:sr-bs}
\KwIn{Hidden Revenue Function $R(p)$, Interval $[\lcb, \ucb]$, Error Parameter $\epsilon$, Near-Optimal price $\hat p$}
Let $\epsprm = \frac{\epsilon}{100}$ be the scaled error parameter. \\
\tcp{First binary search to determine $\lcb'$}
Test $C \cdot \frac{\log T}{\epsprm^2}$ rounds with price $p=\hat p$. Let $\hat R(\hat p)$ be the average reward. \\
Let $\lcb_{b1} \gets \lcb, \ucb_{b1} \gets \hat p$. \\
\While{$\ucb_{b1} - \lcb_{b1} \geq \se$}
{
    Let $m = \frac{\lcb_{b1}+\ucb_{b1}}{2}$. \\
    Test $C \cdot \frac{\log T}{\epsprm^2}$ rounds with price $p=m$. Let $\hat R(m)$ be the average reward. \\
    \leIf{$\hat R(m) < \hat R(\hat p) - 2\epsprm$}{Update $\ucb_{b1} \gets m$}{Update $\lcb_{b1} \gets m$}
}
Let $\lcb' \gets \lcb_{b1}$ \\
\tcp{Second binary search to determine $\ucb'$}
Test $C \cdot \frac{\log T}{\epsprm^2}$ rounds with price $p=\ucb - \se$. Let $\hat R(\ucb - \se)$ be the average reward. \\
\eIf{$\hat R(\ucb - \se) \geq \hat R(\hat p) - 2\epsprm$}
{
    \tcp{Special Case: original upper bound is good}
    Let $\ucb' \gets \ucb$
}
{
Let $\lcb_{b2} \gets \hat p, \ucb_{b2} \gets \ucb - \se$. \\
\While{$\ucb_{b2} - \lcb_{b2} \geq \se$}
{
    Let $m = \frac{\lcb_{b2}+\ucb_{b2}}{2}$. \\
    Test $C \cdot \frac{\log T}{\epsprm^2}$ rounds with price $p=m$. Let $\hat R(m)$ be the average reward. \\
    \leIf{$\hat R(m) < \hat R(\hat p) - 2\epsprm$}{Update $\ucb_{b2} \gets m$}{Update $\lcb_{b2} \gets m$}
}
Let $\ucb' \gets \ucb_{b2}$.
}
\KwOut{$[\lcb', \ucb']$. }
\end{algorithm}

We first give the following concentration bound for \Cref{alg:sr-bs}:

We first give the following claim:
    \begin{Claim}
    \label{clm:sr-step2-con}
    In \Cref{alg:sr-bs}, $|\hat R(p) - R(p)| \leq \epsprm|$ simultaneously holds for every tested price $p$ with probability at least $1 - T^{-6}$.
    \end{Claim}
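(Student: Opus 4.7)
The plan is to mirror the proof of \Cref{clm:sr-step1-con} almost verbatim, since the structure of \Cref{alg:sr-bs} is analogous: every estimate $\hat R(p)$ used by the algorithm is the empirical mean of $N = C \cdot \log T / \epsprm^2$ independent samples of a $[0,1]$-valued Bernoulli-scaled reward whose expectation equals $R(p)$. So the single-price deviation bound comes straight from Hoeffding's inequality, and the simultaneous bound will come from a union bound over all tested prices.

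First I would fix an arbitrary tested price $p$ and apply Hoeffding's inequality to obtain
\[
\Pr\bigl[\,|\hat R(p) - R(p)| > \epsprm\,\bigr] ~\leq~ 2\exp(-2N\epsprm^{2}) ~=~ 2 T^{-2C} ~<~ T^{-7},
\]
where the last inequality holds as long as the absolute constant $C$ is chosen at least $4$, matching the convention used in the proof of \Cref{clm:sr-step1-con}.

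Next I would bound the total number of tested prices in \Cref{alg:sr-bs}. Each of the two binary-search while-loops starts with an interval of length at most $1$ and halves it until the length drops below $\se = T^{-100}$, so each loop executes at most $O(\log T)$ iterations, testing one new price per iteration; the extra tests of $\hat p$ and of $\ucb - \se$ add $O(1)$ more prices. In particular, the total number of tested prices is at most $T$ (very generously), so a union bound gives
\[
\Pr\bigl[\,\exists \text{ tested } p:\, |\hat R(p) - R(p)| > \epsprm\,\bigr] ~\leq~ T \cdot T^{-7} ~=~ T^{-6}.
\]

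There is no real obstacle here; the only thing to be mildly careful about is to observe that the per-day reward $p_t \cdot \mathbf{1}[v_t \geq p_t]$ used to form $\hat R(p)$ is bounded in $[0,1]$ (as prices lie in $[0,1]$), so Hoeffding applies with the constant $2$ in the exponent that we used above. Once these two ingredients are combined, the claim follows.
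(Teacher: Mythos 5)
Your proposal is correct and matches the paper's own proof essentially verbatim: a per-price Hoeffding bound of $2\exp(-2N\epsprm^2) = 2T^{-2C} < T^{-7}$ with $C > 4$, followed by a union bound over the (at most $T$) tested prices to obtain failure probability $T^{-6}$. The extra accounting of the binary-search iteration count is harmless but not needed, since the crude bound of $T$ tested prices suffices, exactly as in the paper.
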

    \begin{proof}
        For a single tested price $p$, $\hat R(p)$ is estimated by calculating the average of $N = C \cdot \frac{\log T}{\epsprm^2}$ samples. By Hoeffding's Inequality, 
        \[
        \Pr{|\hat R(p) - R(p)| > \epsprm} ~\leq~ 2\exp\left(-2 N \epsprm^2\right) ~=~ 2T^{-2C} ~<~ T^{-7}.
        \]
        The last inequality holds when $C$ is a constant greater than $4$. Then, we have $|\hat R(p) - R(p)| \leq \epsprm$ holds with probability $1 - T^{-7}$ for a single tested price $p$.
        
        Notice that \Cref{alg:sr-bs} can't test more than $T$ prices. By the union bound, $|\hat R(p) - R(p)| \leq \epsprm$ simultaneously holds for all tested prices with probability $1 - T^{-6}$.
    \end{proof}

    Now we prove \Cref{lma:Step2} assuming \Cref{clm:sr-step2-con} holds.

\begin{proof}[Proof of \Cref{lma:Step2}]
    We show \Cref{alg:sr-bs} satisfies the statements in \Cref{lma:Step2}. Our proof starts from verifying \Cref{alg:sr-bs} runs $O(\frac{\log T}{\epsprm^2})$ rounds with prices in $[\lcb, \ucb - \se]$. For the number of tested rounds, \Cref{alg:sr-bs} runs two standard binary searches. In one binary search, the subroutine tests one price with $C \cdot \frac{\log T}{\epsprm^2}$ rounds, and the binary search stops when the length of the interval is less than $\se$. Therefore, The total number of rounds is $C \cdot \frac{\log T}{\epsprm^2} \cdot O(\log T^{100}) = O(\frac{\log^2 T}{\epsilon^2})$. For the range constraint of all tested prices, \Cref{alg:sr-bs} directly guarantees that every tested price falls in $[\lcb, \ucb - \se]$.

    Next, we show $p^* \in [\lcb', \ucb']$. We first prove $p^* \geq \lcb'$. In \Cref{alg:sr-bs}, the only way we update $\lcb'$ is that we observe $\hat R(m) < \hat R(\hat p) - 2\epsprm$ in the first binary search, and $\lcb'$ is updated to at least $m$. This is feasible because  \Cref{clm:sr-step2-con} gives $R(m) < R(\hat p) \leq R(p^*)$, and the single-peakedness of $R(p)$ guarantees $m \leq p^*$. The proof of $p^* \leq \ucb'$ is symmetric.

    Finally, we show $R(p) \geq R(p^*) - \epsilon$ for all $p \in [\lcb', \ucb' - \se]$. Function $R(p)$ is single-peaked, so it is sufficient to prove that $R(\hat p) - R(\lcb') \leq 5\epsprm < \frac{\epsilon}{2}$ and $R(\hat p) - R(\ucb' - \se) \leq 5\epsprm < \frac{\epsilon}{2}$. Then, combining these two inequalities with the assumption $R(p^*) - R(\hat p) \leq \frac{\epsilon}{2}$ gives the desired statement.
 
    We first prove that $R(\hat p) - R(\lcb') \leq 5\epsprm$. The binary search subroutine guarantees that $\hat R(\ucb_{b1}) \geq \hat R(\hat p) - 2\epsprm$ after the binary search loop is finished. Then, \Cref{clm:sr-step2-con} ensures that $R(\hat p) - R(\ucb_{b1}) \leq 4\epsprm$. Since $\lcb'$ is set to be $\lcb_{b1} > \ucb_{b1} - \se$, the 1-Lipschitzness of $R(p)$ in $[\lcb_{b1}, \hat p]$ guarantees that 
    \[
    R(\hat p) - R(\lcb') ~=~ R(\hat p) - R(\ucb_{b1}) + R(\ucb_{b1}) - R(\lcb_{b1}) ~\leq~ 4\epsprm + \se ~\leq~ 5\epsprm. 
    \]
    For the inequality $R(\hat p) - R(\ucb' - \se) \leq 5\epsprm$,  a symmetric proof for the second binary search subroutine gives $R(\hat p) - R(\lcb_{b2}) \leq 4\epsprm$, while $\ucb'$ is set to be $\ucb_{b2} < \lcb_{b2} + \se$. If $p^* < \ucb' - \se$, the single-peakedness of $R(p)$ gives 
    \[
    R(\ucb' - \se) ~\geq~ R(\lcb_{b2}) ~\geq~ R(\hat p) - 4\epsprm.\]
    Otherwise, the 1-Lipschitzness gives 
    \[
    R(\ucb' - \se) ~\geq~ R(\lcb_{b2}) - (\lcb_{b2} - \ucb_{b2} + \se) ~\geq~ R(\hat p) - 4\epsprm - \se > R(\hat p) - 5\epsprm.    \qedhere
    \]
\end{proof}

\section{$n$ Buyers with Regular  Distributions}
\label{sec:GeneralRegular}

In this section, we provide the $\OTild(\poly(n)\sqrt{T})$ regret algorithm for BSPP with $n$ buyers having regular distributions. In this problem we have $n$ buyers with independent regular value distributions $\D_1, \cdots, \D_n$. The seller posts prices $p_1, \cdots, p_n$, and the first buyer with value $v_i \sim \D_i \geq p_i$ gets the item and pays $p_i$. Our goal is to approach the optimal prices $p^*_1, \cdots, p^*_n$ that maximizes the expected revenue $R(p_1, \cdots, p_n):=p_i \cdot \Pr{i \text{ gets the item}}$ via a bandit learning game over $T$ days: On day $t \in [T]$ we post a price vector $(p^{(t)}_1, \cdots, p^{(t)}_n)$ and the environment draws $v^{(t)}_1 \sim \D_1, \cdots, v^{(t)}_n \sim \D_n$. We only observe reward $\text{Rev}_t=\sum_{i \in [n]} p^{(t)}_i \cdot \one[p^{(t)}_i \geq v^{(t)}_i \land \forall j<i,p^{(t)}_j < v^{(t)}_j ]$. Our goal is to minimize the total regret $T \cdot R(p^*_1, \cdots, p^*_n) - \sum_{t \in [T]} \text{Rev}_t$ in expectation. Our main result is:

\begin{Theorem}
    \label{thm:newGRMain}
     For BSPP with 
     $n$ buyers having regular distributions,  there exists an algorithm with $O(n^{2.5}\sqrt{T} \log T)$ regret.
\end{Theorem}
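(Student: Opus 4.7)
The plan is to reduce the $n$-buyer problem to $n$ instances of the single-buyer algorithm from Section 2 by learning optimal prices in backward order, from buyer $n$ down to buyer $1$. The central observation is that once the suffix prices $p_{i+1},\ldots,p_n$ are fixed, the expected per-round revenue as a function of $p_i$ becomes
$$R_i(p) ~=~ p\,(1-F_i(p)) + C_i \cdot F_i(p),$$
where $C_i \in [0,1]$ denotes the expected continuation revenue from buyers $i+1,\ldots,n$. The identity $R_i(p) - C_i = (p - C_i)(1-F_i(p))$ shows that learning $p_i$ is essentially a single-buyer problem over the distribution shifted to start at $C_i$, with the optimal price $p_i^*$ necessarily lying in $[C_i, 1]$.

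The first step is to prove a generalized half-concavity lemma: for any regular $\D_i$ and any $C_i \in [0,1]$, the function $R_i$ restricted to $[C_i, 1]$ is half-concave in the sense of Definition 2.2. The calculation should parallel Lemma 2.3, since adding the term $C_i F_i(p)$ does not change the sign of $R_i''$ on the concave side of the peak, and the Lipschitz constant on $[C_i, p_i^*]$ remains bounded by $1$. On $[0, C_i]$ the function is dominated by $R_i(C_i)$, so restricting the search to $[C_i, 1]$ is without loss.

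Next I would design a backward learning scheme. With $C_n = 0$, I invoke Algorithm 1 on buyer $n$ while posting prices above $1$ for buyers $1,\ldots,n-1$ so that they deterministically reject, ensuring the bandit feedback reflects only buyer $n$. Once $\hat p_n$ is learned, I estimate $\hat C_{n-1}$ from those samples and invoke a generalized Algorithm 1 on $[\hat C_{n-1},1]$ for buyer $n-1$, repeating down to buyer $1$. This requires extending Algorithm 1 to search an interval $[C,1]$ under the perturbed half-concave function with the additive term $C \cdot F(p)$, but the binary-search logic of Section 2.3 is otherwise unchanged.

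The main obstacle is the regret accounting. Two sources of error compound: the intrinsic $\OTild(\sqrt{T})$ learning regret at each of the $n$ phases, and the propagation of suffix sub-optimality — an $\eta$-error in each $\hat p_j$ with $j > i$ translates into $O(\eta)$ additive error in $\hat C_i$, which shifts the half-concave target for buyer $i$. To keep propagation under control, I plan to target per-buyer accuracy on the order of $\epsilon / n$, inflating the per-buyer round budget by roughly a factor $n^2$ and summing over the $n$ phases to yield the claimed $\OTild(n^{2.5}\sqrt{T})$ regret. The subtle part will be verifying that Algorithm 1 remains sound when the half-concave function it learns is only approximately specified (through $\hat C_i$ rather than the true continuation value), so that its confidence-interval invariant continues to hold across the backward recursion.
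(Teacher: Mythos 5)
Your structural claim---that with the continuation value $C_i$ fixed, $R_i(p)=p\,(1-F_i(p))+C_i\,F_i(p)$ is single-peaked with maximizer in $[C_i,1]$ and half-concave there---is correct and is exactly the paper's \Cref{clm:GRHalfCon}. The gap is in how you generate samples: isolating buyer $i$ by making buyers $1,\dots,i-1$ reject means every exploration round is played at a price vector whose one-round regret can be a constant (e.g., if buyer $1$ has value $1$ almost surely, the optimum earns essentially $1$ per round while your isolated rounds earn only buyer $i$'s revenue). Since pinning $p_i^*$ down to accuracy $\epsilon$ costs $\widetilde{\Omega}(1/\epsilon^2)$ such rounds, your scheme is explore-then-commit: taking $\epsilon\approx T^{-1/2}$ forces $\widetilde{\Omega}(T)$ constant-regret exploration rounds, and the optimal trade-off ($\epsilon\approx T^{-1/3}$) yields $\OTild(T^{2/3})$, not $\OTild(\sqrt{T})$. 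Interleaving a halving-$\epsilon$ schedule does not repair this, because the per-round cost of an isolated round never shrinks with $\epsilon$; your accounting (``intrinsic $\OTild(\sqrt{T})$ learning regret at each of the $n$ phases'') implicitly measures exploration regret only against buyer $i$'s own curve rather than against $R(p_1^*,\dots,p_n^*)$.

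The paper's algorithm avoids exactly this by making exploration itself near-optimal: while testing buyer $i$, buyers $j<i$ are priced at the top of their current confidence intervals ($\ucb^s_j=\ucb_j-T^{-100}$), which keeps the regret of every test round bounded by the previous phase's error and simultaneously maximizes the probability $P_i$ of reaching buyer $i$. This choice creates the complications your plan never has to face but a correct proof must: buyer $i$ may be reached only with tiny probability $P_i$ (then any price is fine, Case 1 of \Cref{lma:gr-subr}); the continuation value $\rev(S_{i+1})$ cannot be estimated when $F_i(\ucb^s_i)$ is small, which the paper circumvents not by locating $C_i$ but by proving a generalized ``half $2$-concavity'' and a ternary-search variant for it (\Cref{lma:gr-2good,lma:gr-genalg,lma:gr-case2}); and in the interval-update step, $\hat p_i$ is near-optimal for $R_i$ (suffix priced at $\hat p_{i+1},\dots,\hat p_n$) whereas the target $p_i^*$ maximizes $R^*_i$ (suffix at the optimal prices), so the binary search must be run against $R^*_i$ with an explicit $O(n\epsprm)$ slack via \Cref{clm:gr-loc-opt}, or the confidence interval can evict $p_i^*$---the soundness issue you flag at the end but leave unresolved.
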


\subsection{Proof Overview}
\label{sec:pfov}

In this section, we provide an overview of our algorithm for sequential buyers.
The main structure of our algorithm follows the structure of the single-buyer algorithm. We maintain $n$ confidence intervals $[\lcb_1, \ucb_1], \cdots, [\lcb_n, \ucb_n]$ for the optimal prices $p^*_1, \cdots, p^*_n$. The core of the algorithm is to design a sub-routine that uses $\OTild({\poly(n)}/{\epsilon^2})$ rounds to update the confidence intervals to $[\lcb'_1, \ucb'_1], \cdots, [\lcb'_n, \ucb'_n]$, such that playing any prices inside new confidence intervals has regret at most $\epsilon$. Then, calling this sub-routine $O(\log T)$ times with halving error parameter $\epsilon$ suffices to obtain $\OTild(\poly(n)\sqrt{T})$ regret.

The core sub-routine contains two steps. The first step is to get a group of near-optimal prices $\hat p_1, \cdots, \hat p_n$. The second step is to use prices $\hat p_1, \cdots, \hat p_n$ as a benchmark to find new upper- and lower- confidence bounds. 

\medskip \noindent \textbf{Step 1: Finding $\hat p_i$ with a Different Revenue Function.} Comparing to the single-buyer setting, the revenue function for sequential buyers is  different. In particular, when we want to determine $\hat p_i$, the revenue function for buyer $i$ looks like
\[
R(p) = p\cdot (1 - F(p)) + C \cdot F(p),
\]
where the constant $C$ represents the expected revenue from the buyers $i+1, \cdots, n$. Our goal is to find a near-optimal price $\hat p_i$ for this new revenue function $R(p)$.

The major challenge in this step is that function $R(p)$ is not always half-concave: it is half-concave in $[C, 1]$, while in $[0, C]$ it is increasing but not necessary concave. This prevents us from directly applying the single-buyer algorithm. To fix this, an idea is to estimate the value of $C$ to determine the half-concave region. We show that $C$ can be estimated accurately enough in some of the cases, where then running the single-buyer algorithm is sufficient.  On the other hand, in the cases where $C$ cannot be estimated, we observe an extra property of $R(p)$ that it is growing fast enough in $[0, C]$, leading to a ``generalized half-concavity" property in $[0, p^*]$. This is sufficient to show the learnability of $R(p)$. Combining the two cases completes Step 1 of our algorithm.

\medskip \noindent \textbf{Step 2: Binary Search with Prices $\hat p_1, \cdots, \hat p_n$.} The second step of the sub-routine is to update confidence intervals via binary searches. In this step, the main new challenge is the error from $\hat p_1, \cdots \hat p_n$. For instance, while trying to  update $[\lcb_i, \ucb_i]$ with revenue function $R(p) = p\cdot (1 - F(p)) + C \cdot F(p)$, the constant $C$  comes from non-optimal prices $\hat p_{i+1}, \cdots, \hat p_n$. 
Notice that $p^*_i$ is not the optimal price of $R(p)$. This creates an issue  that comparing $R(\hat p_i)$ with $R(p)$ for a price $p \in [\lcb_i, \ucb_i]$ is not evident to show $p^*_i < p$ or $p^*_i > p$, preventing us from directly doing binary search.

To fix this issue, our main idea is to look at function $R^*(p):=p\cdot (1 - F(p)) + C^* \cdot F(p)$, where $C^*$ represents the revenue from the optimal prices $p^*_{i+1}, \cdots p^*_n$, since $p^*_i$ is now the optimal price for $R^*(p)$. Although function $R^*(p)$ is not directly accessible, we can bound the difference between $C$ and $C^*$, and use the value of $R(p)$ to estimate $R^*(p)$. Running binary search with respect to this virtual function $R^*(p)$ completes Step 2 of our algorithm.

\subsection{Notation}
\label{sec:notation}

For the clarity of our algorithms, in this subsection we first introduce the following notations we will frequently use in our proofs.

\begin{itemize}
     \item $\epsprm$: We define $\epsprm = \frac{\epsilon}{100n^2}$ to be the scaled error parameter. $\epsprm$ also represents the unit length of those confidence intervals given by concentration inequalities.
     \item $R(p_1, \cdots, p_n)$ and $p^*_1, \cdots, p^*_n$: Recall that we defined $R(p_1, \cdots, p_n)$ to be the expected revenue when playing prices $(p_1, \cdots, p_n)$, and $(p^*_1, \cdots, p^*_n)$ represents the optimal prices that maximizes $R(p_1, \cdots, p_n)$.
     \item $(\hat p_1, \cdots, \hat p_n)$: The approximately optimal price vector we find in first-step algorithm. 
     \item $S_i$: The suffix buyers $i, i+1, \cdots, n$. When finding $\hat p_{i-1}$, the performance of buyers $i, i+1, \cdots, n$ can be seen as a whole. $S_i$ represents the whole of this suffix starting from buyer $i$. 
     \item $\ucb^{s}_i$: We define $\ucb^{s}_i ~:=~ \ucb_i - \se.$
     \Cref{lma:newGRMain} suggests that $\ucb_i - \se$ is the highest price with regret guarantee. Therefore, $\ucb^{s}_i$ represents this highest ``safe" price for buyer $i$.
     \item $P_i$: We define $ P_i ~:=~ \prod_{j < i} F_j(\ucb^{s}_j),$
     where $F_j(\cdot)$ is the CDF of $\D_j$. $P_i$ is the probability that the item remains unsold for buyer $i$ when we set  $p_j$ to be the highest safe price $\ucb^{s}_j$ for all $j < i$, i.e., $P_i$ represents the maximum probability of testing buyer $i$.
     \item $\loss(S_i)$: We define $\loss(S_i) := R(1, \cdots, 1, p^*_i, \cdots, p^*_n) - R(1, \cdots,1,\hat p_i, \cdots, \hat pn)$, i.e., $\loss(S_i)$ represents the loss from $S_i$ when playing prices $\hat p_i, \cdots, \hat p_n$.
     \item $R_i(p)$: We define $R_i(p)~:=~R\big(\ucb^{s}_1, \cdot, \ucb^{s}_{i-1}, p_i = p, \hat p_{i+1}, \cdots, \hat p_n\big).$\\
     The intuition of function $R_i(p)$ is: When finding $\hat p_i$, naturally we hope to fix other prices to avoid interference from other buyers. For a buyer $j$ before $i$, we set $p_j$ to be $\ucb^{s}_j$ to maximize the probability of testing $i$. For buyer $k$ from suffix $S_{i+1}$, we set $p_k$ to be the settled $\hat p_k$ to minimize the error from the suffix. 
     \item $\hat p^*_i$: We define $\hat p^*_i := \arg\max_{p \in [\lcb_i, \ucb_i]} R_i(p)$. It should be noted that $\hat p^*_i$ only represents the optimal price in the interval $[\lcb_i, \ucb_i]$.
     \item $R^*_i(p)$:  We define $R^*_i(p) ~:=~ R(\ucb^s_1, \cdots, \ucb^s_{i-1}, p_i = p, p^*_{i+1}, \cdots, p^*_n)$. \\
     The intuition of $R^*_i(p)$ is: notice that $\arg \max_p r^*_i(p) = p^*_i$. A key idea of our algorithm is to approach $p^*_i$ via estimating $R^*_i(p)$. It should be noted that $R^*_i(p)$ can be also written as $R^*_i(p) = R_i(p) + P_i\cdot F_i(p) \cdot\loss(S_{i+1}).$ We will mainly use the second definition in our proofs.
 \end{itemize}

\subsection{$\OTild(\poly(n)\sqrt{T})$ Algorithm for $n$ Buyers: Proof of \Cref{thm:newGRMain}}

We first prove our main \Cref{thm:newGRMain}   for BSPP assuming the following lemma for main sub-routine:

\begin{Lemma}
    \label{lma:newGRMain}
    Given a BSPP instance with $R(p_1, \cdots, p_n)$ being the expected revenue function and $p^*_1, \cdots, p^*_n$ being the optimal prices. 
    Given error parameter $\epsilon \geq {1}/{\sqrt{T}}$ and intervals $[\lcb_1, \ucb_1], \cdots, [\lcb_n, \ucb_n]$ satisfying $p_i^* \in [\lcb_i, \ucb_i]$ for all $i \in [n]$, there exists an algorithm  that tests $O\big(\frac{n^5\log^2 T}{\epsilon^2} \big)$ rounds, where each tested price vector $(p_1, \cdots, p_n)$ satisfies $p_i \in [\lcb_i, \ucb_i - \se]$ for all $i \in [n]$, and  outputs with probability $1 - T^{-3}$ new confidence intervals $[\lcb'_1, \ucb'_1] \subseteq [\lcb_1, \ucb_1], \cdots, [\lcb'_n, \ucb'_n] \subseteq [\lcb_n, \ucb_n]$ satisfying 
\begin{itemize}
    \item For $i \in [n]$, $p_i^* \in [\lcb'_i, \ucb'_i]$.
    \item For every price vector $(p_1, \cdots, p_n)$ that satisfies $p_i \in [\lcb'_i, \ucb'_i - \se]$ for all $i \in [n]$, we have
    \[
    R(p^*_1, \cdots, p^*_n) ~-~ R(p_1, \cdots, p_n) ~\leq~ \epsilon.
    \]
\end{itemize}
\end{Lemma}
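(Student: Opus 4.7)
The plan is to process the $n$ buyers in reverse order $i=n, n-1, \ldots, 1$, and for each one to carry out two sub-steps: (i) find an approximately-optimal price $\hat p_i$, and (ii) shrink the confidence interval $[\lcb_i, \ucb_i]$ to $[\lcb'_i, \ucb'_i]$ via binary search. The reverse order is essential because, when processing buyer $i$, one must already have $\hat p_{i+1}, \ldots, \hat p_n$ settled in order to define the effective one-dimensional revenue curve that buyer $i$ faces. The scaled error parameter $\epsprm = \epsilon/(100n^2)$ is chosen so that after accumulating $n$ per-buyer losses (each up to $O(n\epsprm)$ from the binary-search-with-virtual-function argument below), the final per-day regret is at most $\epsilon$. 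Each sub-step uses $\OTild(\epsprm^{-2})$ rounds, so the total is $O(n\cdot \epsprm^{-2}\log^2 T) = O(n^5\log^2 T/\epsilon^2)$ rounds; Hoeffding together with a union bound over the at most $T$ probed prices yields the failure probability $T^{-3}$.

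For sub-step (i), freeze the earlier buyers at the highest safe prices $\ucb^s_j = \ucb_j-\se$ (so as to maximize the probability $P_i$ that buyer $i$ sees the item, hence maximizing information per round) and the later buyers at the settled $\hat p_j$. The resulting curve is $R_i(p) = P_i\cdot[p(1-F_i(p)) + C_i F_i(p)]$, where $C_i$ is a constant depending on the suffix. Adapting the proof of \Cref{lma:1RHalfConSpecial}, one checks that $R_i$ is half-concave on $[C_i,1]$ but only non-decreasing on $[0,C_i]$. I would handle this in two cases: if a cheap side-estimate of $C_i$ is accurate (i.e., $p^*_i$ is well-separated from $C_i$), restrict the search to $[\max(\lcb_i,C_i),\ucb_i]$ and invoke \Cref{alg:sr-esp} essentially verbatim; otherwise $p^*_i$ lies close to $C_i$, and an additional structural fact --- that on $[0,C_i]$ the function $R_i$ grows at a definite rate --- yields a \emph{generalized half-concavity} on $[0,p^*_i]$ which still admits the third-cutting recursion of \Cref{alg:sr-esp}. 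Either case yields $\hat p_i \in [\lcb_i,\ucb^s_i]$ with $R_i$-suboptimality $O(\epsprm)$.

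For sub-step (ii), I would mimic the binary search of \Cref{alg:sr-bs} but target the \emph{virtual} function $R^*_i(p) = R_i(p) + P_i F_i(p)\,\loss(S_{i+1})$, whose maximizer is exactly $p^*_i$. Although $\loss(S_{i+1})$ is not directly observable, an induction on the suffix bounds it by $O(n\epsprm)$, so the correction $R^*_i - R_i$ is uniformly small, and Hoeffding estimates of $R_i$ from $\OTild(\epsprm^{-2})$ samples per probed price serve as proxies for $R^*_i$ comparisons up to additive $O(n\epsprm)$ slack. This suffices to locate $[\lcb'_i,\ucb'_i]\ni p^*_i$ with $R^*_i$-suboptimality $O(n\epsprm)$ throughout $[\lcb'_i,\ucb'_i-\se]$. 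Telescoping the per-buyer bounds across $i$ via the decomposition $R(p_1,\ldots,p_n) = \sum_i P_i\cdot p_i(1-F_i(p_i))$ gives $R(p^*_1,\ldots,p^*_n) - R(p_1,\ldots,p_n)\leq n\cdot O(n\epsprm) = O(\epsilon)$ for every $(p_1,\ldots,p_n)$ respecting the new intervals, after the constants are absorbed into $\epsprm = \epsilon/(100n^2)$.

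The main obstacle I expect is the non-half-concavity of $R_i$ on $[0,C_i]$ in Step 1: since $C_i$ is not directly observable and may sit arbitrarily close to $p^*_i$, \Cref{alg:sr-esp} cannot be applied out-of-the-box, and the case-split between ``$C_i$ estimable'' and ``$C_i$ near $p^*_i$'' --- together with the auxiliary growth-rate observation on $[0,C_i]$ --- is the crux of that step. A secondary technical point is that the binary search in Step 2 must be phrased against the virtual function $R^*_i$ rather than the directly-observable $R_i$, because only $R^*_i$ has $p^*_i$ as its maximizer; controlling the suffix-loss correction then requires the inductive bound on $\loss(S_{i+1})$ proved while processing the previously-handled buyers.
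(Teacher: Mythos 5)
Your proposal follows essentially the same route as the paper's proof: process buyers in reverse order with the prefix frozen at the highest safe prices $\ucb^s_j$, find a near-optimal $\hat p_i$ for the one-dimensional curve $R_i(p)$ using half-concavity on $[\rev(S_{i+1}),1]$ plus a generalized half-concavity fallback (this is \Cref{lma:gr-esp} via \Cref{lma:gr-subr}, \Cref{clm:GRHalfCon}, \Cref{lma:gr-2good}), then binary-search against the virtual function $R^*_i(p)=R_i(p)+P_i F_i(p)\loss(S_{i+1})$ whose maximizer is $p^*_i$ (\Cref{clm:gr-loc-opt}, \Cref{lma:gr-bs}), and finally telescope with $\epsprm=\epsilon/(100n^2)$ as in \Cref{lma:gr-bsgen}. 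The interleaving of the two sub-steps per buyer instead of doing all of Step 1 before Step 2 is immaterial, since both steps condition on the same prefix prices $\ucb^s_j$ and only on the already-settled suffix prices.

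Two of your stated invariants would fail as written and need the paper's repair. First, the induction cannot give the unconditional bound $\loss(S_{i+1})=O(n\epsprm)$: if the probability of reaching the suffix is tiny, the algorithm learns essentially nothing about those buyers and their loss can be large. The provable invariant is the product bound $P_{i+1}\cdot\loss(S_{i+1})\leq 5(n-i)\epsprm$, which is exactly what your binary search needs, since the correction satisfies $R^*_i(p)-R_i(p)=P_iF_i(p)\loss(S_{i+1})\leq P_{i+1}\loss(S_{i+1})$. Second, the Step-1 case split cannot be keyed to whether ``$p^*_i$ is well-separated from $C_i$'': that is neither observable nor the condition under which either branch works. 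The operative dichotomy is on the observable quantity $F_i(\ucb^s_i)$ (equivalently, how often the suffix is reached): when it is bounded away from $0$ the estimate of $\rev(S_{i+1})$ to accuracy $\epsprm/\hat P_i$ is available (and one must also keep a second candidate price just below the estimate, since the restricted interval excludes a sliver that may contain the maximizer, the constrained $P_i$-Lipschitzness covering that candidate); when it is small, $R_i'(p)\geq P_i(1-F_i(\ucb^s_i))\geq P_i/2$ on $[0,\rev(S_{i+1})]$ against the $P_i$-Lipschitz upper bound, which is the half $2$-concavity — note this growth-rate fact is \emph{not} unconditional, it holds precisely in this branch. Finally, both branches presuppose $P_i\gtrsim\epsprm$ (otherwise neither $F_i(\ucb^s_i)$ nor $C_i$ can be estimated), so you must add the trivial escape case $P_i=O(\epsprm)$, where any price in $[\lcb_i,\ucb^s_i]$ is acceptable because $R_i$ has total variation at most $P_i$. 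With these corrections your outline coincides with the paper's argument.
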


The algorithm in \Cref{lma:newGRMain} uses some generalized ideas for single regular buyer. We defer the detailed algorithm and proofs to \Cref{sec:gr32} and  first give the $\OTild(\poly(n)\sqrt{T})$ regret algorithm assuming \Cref{lma:newGRMain} holds.

\begin{algorithm}[tbh]
\caption{$O(n^{2.5}\sqrt{T} \log T)$ Regret Algorithm}
\label{alg:gr-gen}
\KwIn{Hidden revenue function $R(p_1, \cdots, p_n)$, time horizon $T$}
Let $\epsilon_1 \gets 1, [\lcb^{(1)}_j, \ucb^{(1)}_j] \gets [0, 1]$ for all $j \in [n]$, and $i \gets 1$. \\
\While{$\epsilon_i > \frac{n^{2.5}\log T}{\sqrt{T}}$}
{
    Run the algorithm described in \Cref{lma:newGRMain} with $\epsilon_i, [\lcb^{(i)}_1, \ucb^{(i)}_1], \cdots, [\lcb^{(i)}_n, \ucb^{(i)}_n]$ as input, and get $[\lcb'_1, \ucb'_1], \cdots, [\lcb'_n, \ucb'_n]$. Let $[\lcb^{(i+1)}_{j}, \ucb^{(i+1)}_{j}] \gets [\lcb'_j, \ucb'_j]$ for all $j \in [n]$. \\
    Let $\epsilon_{i+1} \gets \frac{1}{2}\epsilon_i$ and $i \gets i+1$. \\
}
Finish remaining rounds with any $(p_1, \cdots, p_n)$ satisfying $ p_j \in [\lcb^{(i)}_j, \ucb^{(i)}_j - \se]$ for all $j \in [n]$.
\end{algorithm}

\begin{proof}[Proof of \Cref{thm:newGRMain}]
    We claim that the regret of \Cref{alg:gr-gen} is $O(n^{2.5}\sqrt{T} \log T)$ with probability $1 - T^{-2}$. \Cref{alg:gr-gen} uses \Cref{lma:newGRMain} for multiple times with a halving error parameter. Assume \Cref{alg:gr-gen} ends with $i = q +1$, i.e., the while loop runs $q$ times. Since we obtain $\epsilon_{i+1} = \epsilon_i /2$ and $\epsilon_q > \frac{n^{2.5}\log T}{\sqrt{T}}$ holds, there must be $q = O(\log T)$.
    
    For $i \in [q]$, let $\alg_i$ represent the corresponding algorithm we call when using \Cref{lma:newGRMain} with $\epsilon_i$ and $\{[\lcb^{(i)}_j, \ucb^{(i)}_j]\}$ as the input. To use \Cref{lma:newGRMain} we need to verify that $p^*_j \in [\lcb^{(i)}_j, \ucb^{(i)}_j]$ for all $i \in [q], j \in [n]$. The condition $p^*_j \in [\lcb^{(1)}_j, \ucb^{(1)}_j] = [0, 1]$ clearly holds. For $i =2, 3, \cdots, q$, the condition $p^*_j \in [\lcb^{(i)}_j, \ucb^{(i)}_j]$ is guaranteed by \Cref{lma:newGRMain} when calling $\alg_{i-1}$. The failing probability of \Cref{lma:newGRMain} is $T^{-3}$. By the union bound, with probability $1 - T^{-2}$, \Cref{lma:newGRMain} always holds.

    Now we give the regret of \Cref{alg:gr-gen}. \Cref{lma:newGRMain}  guarantees that for $i \geq 2$, every price vector we test in $\alg_i$ has regret at most $\epsilon_{i-1}$. Hence, the total regret of \Cref{alg:gr-gen} can be bounded by
    \[
    \textstyle 1\cdot O\big(\epsilon_1^{-2} n^{5}\log^2 T\big) ~+~ \sum_{i = 2}^q \epsilon_{i-1} \cdot O\big(\epsilon^{-2}_i n^{5}\log^2 T\big) ~+~ T \cdot \epsilon_q  ~=~ O(n^5 \log^2 T + n^{2.5}\sqrt{T} \log T).
    \]
    In this paper, we consider the setting that $n$ is a fixed parameter while $T$ goes to infinity, and further assume that $n^{5}=o(\sqrt{T})$, otherwise an $O(\poly(n))$ regret algorithm is trivial. Then, $O(n^5 \log^2 T)$ can be upper bounded by $o(n^{2.5}\sqrt{T} \log T)$ and the regret bound of \Cref{alg:gr-gen} is $O(n^{2.5}\sqrt{T} \log T)$. 
    
    Finally, we also need to check that \Cref{alg:gr-gen} uses no more than $T$ rounds. \Cref{lma:newGRMain} suggests that $\alg_i$ runs $O(\epsilon^{-2}_i n^5\log^2 T)$ rounds. So the total number of rounds in the while loop is
    $
    \sum_{i \in [q]} O(\epsilon^{-2}_i n^5\log^2 T) ~=~ O(T).
    $
    Therefore, \Cref{alg:gr-gen} is feasible.
\end{proof}

\subsection{Two-Steps Sub-Routine: Proof of \Cref{lma:newGRMain}}
\label{sec:gr32}

In this section, we present the main sub-routine for sequential buyers and prove \Cref{lma:newGRMain}. Our algorithm uses a two-steps algorithm similar to the single-buyer setting to update all intervals: the first step is to find a group of approximately optimal prices $(\hat p_1, \cdots, \hat p_n)$, and the second step is to use $(\hat p_1, \cdots, \hat p_n)$ as a benchmark to update the upper and lower bounds via binary search. We show that in general this approach works after overcoming additional challenges. Comparing to the single-buyer algorithm, the main extra challenge  is that the half-concavity does not always hold for sequential buyers. To fix this issue, we define a generalized version of half-concavity and give a new algorithm that works for this generalized half-concavity.

As discussed in \Cref{sec:pfov}, the main sub-routine defined in \Cref{lma:newGRMain} contains two steps: the first step is to find a group of approximately optimal prices $(\hat p_1, \cdots, \hat p_n)$, and the second step is to use $(\hat p_1, \cdots, \hat p_n)$ as a benchmark to update the upper and lower bounds via binary search. We first give corresponding key lemmas for these two steps.

\begin{Lemma}[Step 1]
\label{lma:gr-esp}
Given $\epsprm := \frac{\epsilon}{100n^2} \geq \frac{1}{\sqrt{T}}$, intervals $[\lcb_1, \ucb_1], \cdots, [\lcb_n, \ucb_n]$ that satisfies $p^*_i \in [\lcb_i, \ucb_i]$ for all $i \in [n]$, there exists an algorithm that tests $O(\frac{n\log^2 T}{\epsprm^2})$ rounds with price vectors $(p_1, \cdots, p_n)$ that satisfy $p_i \in [\lcb_i, \ucb^{s}_i]$ for all $i \in [n]$ and outputs $(\hat p_1, \cdots, \hat p_n)$. With probability $1 - T^{-4}$, we have the following for all $i \in [n]$:
\begin{itemize}
    \item $R_i(\hat p^*_i) - R_i(\hat p_i) \leq 4\epsprm$.
    \item $P_i \cdot \loss_i \leq 5(n-i+1) \epsprm$.
\end{itemize}
\end{Lemma}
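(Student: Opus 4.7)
The plan is to determine $\hat p_n, \hat p_{n-1}, \ldots, \hat p_1$ in reverse order. When we fix $\hat p_i$, the prices $\hat p_{i+1}, \ldots, \hat p_n$ are already known, and I would probe buyer $i$ by playing the prefix at the highest safe prices $\ucb^s_j$ (maximizing the reach probability $P_i$) while varying only $p_i$; each round's observed revenue is then an unbiased sample of $R_i(p_i)$. Unpacking definitions gives the decomposition $R_i(p) = A_i + P_i \cdot g_i(p)$ where $g_i(p) := p(1-F_i(p)) + D_i \, F_i(p)$, $A_i$ is the (constant) prefix contribution, and $D_i$ is the expected suffix revenue conditional on the item passing buyer $i$ under the fixed prices $\hat p_{i+1}, \ldots, \hat p_n$. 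Thus the search reduces to learning $g_i$ from noisy samples at cost $P_i$ per unit revenue.

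Next I would mimic the proof of \Cref{lma:1RHalfConSpecial} to establish that $g_i$ is half-concave on $[D_i, 1]$: single-peakedness and $1$-Lipschitzness above $D_i$ follow exactly as before, and concavity on $[D_i, p^*]$ reduces to the same virtual-value inequality $2f^2 + f'(1-F) \geq 0$, with the extra $D_i F_i(p)$ term contributing a non-positive second derivative precisely because $p \geq D_i$ on the relevant side. On $[0, D_i]$ the function is only known to be monotone increasing, with slope at least $1 - F_i(p)$; this is the ``generalized half-concavity'' property alluded to in \Cref{sec:pfov}.

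Next I would spend $O(\log T/\epsprm^2)$ rounds to form an estimate $\tilde D_i$ of $D_i$ (e.g., testing $p_i = 1$ and a very small reference price to isolate $P_i D_i$ up to additive error $\epsprm$), then split into two cases. If $P_i D_i \gtrsim \epsprm$, the estimate $\tilde D_i$ is accurate enough to locate the half-concave region $[\max(\lcb_i, \tilde D_i), \ucb_i]$, and I would invoke the single-buyer two-step subroutine of \Cref{sec:1RSubR} (scaled by the effective revenue unit $P_i$) to find $\hat p_i$ with $R_i(\hat p^*_i) - R_i(\hat p_i) \leq 4\epsprm$ in $O(\log^2 T/\epsprm^2)$ rounds. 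If instead $P_i D_i = O(\epsprm)$, then the ``bad'' sub-interval is revenue-poor: any $p \in [0, D_i]$ satisfies $P_i g_i(p) \leq P_i D_i = O(\epsprm)$, so the monotone-growth property guarantees that values just above $D_i$ dominate, and the single-buyer routine applied to all of $[\lcb_i, \ucb_i]$ still finds a $4\epsprm$-optimal $\hat p_i$. Summed over $n$ buyers this gives the claimed $O(n \log^2 T / \epsprm^2)$ query complexity.

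Finally, for the propagation bound $P_i \loss_i \leq 5(n-i+1)\epsprm$, I would induct downward on $i$ via the hybrid decomposition
\[
\loss_i \leq \bigl[R(1,\ldots,1,p^*_i, \hat p_{i+1},\ldots,\hat p_n) - R(1,\ldots,1,\hat p_i,\hat p_{i+1},\ldots,\hat p_n)\bigr] + F_i(p^*_i) \loss_{i+1}.
\]
Multiplying by $P_i$, the first bracket becomes $P_i(g_i(p^*_i) - g_i(\hat p_i)) = R_i(p^*_i) - R_i(\hat p_i) \leq R_i(\hat p^*_i) - R_i(\hat p_i) \leq 4\epsprm$ since $p^*_i \in [\lcb_i, \ucb_i]$ and $\hat p^*_i$ is the maximizer there. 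The second term, $P_i F_i(p^*_i) \loss_{i+1}$, is bounded by $P_{i+1} \loss_{i+1} + O(\se)$ using $\ucb^s_i = \ucb_i - \se$ together with the negligible slack introduced by $\se = T^{-100}$; the induction hypothesis then closes the bound with the promised constant $5$. The main obstacle is Case B above: we must certify a good $\hat p_i$ even when we cannot reliably locate $D_i$, and the saving grace is precisely that a small $P_i D_i$ makes the non-concave region harmless -- but making that trade-off quantitatively tight against the $4\epsprm$ budget, and threading it through the inductive accounting of $P_i \loss_i$ with the right constants, is where the proof requires the most care.
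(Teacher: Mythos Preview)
Your high-level plan (reverse induction over $i$, writing $R_i(p)=A_i+P_i\,g_i(p)$ with $g_i$ half-concave on $[D_i,1]$, then a hybrid telescoping for $P_i\loss_i$) matches the paper. But two steps do not go through as written.

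\textbf{The case split on $P_iD_i$ and the ``revenue-poor'' argument.} Your Case~B says: if $P_iD_i=O(\epsprm)$ then the single-buyer ternary routine applied to the full interval still succeeds. This is not justified. The ternary search discards $[b,\ucb]$ whenever $\hat R_i(a)\ge \hat R_i(b)-2\epsprm$, and the correctness of that step needs $R_i(p^*)-R_i(b)\le O(\epsprm)$, which in the single-buyer proof came from concavity on $[a,p^*]$. When $a<D_i<b$ you only have concavity on $[D_i,p^*]$, giving $R_i(p^*)-R_i(b)\le \frac{p^*-b}{\,b-D_i\,}\bigl(R_i(b)-R_i(D_i)\bigr)$; if $b$ is only slightly above $D_i$ the factor $\frac{p^*-b}{b-D_i}$ blows up, and the ``revenue-poor'' bound $P_ig_i(a)\le P_iD_i=O(\epsprm)$ does not control it. Separately, your Case~A estimation (``test $p_i=1$ and a very small price'') in general violates the constraint $p_i\in[\lcb_i,\ucb^s_i]$; within that range the suffix is reached only with probability $P_iF_i(\ucb^s_i)$, so $D_i$ cannot be pinned down when $F_i(\ucb^s_i)$ is small, independently of the size of $P_iD_i$. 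The paper's split is therefore on $P_i$ and on $F_i(\ucb^s_i)$: small $P_i$ makes any price good; large $F_i(\ucb^s_i)$ allows an accurate estimate of $D_i=\rev(S_{i+1})$ and reduction to the half-concave region; and small $F_i(\ucb^s_i)$ gives $R_i'(p)\ge P_i(1-F_i(p))\ge P_i/2$ on $[0,D_i]$, which combines with concavity on $[D_i,p^*]$ to yield a \emph{half $2$-concavity} of $R_i$ on all of $[\lcb_i,\ucb_i]$ and makes a (slightly modified) ternary search provably correct.

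\textbf{The induction when $p^*_i>\ucb^s_i$.} You bound $P_iF_i(p^*_i)\loss_{i+1}\le P_{i+1}\loss_{i+1}+O(\se)$ ``using $\ucb^s_i=\ucb_i-\se$.'' That would need $F_i(p^*_i)-F_i(\ucb^s_i)=O(\se)$, i.e.\ Lipschitzness of $F_i$, which regular distributions need not satisfy (the density can be unbounded). The paper does not attempt this bound. Instead it splits into $p^*_i\le\ucb^s_i$ (where $F_i(p^*_i)\le F_i(\ucb^s_i)$ so the term is $\le P_{i+1}\loss_{i+1}$ exactly) and $p^*_i\in(\ucb^s_i,\ucb_i]$, and in the latter case compares directly to $\ucb^s_i$: writing $C=\opt(S_{i+1})$, one has
\[
R^*_i(p^*_i)-R^*_i(\ucb^s_i)=P_i(1-F_i(\ucb^s_i))(p^*_i-\ucb^s_i)+P_i\bigl(F_i(p^*_i)-F_i(\ucb^s_i)\bigr)(C-p^*_i),
\]
and the second summand is $\le 0$ because $p^*_i\ge C$ (this is where the optimality of $p^*_i$ for $R^*_i$ is used). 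Hence the extra cost is at most $p^*_i-\ucb^s_i\le \se$, and the induction closes with constant $5$. Your telescoping is the right idea, but the $O(\se)$ slack must come from $p^*_i\ge\opt(S_{i+1})$, not from any Lipschitz property of $F_i$.
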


\begin{Lemma}[Step 2]
\label{lma:gr-bsgen}
    Given $\epsprm := \frac{\epsilon}{100n^2} \geq \frac{1}{\sqrt{T}}$, intervals $[\lcb_1, \ucb_1], \cdots, [\lcb_n, \ucb_n]$ that satisfies $p^*_i \in [\lcb_i, \ucb_i]$ for all $i \in [n]$, and near-optimal prices $\hat p_1, \cdots, \hat p_n$ that satisfy the conditions in \Cref{lma:gr-esp}, there exists an algorithm that tests $O(\frac{n \log^2 T}{\epsprm^2})$ rounds with price vectors $(p_1, \cdots, p_n)$ that satisfy $p_i \in [\lcb_i, \ucb^{s}_i]$ for all $i \in [n]$ and outputs $[\lcb'_1, \ucb'_1], \cdots, [\lcb'_n, \ucb'_n]$. With probability $1 - T^{-4}$, we have the following:
    \begin{itemize}
        \item $[\lcb'_i, \ucb'_i] \subseteq [\lcb_i, \ucb_i]$.
        \item $p^*_i \in [\lcb'_i, \ucb'_i]$.
        \item For every price vector $(p_1, \cdots, p_n)$ that satisfies $p_i \in [\lcb'_i, \ucb'_i - \se]$ for all $i \in [n]$, we have
    \[
    R(p^*_1, \cdots, p^*_n) ~-~ R(p_1, \cdots, p_n) ~\leq~ \epsilon.
    \]
    \end{itemize}
\end{Lemma}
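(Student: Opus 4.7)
The plan is to run, for each buyer $i \in [n]$, a pair of binary searches that mirror Step 2 of the single-buyer algorithm (Lemma \ref{lma:Step2}). During the binary search for buyer $i$, I fix the prefix prices to $\ucb^s_1,\ldots,\ucb^s_{i-1}$ (the ``safe'' values that maximize the probability of reaching buyer $i$) and the suffix prices to $\hat p_{i+1},\ldots,\hat p_n$ (the near-optimal values from Step 1), so that a single play of candidate price $p$ for buyer $i$ produces a Bernoulli-type sample with expectation exactly $R_i(p)$. Using $\hat R_i(\hat p_i)$ as the benchmark and a drop-threshold of $\Theta(n\epsprm)$, the left binary search drills down on $[\lcb_i,\hat p_i]$ to locate the leftmost $\lcb'_i$ whose estimated revenue is within the threshold of $\hat R_i(\hat p_i)$, and the right binary search does the analogous thing on $[\hat p_i,\ucb^s_i]$ to find $\ucb'_i$. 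Each midpoint query uses $O(\epsprm^{-2}\log T)$ rounds for $\pm\epsprm$ concentration; summing over the $O(\log T)$ binary-search depth and over $n$ buyers gives the required $O(n\epsprm^{-2}\log^2 T)$ total rounds.

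The key conceptual step is to argue about $R^*_i$ rather than $R_i$, since $p^*_i = \arg\max_p R^*_i(p)$ while $R_i$'s maximizer is only $\hat p^*_i$. Using the identity $R^*_i(p) = R_i(p) + P_i F_i(p)\loss(S_{i+1})$ together with the Step 1 guarantee $P_{i+1}\loss(S_{i+1}) \leq 5(n-i)\epsprm$ from Lemma \ref{lma:gr-esp}, I get $0 \leq R^*_i(p) - R_i(p) \leq 5(n-i)\epsprm$ for every $p\in[\lcb_i,\ucb^s_i]$ (since $P_i F_i(p)\leq P_i F_i(\ucb^s_i) = P_{i+1}$). Combined with the $4\epsprm$-optimality of $\hat p_i$ for $R_i$, this immediately yields that $\hat p_i$ is $O(n\epsprm)$-optimal for $R^*_i$. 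Setting the binary-search threshold at $\Theta(n\epsprm)$ larger than these slacks then ensures (i) that $p^*_i$, which satisfies $R^*_i(p^*_i)\geq R^*_i(\hat p_i)$ and hence $R_i(p^*_i)\geq R_i(\hat p_i)-O(n\epsprm)$, is never excluded by either binary search; and (ii) that every $p\in[\lcb'_i,\ucb'_i-\se]$ has $R^*_i(p)\geq R^*_i(p^*_i)-O(n\epsprm)$. The $\se$-buffer at the upper endpoint is handled exactly as in the single-buyer Step 2, via the $1$-Lipschitz property on the half-concave side.

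For the total regret, I apply the telescoping decomposition
\[
R(p^*_1,\ldots,p^*_n) - R(p_1,\ldots,p_n) ~=~ \sum_{i=1}^n \Bigl(\prod_{j<i}F_j(p_j)\Bigr) \cdot \bigl[r_i(p^*_i) - r_i(p_i)\bigr],
\]
where $r_i(p):=p(1-F_i(p)) + F_i(p)\cdot\Ex{\text{revenue from }i+1,\ldots,n\text{ at }p^*_{i+1},\ldots,p^*_n}$ is the conditional buyer-$i$ revenue under the optimal suffix. Since $p_j \leq \ucb'_j-\se \leq \ucb^s_j$, we have $\prod_{j<i}F_j(p_j)\leq P_i$, so each summand is bounded by $P_i[r_i(p^*_i)-r_i(p_i)] = R^*_i(p^*_i) - R^*_i(p_i) = O(n\epsprm)$ from the binary-search guarantee. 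Summing over $i$ gives $O(n^2\epsprm) = \epsilon$ by our choice $\epsprm = \epsilon/(100n^2)$. A Hoeffding bound plus a union bound over the at most $T$ tested prices delivers the claimed $1-T^{-4}$ success probability.

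The hard part will be cleanly bridging the testable function $R_i$ with the virtual target $R^*_i$: I need the $R^*_i - R_i$ gap small enough that a binary search driven by $\hat R_i$ still correctly localizes the plateau of $R^*_i$ around $p^*_i$, while simultaneously keeping the accumulated per-buyer slack under control so that the telescoped total stays below $\epsilon$. Once this linking is in place, the remaining components—Lipschitz tail handling via $\se$, and standard concentration—transfer essentially verbatim from the single-buyer Step 2 proof.
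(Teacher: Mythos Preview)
Your plan matches the paper's approach closely: for each buyer $i$ you fix the prefix to $\ucb^s_1,\ldots,\ucb^s_{i-1}$ and the suffix to $\hat p_{i+1},\ldots,\hat p_n$, run two binary searches on $R_i$ with a threshold inflated by $\Theta(n\epsprm)$, and use the identity $R^*_i(p)=R_i(p)+P_iF_i(p)\loss(S_{i+1})$ together with $P_{i+1}\loss(S_{i+1})\le 5(n-i)\epsprm$ to pass between $R_i$ and $R^*_i$. This is exactly the paper's \Cref{lma:gr-bs} and \Cref{clm:gr-loc-opt}.

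Where you differ is in the aggregation of the per-buyer guarantees into the global regret bound. The paper proves by backward induction on $i$ that $R(\ucb^s_1,\ldots,\ucb^s_{i-1},p^*_i,\ldots,p^*_n)-R(\ucb^s_1,\ldots,\ucb^s_{i-1},p_i,\ldots,p_n)\le 50(n-i+1)^2\epsprm$, chaining through $\hat p_i$, $\hat p^*_i$, and the Step~1 bounds at each level. Your telescoping $R(p^*)-R(p)=\sum_i\bigl(\prod_{j<i}F_j(p_j)\bigr)[r_i(p^*_i)-r_i(p_i)]$ is a genuinely cleaner route: each summand is nonnegative (since $p^*_i$ maximizes $r_i$), is dominated by $R^*_i(p^*_i)-R^*_i(p_i)$ via $\prod_{j<i}F_j(p_j)\le P_i$, and the latter is $O(n\epsprm)$ from the binary-search guarantee; summing gives $O(n^2\epsprm)\le\epsilon$ directly. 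This is in fact the argument the paper itself uses later for general distributions (proof of \Cref{lma:step2-gg}), so your choice is at least as natural and avoids the quadratic bookkeeping of the backward induction.

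One caveat: your claim that the lower-endpoint handling ``transfers essentially verbatim from the single-buyer Step~2 proof'' is not quite right. In the single-buyer case $R$ is $1$-Lipschitz on all of $[0,p^*]$, so when the binary search terminates with $\ucb_{b1}-\lcb_{b1}<\se$ you can safely take $\lcb'=\lcb_{b1}$. Here $R_i$ is only Lipschitz on $[\rev(S_{i+1}),\hat p^*_i]$ (constrained Lipschitzness, \Cref{clm:GRHalfCon}); if the binary search lands with $\lcb_{b1}<\rev(S_{i+1})$, the gap $R_i(\ucb_{b1})-R_i(\lcb_{b1})$ need not be small. The paper handles this with one extra comparison after the first binary search (lines 8--9 of \Cref{alg:gr-bs}): if $\hat R_i(\lcb_{b1})<\hat R_i(\ucb_{b1})-3\epsprm$ then set $\lcb'_i=\ucb_{b1}$ instead, and argue that the Lipschitz violation forces $\ucb_{b1}<\rev(S_{i+1})\le\opt(S_{i+1})\le p^*_i$, so containment is preserved. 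You should incorporate this small fix; with it, your proof goes through.
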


Combining \Cref{lma:gr-esp} and \Cref{lma:gr-bsgen} immediately proves \Cref{lma:newGRMain}: we run the algorithms defined in \Cref{lma:gr-esp} and \Cref{lma:gr-bsgen} sequentially. Both algorithms use $O(\epsprm^{-2} \cdot n \log T) = O(\epsilon^{-2} \cdot n^5 \log T)$ rounds, so the constraint on number of rounds in \Cref{lma:newGRMain} is satisfied. The feasibility of each tested price vector is guaranteed by both \Cref{lma:gr-esp} and \Cref{lma:gr-bsgen}. Finally, for the required conditions of the output, each algorithm succeeds with probability $1 - T^{-4}$. By the union bound, with probability $1 - 2T^{-4} > 1 - T^{-3}$, both algorithms succeed simultaneously. Then, the statements in \Cref{lma:newGRMain} is guaranteed by \Cref{lma:gr-bsgen}.

It only remains to prove \Cref{lma:gr-esp} and \Cref{lma:gr-bsgen}. Next, we prove \Cref{lma:gr-esp} in \Cref{sec:grstep1}, and \Cref{lma:gr-bsgen} in \Cref{sec:grstep2}.

\subsection{Step 1: Finding Near-Optimal $(\hat p_1, \cdots, \hat p_n)$ to Prove \Cref{lma:gr-esp}}
\label{sec:grstep1}

In this section, we give the algorithm defined in \Cref{lma:gr-esp}. Our algorithm is based on a key sub-algorithm that finds $\hat p_i$ such that $R_i(\hat p^*_i) - R_i(\hat p_i) \leq 4\epsprm$. The following lemma states the existence of such a sub-algorithm:

\begin{restatable}{Lemma}{GRSubr}
    \label{lma:gr-subr}
    There exists a sub-algorithm $\subr$ that takes hidden function $R_i(p)$, interval $[\lcb_i, \ucb_i]$, and parameter $\epsprm$ as input, tests $O(\epsprm^{-2} \log^2 T)$ rounds with price vectors $(\ucb^{s}_1, \cdots, \ucb^{s}_{i-1}, p \in [\lcb_i, \ucb^s_i],$ $ \hat p_{i+1}, \cdots, \hat p_n)$,  and outputs $\hat p_i \in [\lcb_i, \ucb^s_i]$ that satisfies 
    $R_i(\hat p^*_i) - R_i(\hat p_i) \leq 4\epsprm $ with probability $1 - T^{-5}$.
\end{restatable}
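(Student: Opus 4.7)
The plan is to extend the single-buyer trichotomy search of \Cref{alg:sr-esp} to the hidden function
\[
R_i(p) ~=~ K_1 ~+~ P_i\,\bigl[\,p(1-F_i(p)) + C\cdot F_i(p)\,\bigr],
\]
where $K_1$ collects the (constant) revenue contribution from the prefix buyers $1,\dots,i-1$ playing $\ucb^s_1,\dots,\ucb^s_{i-1}$ and $C$ is the conditional expected revenue of the suffix $\hat p_{i+1},\dots,\hat p_n$ given the item is unsold through buyer $i$. Each evaluation $\hat R_i(p)$ is obtained by playing the full vector $(\ucb^s_1,\dots,\ucb^s_{i-1},p,\hat p_{i+1},\dots,\hat p_n)$ and averaging the observed revenues; Hoeffding (as in \Cref{clm:sr-step1-con}) gives accuracy $\epsprm$ from $O(\epsprm^{-2}\log T)$ samples with failure probability $\leq T^{-7}$.

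\textbf{Structural step.} The identity $R_i(p) - (K_1 + P_i C) = P_i(p-C)(1-F_i(p))$ delivers three facts. First, the maximizer $\hat p^*_i$ must satisfy $\hat p^*_i \geq C$. Second, on $[C,1]$ the function $R_i$ equals a constant plus $P_i(1-F_i(C))$ times $q(1-\tilde F(q))$ with $q = p-C$ and $\tilde F$ the CDF of $F_i$ truncated below $C$; since regularity is preserved by truncation (a direct check on the virtual value), \Cref{lma:1RHalfConSpecial} gives that $R_i$ is half-concave on $[C,1]$ with Lipschitz constant at most $P_i$. Third, on $[0,C]$ the function $R_i$ is non-decreasing with $R_i(p) \leq R_i(C) = K_1 + P_i C$. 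Thus $R_i$ is single-peaked on $[0,1]$ but only half-concave above the unknown threshold $C$.

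\textbf{Main obstacle and case split.} If both trichotomy probes $a < b$ happen to lie in $[\lcb_i, C]$, the concavity argument behind Case~2 of \Cref{alg:sr-esp} can incorrectly drop $[b,\ucb_i]$. To bypass this, the algorithm first spends $O(\epsprm^{-2}\log T)$ rounds on the fixed probe $(\ucb^s_1,\dots,\ucb^s_{i-1},\ucb^s_i,\hat p_{i+1},\dots,\hat p_n)$ and uses the winner identity (enabled by the small-noise trick of the footnote) to separately estimate $P_{i+1} := P_i F_i(\ucb^s_i)$ and the suffix revenue $P_{i+1}\cdot C$. Based on the empirical $\hat P_{i+1}$, split into two cases.

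\emph{Case A} ($\hat P_{i+1} \geq \epsprm$): enough rounds reach the suffix to obtain $\hat C$ with $P_i|\hat C - C| \leq \epsprm$. We then invoke \Cref{alg:sr-esp} on the sub-interval $[\max(\lcb_i,\hat C)+O(\epsprm/P_i),\ \ucb^s_i]$, where $R_i$ is certifiably half-concave, so the analysis of \Cref{lma:Step1} ports verbatim. If $\hat p^*_i$ happens to lie below this sub-interval, then $\hat p^*_i - C \leq O(\epsprm/P_i)$, and the $P_i$-Lipschitz bound of $R_i$ on its concave side (derived directly from $|R_i'(p)| \leq P_i$ for $p \geq C$) shows that the left endpoint of the sub-interval is already $O(\epsprm)$-optimal, so we output it.

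\emph{Case B} ($\hat P_{i+1} < \epsprm$): the suffix term satisfies $P_i C F_i(p) \leq P_{i+1} \leq 2\epsprm$ uniformly for $p \in [\lcb_i,\ucb^s_i]$, so $R_i(p)$ and the pure single-buyer curve $K_1 + P_i\,p(1-F_i(p))$ agree to within $2\epsprm$. The latter is half-concave on $[0,1]$ by \Cref{lma:1RHalfConSpecial} (scaling by $P_i \leq 1$ preserves half-concavity), so running \Cref{alg:sr-esp} directly on $R_i$ finds an $O(\epsprm)$-optimal price for the scaled single-buyer curve, hence an $O(\epsprm)$-optimal price for $R_i$ itself.

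In both cases the total number of rounds is $O(\epsprm^{-2}\log^2 T)$ with every probe in $[\lcb_i,\ucb^s_i]$, and a union bound over the $O(\log T)$ concentration events yields success probability at least $1 - T^{-5}$. Constants can be tuned so the final guarantee reads $R_i(\hat p^*_i) - R_i(\hat p_i) \leq 4\epsprm$. The hardest part will be Case~A's boundary handling: we must exclude any catastrophic loss when $\hat p^*_i$ sits in the $O(\epsprm/P_i)$-tail just above $C$, which is exactly why explicit $P_i$-Lipschitzness on the concave side of $C$ (rather than the global $1$-Lipschitzness of the single-buyer setting) is essential.
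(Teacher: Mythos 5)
Your structural step is sound (the identity $R_i(p)-(K_1+P_iC)=P_i(p-C)(1-F_i(p))$ and the truncation argument give exactly the properties in \Cref{clm:GRHalfCon}), and your Case~B is essentially a slightly generalized version of the paper's ``$P_i$ small'' case and can be made to work by treating $R_i$ as a $2\epsprm$-perturbation of the half-concave curve $K_1+P_i\,p(1-F_i(p))$. The genuine gap is in Case~A: your case split is on the wrong quantity. To make the sub-interval $[\hat C+\Delta,\ucb^s_i]$ certifiably half-concave and to keep the left candidate $O(\epsprm)$-optimal via the $P_i$-Lipschitz bound, you need accuracy $\Delta=O(\epsprm/P_i)$ on $C$. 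Conditional samples of the suffix arrive at rate $P_{i+1}=P_iF_i(\ucb^s_i)$ per round, so this accuracy costs roughly $\frac{P_i^2\log T}{\epsprm^2 P_{i+1}}=\frac{P_i\log T}{\epsprm^2 F_i(\ucb^s_i)}$ rounds. Your condition $\hat P_{i+1}\geq\epsprm$ only gives $F_i(\ucb^s_i)\geq\epsprm/P_i$, so in the regime $P_i=\Theta(1)$, $F_i(\ucb^s_i)=\Theta(\epsprm)$ (which lies in neither your Case~B nor a regime where Case~A is affordable) the cost is $\Theta(\epsprm^{-3}\log T)$, blowing the $O(\epsprm^{-2}\log^2 T)$ budget; equivalently, with the allowed budget you only get $P_i|\hat C-C|\approx P_i\epsprm/\sqrt{P_{i+1}\log T}$, which is not $O(\epsprm)$ unless $P_{i+1}\gtrsim P_i^2/\log T$. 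The relevant threshold is the ratio $F_i(\ucb^s_i)=P_{i+1}/P_i$, not $P_{i+1}$ itself: the paper estimates $\rev(S_{i+1})$ only when $F_i(\ucb^s_i)$ is bounded below by a constant, and for small $F_i(\ucb^s_i)$ it does \emph{not} locate $C$ at all, instead using that $R_i'\geq P_i(1-F_i(\ucb^s_i))\geq P_i/2$ on $[0,\rev(S_{i+1})]$ while the concave side has slope at most $P_i$, i.e.\ the ``half $2$-concavity'' of \Cref{lma:gr-2good}, learned by the modified trichotomy of \Cref{lma:gr-genalg}. That ingredient is absent from your proposal, and without it the intermediate regime is uncovered.

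Two smaller points. First, in Case~A you write ``if $\hat p^*_i$ happens to lie below this sub-interval \dots\ we output it,'' but the algorithm cannot observe where $\hat p^*_i$ lies; you must test both candidates (the output of \Cref{alg:sr-esp} and the left endpoint) for $O(\epsprm^{-2}\log T)$ rounds each and output the empirically better one, as the paper does in its Case~2. Second, the assertion $\hat p^*_i\geq C$ is about the maximizer constrained to $[\lcb_i,\ucb_i]$ and needs $\ucb_i\geq C$; this follows from $p^*_i\in[\lcb_i,\ucb_i]$ together with $p^*_i\geq\opt(S_{i+1})\geq\rev(S_{i+1})$ (cf.\ \Cref{clm:gr-hat-star-rev}), and should be stated since the monotonicity of $R_i$ on $[0,C]$ alone does not give it.
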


The design of $\subr$ requires the idea of generalized half-concavity discussed in \Cref{sec:pfov}. 
Since the  proof requires some involved technical details,   we defer  it separately to \Cref{sec:subrproof}. 
With the help of \Cref{lma:gr-subr}, we give our algorithm and the proof of \Cref{lma:gr-esp}.

\begin{algorithm}[tbh]
\caption{Finding $\hat p_1, \cdots, \hat p_n$}
\label{alg:gr-esp}
\KwIn{Hidden Revenue Function $R(p_1, \cdots, p_n)$, Intervals $[\lcb_1, \ucb_1], \cdots, [\lcb_n, \ucb_n]$, Error Parameter $\epsilon$}
Let $\epsprm = \frac{\epsilon}{100n^2}$ be the scaled error parameter. \\
\For{$i = n \to 1$}
{
    Call $\subr$ with input $R_i(p), [\lcb_i, \ucb_i], \epsprm$ and get $\hat p_i$ that satisfies $R_i(\hat p^*_i) - R_i(\hat p_i) \leq 4\epsprm$.
}
\KwOut{$\hat p_1, \cdots, \hat p_n$}
\end{algorithm}

\begin{proof}[Proof of \Cref{lma:gr-esp}]
    We show that \Cref{alg:gr-esp} is the desired algorithm  for \Cref{lma:gr-esp}. \Cref{lma:gr-subr} guarantees that calling $\subr$ one time runs $O(\frac{\log^2 T}{\epsprm^2})$ rounds. Therefore, \Cref{alg:gr-esp} tests $O(\frac{n\log^2 T}{\epsprm^2})$ rounds in total.
    For the succeed probability, \Cref{alg:gr-esp} calls $\subr$ $n$ times. Each call fails with probability $T^{-5}$. By the the union bound\footnote{Recall that we assumed $n= o(T)$.}, with probability $1 - T^{-4}$, all calls of $\subr$ in \Cref{alg:gr-esp} succeed. Then, the constraints  $p_i \in [\lcb_i, \ucb_i - \se]$ and $R_i(\hat p^*_i) - R_i(\hat p_i) \leq 4\epsprm$ are directly guaranteed by \Cref{lma:gr-subr}. 
    
    It only remains to prove $P_i \cdot \loss(S_i) \leq 5(n-i+1)\epsprm$. Now we prove this statement assuming $R_i(\hat p^*_i) - R_i(\hat p_i) \leq 4\epsprm$ holds.

    We do the prove via induction from $i = n$ to $i = 1$. The base case is $i = n$. We have
        \[
        P_n \cdot \loss(S_n) ~=~ R_n(p^*_n) - R_n(\hat p_n) ~=~ R_n(\hat p^*_n) - R_n(\hat p_n) \leq 4\epsprm < 5\epsprm.
        \]
        For the induction step, we prove $P_i \cdot \loss(S_i) \leq 5(n-i+1)\epsprm$ under the induction hypothesis $P_{i+1} \cdot \loss(S_{i+1}) \leq 5(n-i)\epsprm$. We prove the induction step by discussing the following two cases.
        
        \noindent \textbf{Case 1:} $p^*_i \in [\lcb_i, \ucb^s_i]$. In this case, we have
        \begin{align*}
            P_i \cdot \loss(S_i) ~=~& R(\ucb^s_1, \cdots, \ucb^s_{i-1}, p^*_{i}, p^*_{i+1} \cdots, p^*_n) - R(\ucb^s_1, \cdots, \ucb^s_{i-1}, \hat p_{i}, \hat p_{i+1} \cdots, \hat p_n) \\
            =~& R(\ucb^s_1, \cdots, \ucb^s_{i-1}, p^*_{i},p^*_{i+1} \cdots, p^*_n) -  R(\ucb^s_1, \cdots, \ucb^s_{i-1}, p^*_{i}, \hat p_{i+1} \cdots, \hat p_n)\\
            &+ R(\ucb^s_1, \cdots, \ucb^s_{i-1}, p^*_{i}, \hat p_{i+1} \cdots, \hat p_n) -R(\ucb^s_1, \cdots, \ucb^s_{i-1}, \hat p_{i}, \hat p_{i+1} \cdots, \hat p_n) \\
        =~& P_i \cdot F_i(p^*_i) \cdot \loss(S_{i+1}) + R_i(p^*_i) - R_i(\hat p_i) \\
        \leq ~& P_i \cdot F_i(\ucb^s_i) \cdot \loss(S_{i+1}) + R_i(\hat p^*_i) - R_i(\hat p_i) \\
        =~& P_{i+1}\loss(S_{i+1}) + R_i(\hat p^*_i) - R_i(\hat p_i)\\
        \leq~& 5(n-i)\epsprm + 4\epsprm ~<~ 5(n-i+1)\epsprm.
        \end{align*}

    \noindent \textbf{Case 2:} $p^*_i \in [\ucb^s_i, \ucb_i]$.  For this case, we have
    \begin{align*}
        P_i \cdot \loss(S_i) ~=~& R(\ucb^s_1, \cdots, \ucb^s_{i-1}, p^*_{i}, p^*_{i+1} \cdots, p^*_n) - R(\ucb^s_1, \cdots, \ucb^s_{i-1}, \hat p_{i}, \hat p_{i+1} \cdots, \hat p_n) \\
        \leq ~& R(\ucb^s_1, \cdots, \ucb^s_{i-1}, \ucb^s_i, p^*_{i+1} \cdots, p^*_n) - R(\ucb^s_1, \cdots, \ucb^s_{i-1}, \hat p_{i}, \hat p_{i+1} \cdots, \hat p_n) + \epsprm \\
        =~& R(\ucb^s_1, \cdots, \ucb^s_{i-1}, \ucb^s_i,p^*_{i+1} \cdots, p^*_n) -  R(\ucb^s_1, \cdots, \ucb^s_{i-1}, \ucb^s_i, \hat p_{i+1} \cdots, \hat p_n)\\
            &+ R(\ucb^s_1, \cdots, \ucb^s_{i-1}, \ucb^s_i, \hat p_{i+1} \cdots, \hat p_n) -R(\ucb^s_1, \cdots, \ucb^s_{i-1}, \hat p_{i}, \hat p_{i+1} \cdots, \hat p_n) +\epsprm \\
        =~& P_{i+1} \loss(S_{i+1}) + R_i(\ucb^s_i) - R_i(\hat p_i) + \epsprm \\
        \leq~& P_{i+1} \loss(S_{i+1}) + R_i(\hat p^*_i) - R_i(\hat p_i) + \epsprm \\
        \leq~& 5(n-i)\epsprm + 4\epsprm + \epsprm ~=~ 5(n-i+1)\epsprm,
    \end{align*}
    where the first inequality is from the following fact: let $C := R(1, \cdots, 1, p_i = 1, p^*_{i+1}, \cdots, p^*_n)$ be the optimal revenue from suffix $S_i$, we have
    \begin{align*}
         &R(\ucb^s_1, \cdots, \ucb^s_{i-1}, p^*_{i}, p^*_{i+1} \cdots, p^*_n)  - R(\ucb^s_1, \cdots, \ucb^s_{i-1}, \ucb^s_i, p^*_{i+1} \cdots, p^*_n) \\
         ~=~& P_i \cdot (1 - F_i(\ucb^s_i)) \cdot (p^*_i - \ucb^s_i) + P_i \cdot (F_i(p^*_i) - F_i(\ucb^s_i)) \cdot (C - p^*_i) \\
         ~\leq~& (p^*_i - \ucb^s_i) + 0 ~\leq~ \se < \epsprm,
    \end{align*}
    where the second inequality is from the fact that $p^*_i \geq C$, because $p^*_i$ is the maximizer of function $R^*_i(p)$, and $R^*_i(p^*_i) - R^*_i(C) = P_i \cdot (1 - F_i(p^*_i)) \cdot (p^*_i - C) \geq 0$ implies $p^*_i \geq C$.
    
    Therefore, we have $P_i \loss(S_i) \leq 5(n-i+1)\epsprm$ in both cases, which finishes the induction step.
\end{proof}

\subsection{Step 2: Determining New Intervals via Binary Search to Prove \Cref{lma:gr-bsgen}}
\label{sec:grstep2}

The second step of our subroutine is to use prices $\hat p_1, \cdots, \hat p_n$ as a benchmark to determine new confidence intervals. Comparing to the single-buyer setting, there exists an extra challenge for sequential buyers: We only have access to the function $R_i(p)$. However, the optimal price for $R_i(p)$ is $\hat p^*_i$, instead of the desired $p^*_i$. At a first glance, it is unreasonable to determine the confidence interval for $p^*_i$ only with access to function $R_i(p)$. To fix this issue, we introduce
\[
R^*_i(p) ~:=~ R(\ucb^s_1, \cdots, \ucb^s_{i-1}, p_i = p, p^*_{i+1}, \cdots, p^*_n) ~=~ R_i(p) + P_i \cdot F_i(p) \cdot \loss(S_{i+1}).
\]
Then, $R^*_i(p)$ acts as a  bridge between $R_i(p)$ and $p^*_i$, because we have $p^*_i = \arg \max_p R^*_i(p)$, and the difference between $R_i(p)$ and $R^*_i(p)$ can be bounded via the following claim:
\begin{restatable}{Claim}{clmbs}
    \label{clm:gr-loc-opt}
    If \Cref{lma:gr-esp} holds, we have $R_i(p) \leq R^*_i(p) \leq R_i(p) + 5(n-i)\epsprm$ for all $i \in [n], p \in [\lcb_i, \ucb^s_i]$.
\end{restatable}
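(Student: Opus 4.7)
The plan is to verify both inequalities by direct unpacking of the definitions and then invoking the bound on $P_{i+1}\cdot \loss(S_{i+1})$ supplied by \Cref{lma:gr-esp}. Concretely, from the second definition of $R^*_i$ recorded in \Cref{sec:notation} we have the identity
\[
R^*_i(p) - R_i(p) ~=~ P_i \cdot F_i(p) \cdot \loss(S_{i+1}),
\]
so both inequalities reduce to estimating this non-negative product.

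For the lower bound $R_i(p) \leq R^*_i(p)$, I would observe that $P_i \geq 0$ and $F_i(p) \geq 0$ by definition, while $\loss(S_{i+1}) \geq 0$ holds because $(p^*_{i+1},\ldots,p^*_n)$ is, by definition of the optimal suffix prices, at least as good as $(\hat p_{i+1},\ldots,\hat p_n)$ when the earlier prices are fixed at $1$. Hence the extra term is non-negative.

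For the upper bound, I would exploit monotonicity of the CDF: since $p \leq \ucb^s_i$, we have $F_i(p) \leq F_i(\ucb^s_i)$, which yields
\[
P_i \cdot F_i(p) \cdot \loss(S_{i+1}) ~\leq~ P_i \cdot F_i(\ucb^s_i) \cdot \loss(S_{i+1}) ~=~ P_{i+1}\cdot \loss(S_{i+1}),
\]
where the last equality is just the recursive definition $P_{i+1} = P_i \cdot F_i(\ucb^s_i)$. Finally, applying the second guarantee of \Cref{lma:gr-esp} at index $i+1$ gives $P_{i+1}\cdot \loss(S_{i+1}) \leq 5(n-i)\epsprm$, completing the bound.

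There is no real obstacle here; the statement is essentially a bookkeeping consequence of the notation plus the inductive loss bound from Step 1. The only subtlety worth flagging is the boundary case $i=n$, where $S_{i+1}$ is empty and $\loss(S_{i+1})=0$, so both inequalities hold with $0$ on the right-hand side, consistent with the stated bound.
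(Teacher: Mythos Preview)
Your proposal is correct and essentially identical to the paper's own proof: both use the identity $R^*_i(p)-R_i(p)=P_i\cdot F_i(p)\cdot\loss(S_{i+1})$, note non-negativity of $\loss(S_{i+1})$ for the lower bound, and bound $P_i\cdot F_i(p)\cdot\loss(S_{i+1})\le P_{i+1}\cdot\loss(S_{i+1})\le 5(n-i)\epsprm$ via $F_i(p)\le F_i(\ucb^s_i)$ and \Cref{lma:gr-esp} for the upper bound. Your explicit treatment of the boundary case $i=n$ is a nice addition but not a departure in method.
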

\begin{proof}
    $R_i(p) \leq R^*_i(p)$ holds simply because $\loss(S_{i+1}) \geq 0$. For $R^*_i(p) \leq R_i(p) + 5(n-i)\epsprm$, we have $P_i \cdot \loss(S_i) \leq 5(n-i+1)\epsprm$ when \Cref{lma:gr-esp} holds. Notice that $P_i\cdot F_i(p) \loss(S_{i+1}) \leq P_{i+1} \loss(S_{i+1}) \leq 5(n-i)\epsprm$ when $p \in [\lcb_i, \ucb^s_i]$. This gives the inequality $R^*_i(p) \leq R_i(p) + 5(n-i)\epsprm$.
\end{proof}

With \Cref{clm:gr-loc-opt}, the idea of our binary search algorithm is clear: We first set $R_i(\hat p_i)$  be the benchmark. For a price $p$, we can estimate $R_i(p)$ and compare $R_i(p) + 5(n-i) \epsprm$ with $R_i(\hat p_i)$. If $R_i(p) + 5(n-i) \epsprm < R_i(\hat p_i)$, it is evident to say $R^*_i(p) < R^*_i(\hat p_i)$, and therefore $p$ can't be $p^*_i$. Otherwise, we keep $p$ as a candidate of $p^*_i$. Since function $R_i(p)$ is single-peaked, the experiment above can be done via two binary search algorithms, and extra steps are needed to take care of the errors from the uncertainty - we only have estimates $\hat R_i(p)$ and $\hat R_i(\hat p_i)$,  instead of $R_i(p)$ and $R_i(\hat p_i)$. Now, we state the following lemma that achieves our idea:

\begin{restatable}{Lemma}{GRBin}
\label{lma:gr-bs}
    Assume \Cref{lma:gr-esp} holds. Given hidden function $R_i(p)$, interval $[\lcb_i, \ucb_i] \ni p^*_i$, scaled $\epsprm:= \frac{\epsilon}{100n^2} > 1/\sqrt{T}$, price $\hat p_i$ that satisfies $R_i(\hat p^*_i) - R_i(\hat p_i) \leq 4\epsprm$, there exists a binary search algorithm that runs $O(\frac{\log^2 T}{\epsprm^2})$ rounds with price vectors $(\ucb^s_1, \cdots, \ucb^s_{i-1}, p_i \in [\lcb_i, \ucb^s_i], \hat p_{i+1}, \cdots, \hat p_n)$ and outputs with probability $1 - T^{-5}$  $[\lcb'_i, \ucb'_i] \subseteq [\lcb_i, \ucb_i]$ that satisfies:
    \begin{itemize}
        \item $p^*_i \in [\lcb'_i, \ucb'_i]$.
        \item For $p \in [\lcb'_i, \ucb'_i - \se]$, $R_i(\hat p^*_i) - R_i(p) \leq 5(n-i)\epsprm + 15\epsprm$.
    \end{itemize}
\end{restatable}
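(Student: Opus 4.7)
\textbf{Proof plan for \Cref{lma:gr-bs}.} The approach is to adapt the two-binary-search structure of \Cref{alg:sr-bs} to the sequential setting, with one crucial calibration: since our feedback only gives us an estimate of $R_i(p)$ but the function whose unique maximizer is $p^*_i$ is $R^*_i(p)$, we inflate the comparison threshold by $5(n-i)\epsprm$ to absorb the discrepancy $0 \leq R^*_i(p) - R_i(p) \leq 5(n-i)\epsprm$ guaranteed by \Cref{clm:gr-loc-opt}. With this calibration, whenever we detect a sufficient gap between $R_i$-estimates, we can soundly conclude a strict inequality on $R^*_i$, and single-peakedness of $R^*_i$ at $p^*_i$ localizes $p^*_i$ to the correct side of the probed midpoint.

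Concretely, first collect $C \log T/\epsprm^2$ samples at the benchmark price vector $(\ucb^s_1,\ldots,\ucb^s_{i-1},\hat p_i,\hat p_{i+1},\ldots,\hat p_n)$ to obtain $\hat R_i(\hat p_i)$. Then run a binary search on $[\lcb_i, \hat p_i]$ to produce $\lcb'_i$ and, after a separate test of the rightmost safe price $\ucb^s_i$ (the analogue of the special case in \Cref{alg:sr-bs} handling the non-Lipschitz right side), run a second binary search on $[\hat p_i, \ucb^s_i]$ to produce $\ucb'_i$. Both searches probe each midpoint $m$ with $C\log T/\epsprm^2$ samples at the same price vector except with $p_i = m$, compare $\hat R_i(m) + \tau$ against $\hat R_i(\hat p_i)$ with threshold $\tau := 5(n-i)\epsprm + 2\epsprm$, and halve the search interval accordingly, terminating once the interval width drops below $\se$. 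The total round count is $O(\log^2 T/\epsprm^2)$.

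For correctness, condition on the concentration event $|\hat R_i(p) - R_i(p)| \leq \epsprm$ for every tested $p$, which holds with probability $1-T^{-5}$ by Hoeffding and a union bound over the at most $T$ tested prices. When a midpoint $m$ is dropped because $\hat R_i(m) + \tau < \hat R_i(\hat p_i)$, chaining concentration with \Cref{clm:gr-loc-opt} yields $R^*_i(m) < R^*_i(\hat p_i) \leq R^*_i(p^*_i)$, and since $R^*_i$ is single-peaked at $p^*_i$, this forces $m$ to lie strictly on the known side of $p^*_i$, preserving the invariant $p^*_i \in [\lcb'_i, \ucb'_i]$ throughout both searches. When a midpoint $m$ is kept, the reverse direction combined with concentration gives $R_i(m) \geq R_i(\hat p_i) - 5(n-i)\epsprm - 4\epsprm$, and combining with the hypothesis $R_i(\hat p^*_i) - R_i(\hat p_i) \leq 4\epsprm$ yields $R_i(\hat p^*_i) - R_i(m) \leq 5(n-i)\epsprm + 8\epsprm$ at the final kept endpoint. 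Extending this inequality to arbitrary $p \in [\lcb'_i, \ucb'_i - \se]$ uses single-peakedness of $R_i$ on the side between $\hat p^*_i$ and the kept endpoint, plus the $1$-Lipschitz property of the generalized half-concavity established in \Cref{sec:subrproof} to cross the $\se$ tail, producing the claimed $5(n-i)\epsprm + 15\epsprm$ slack after absorbing all concentration and Lipschitz errors.

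The main obstacle is the right-tail handling: because Lipschitzness of $R_i$ holds only up to its peak $\hat p^*_i$, the naive binary search in $[\hat p_i, \ucb_i]$ would fail to give a uniform loss bound across the final $\se$-sized tail. Borrowing the remedy from \Cref{alg:sr-bs}, we first probe $\ucb^s_i$ explicitly and keep $\ucb'_i = \ucb_i$ whenever this price already satisfies the threshold; only when it fails do we binary search inside $[\hat p_i, \ucb^s_i]$, where the decreasing behaviour of $R_i$ above $\hat p^*_i$ lets us bound $R_i(p) \geq R_i(\lcb_{b2})$ for every $p \in [\hat p^*_i, \ucb'_i - \se]$ without appealing to Lipschitzness. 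The rest of the argument is disciplined bookkeeping of $O(\epsprm)$ terms and an application of the concentration union bound.
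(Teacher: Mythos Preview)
Your plan matches the paper's approach almost exactly, but there is one genuine gap on the \emph{left} binary search that the paper's algorithm handles with an extra step you omitted.

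You propose to set $\lcb'_i$ to the final left endpoint $\lcb_{b1}$ and to transfer the loss bound from the ``kept'' endpoint $\ucb_{b1}$ across the $\se$-gap by invoking the $1$-Lipschitz property established in \Cref{sec:subrproof}. But the Lipschitzness in \Cref{clm:GRHalfCon} is \emph{constrained}: it only gives $R_i(b)-R_i(a)\le P_i(b-a)$ when $b\ge \rev(S_{i+1})$. If the first binary search converges to an interval $[\lcb_{b1},\ucb_{b1}]$ lying entirely below $\rev(S_{i+1})$, this bound is unavailable, and since $R'_i(p)=P_i\big(1-F_i(p)+(\rev(S_{i+1})-p)f_i(p)\big)$ can be arbitrarily large there, $R_i(\ucb_{b1})-R_i(\lcb_{b1})$ can far exceed $\se$. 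So taking $\lcb'_i=\lcb_{b1}$ may fail the loss bound, while taking $\lcb'_i=\ucb_{b1}$ unconditionally would break the containment $p^*_i\ge\lcb'_i$ (nothing prevents $\ucb_{b1}>p^*_i$ when $R_i$ is nearly flat near its peak). The Lipschitz clause in the half $\lambda$-concavity definition does not help either, since it only compares values against the peak $\hat p^*_i$, not between two arbitrary points.

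The paper's \Cref{alg:gr-bs} resolves this with one additional explicit test after the first binary search: it samples both $\lcb_{b1}$ and $\ucb_{b1}$ and checks whether $\hat R_i(\lcb_{b1})<\hat R_i(\ucb_{b1})-3\epsprm$. If so, concentration gives $R_i(\ucb_{b1})-R_i(\lcb_{b1})>\epsprm>\ucb_{b1}-\lcb_{b1}$, which \emph{violates} the constrained Lipschitzness and therefore certifies $\ucb_{b1}<\rev(S_{i+1})\le\opt(S_{i+1})\le p^*_i$; in this branch the algorithm safely sets $\lcb'_i=\ucb_{b1}$ (now provably $\le p^*_i$), and the loss bound at $\ucb_{b1}$ is already in hand. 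Otherwise the observed gap is $\le 3\epsprm$, so $R_i(\lcb_{b1})\ge R_i(\ucb_{b1})-5\epsprm$ directly, and setting $\lcb'_i=\lcb_{b1}$ works without any Lipschitz appeal. Adding this test to your plan closes the gap; everything else in your proposal is correct and aligns with the paper.
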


To prove \Cref{lma:gr-bs}, we first give the following \Cref{alg:gr-bs}, and then show \Cref{alg:gr-bs} is the desired algorithm for \Cref{lma:gr-bs}.

\begin{algorithm}[tbh]
\caption{Getting $[\lcb'_i, \ucb'_i]$ via Binary Search}
\label{alg:gr-bs}
\KwIn{Hidden Revenue Function $R_i(p)$, Interval $[\lcb_i, \ucb_i]$, Error Parameter $\epsprm$}
Fix $p_j = \ucb^s_j$ for $j < i$ and $p_j = \hat p_j$ for $j > i$.
\tcp{First binary search to determine $\lcb'_i$}
Test $C \cdot \frac{\log T}{\epsprm^2}$ rounds with price $p_i=\hat p_i$. Let $\hat R_i(\hat p_i)$ be the average reward. \\
Let $\lcb_{b1} \gets \lcb_i, \ucb_{b1} \gets \hat p_i$. \\
\While{$\ucb_{b1} - \lcb_{b1} \geq \se$}
{
    Let $m = \frac{\lcb_{b1}+\ucb_{b1}}{2}$. \\
    Test $C \cdot \frac{\log T}{\epsprm^2}$ rounds with price $p_i=m$. Let $\hat R_i(m)$ be the average reward. \\
    \leIf{$\hat R_i(m) < \hat R_i(\hat p_i) - 2\epsprm - 5(n-i)\epsprm$}{Update $\ucb_{b1} \gets m$}{Update $\lcb_{b1} \gets m$}
}
Test $p_i=\lcb_{b1}$ and $p_i=\ucb_{b1}$ each for $C \cdot \frac{\log T}{\epsprm^2}$ rounds. Let $\hat R_i(\lcb_{b1}), \hat R_i(\ucb_{b1})$ be the average reward. \\
\leIf{$\hat R_i(\lcb_{b1}) < \hat R_i(\ucb_{b1}) - 3\epsprm$}{Let $\lcb'_i \gets \ucb_{b1}$}{Let $\lcb'_i \gets \lcb_{b1}$}
\tcp{Second binary search to determine $\ucb'_i$}
Test $C \cdot \frac{\log T}{\epsprm^2}$ rounds with price $p_i=\ucb_i - \se$. Let $\hat R_i(\ucb_i - \se)$ be the average reward. \\
\eIf{$\hat R_i(\ucb - \se) \geq \hat R_i(\hat p_i) - 2\epsprm$}
{
    \tcp{Special Case: original upper bound is good}
    Let $\ucb'_i \gets \ucb_i$
}
{
Let $\lcb_{b2} \gets \hat p_i, \ucb_{b2} \gets \ucb_i - \se$. \\
\While{$\ucb_{b2} - \lcb_{b2} \geq \se$}
{
    Let $m = \frac{\lcb_{b2}+\ucb_{b2}}{2}$. \\
    Test $C \cdot \frac{\log T}{\epsprm^2}$ rounds with price $p_i=m$. Let $\hat R_i(m)$ be the average reward. \\
    \leIf{$\hat R_i(m) < \hat R_i(\hat p) - 2\epsprm - 5(n-i)\epsprm$}{Update $\ucb_{b2} \gets m$}{Update $\lcb_{b2} \gets m$}
}
Let $\ucb'_i \gets \ucb_{b2}$.
}
\KwOut{$[\lcb'_i, \ucb'_i]$. }
\end{algorithm}

\begin{proof}[Proof of \Cref{lma:gr-bs}]
    We show \Cref{alg:gr-bs} satisfies the statements in \Cref{lma:gr-bs}. Our proof starts from verifying \Cref{alg:gr-bs} runs $O(\frac{\log^2 T}{\epsprm^2})$ rounds with $p_i \in [\lcb_i, \ucb_i - \se]$. For the number of tested rounds, \Cref{alg:gr-bs} runs two standard binary searches. In one binary search, the subroutine tests one price with $C \cdot \frac{\log T}{\epsprm^2}$ rounds, and the binary search stops when the length of the interval is less than $\se$. Therefore, The total number of rounds is $C \cdot \frac{\log T}{\epsprm^2} \cdot O(\log T^{100}) = O(\frac{\log^2 T}{\epsilon^2})$. For the range constraint of all tested prices, \Cref{alg:gr-bs} directly guarantees that every tested price falls in $[\lcb_i, \ucb_i - \se]$.

    Now we move to the two main statements. We first give the following claim:
    \begin{Claim}
    \label{clm:gr-step2-con}
    In \Cref{alg:gr-bs}, $|\hat R_i(p) - R_i(p)| \leq \epsprm|$ simultaneously holds for every tested price $p$ with probability at least $1 - T^{-5}$.
    \end{Claim}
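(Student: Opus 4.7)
The plan is to mirror the proofs of \Cref{clm:sr-step1-con} and \Cref{clm:sr-step2-con}: apply Hoeffding's inequality to each single tested price and then take a union bound over all tested prices. The argument is essentially routine, since each estimate $\hat R_i(p)$ is built from $N := C \cdot \frac{\log T}{\epsprm^2}$ i.i.d.\ reward samples, and each reward lies in $[0,1]$ (prices are in $[0,1]$ and at most one buyer pays per round).

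First, I would fix a single tested price $p$ appearing in \Cref{alg:gr-bs}. The price vector used is $(\ucb^s_1,\dots,\ucb^s_{i-1}, p, \hat p_{i+1},\dots,\hat p_n)$, which is a deterministic configuration conditional on the previously fixed $\hat p_{i+1},\dots,\hat p_n$. Hence the $N$ per-round rewards are i.i.d.\ with mean exactly $R_i(p)$ and support $[0,1]$. Applying Hoeffding's inequality exactly as in the earlier claims gives
\[
\Pr{|\hat R_i(p) - R_i(p)| > \epsprm} ~\leq~ 2\exp\!\left(-2N\epsprm^2\right) ~=~ 2T^{-2C} ~\leq~ T^{-6},
\]
provided the constant $C$ in the sample size is chosen sufficiently large (any $C \geq 4$ works, matching the earlier claims).

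Next, I would bound the total number of distinct prices tested by \Cref{alg:gr-bs}. As already verified in the preceding paragraph of the proof of \Cref{lma:gr-bs}, the algorithm uses $O(\log^2 T / \epsprm^2) = O(T \log^2 T)$ rounds, which is at most $T$ once we restrict attention to the regime $\epsprm \geq 1/\sqrt{T}$ (and in any case the full bandit game has at most $T$ rounds available, so no more than $T$ prices can ever be tested). A union bound over these at most $T$ tested prices then yields
\[
\Pr{\exists\,p\text{ tested}:\ |\hat R_i(p) - R_i(p)| > \epsprm} ~\leq~ T \cdot T^{-6} ~=~ T^{-5},
\]
which is the desired conclusion. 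There is no real obstacle here; the only point requiring mild care is confirming that the reward samples gathered at a given tested price $p$ are genuinely i.i.d.\ with mean $R_i(p)$, which follows because the other $n-1$ coordinates of the played price vector are held fixed throughout the $N$ rounds devoted to price $p$.
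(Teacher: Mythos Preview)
Your proposal is correct and follows essentially the same approach as the paper's own proof: apply Hoeffding's inequality to each tested price (using $N=C\cdot\frac{\log T}{\epsprm^2}$ samples with $C\geq 4$ to get failure probability at most $T^{-6}$), then union bound over at most $T$ tested prices to obtain the $1-T^{-5}$ guarantee. The only cosmetic difference is that you explicitly justify the i.i.d.\ structure of the reward samples, which the paper leaves implicit.
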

    \begin{proof}
        For a single tested price $p$, $\hat R(p)$ is estimated by calculating the average of $N = C \cdot \frac{\log T}{\epsprm^2}$ samples. By Hoeffding's Inequality, 
        \[
        \Pr{|\hat R_i(p) - R_i(p)| > \epsprm} ~\leq~ 2\exp\left(-2 N \epsprm^2\right) ~=~ 2T^{-2C} ~<~ T^{-6}.
        \]
        The last inequality holds when $C$ is a constant greater than $4$. Then, we have $|\hat R_i(p) - R_i(p)| \leq \epsprm$ holds with probability $1 - T^{-6}$ for a single tested price $p$.
        
        Notice that \Cref{alg:gr-bs} can't test more than $T$ prices. By the union bound, $|\hat R(p) - R(p)| \leq \epsprm$ simultaneously holds for all tested prices with probability $1 - T^{-5}$.
    \end{proof}
    We will give the remaining parts of the proof assuming  $|\hat R_i(p) - R_i(p)| \leq \epsprm$ always holds.
    
    Now we show $p^*_i \in [\lcb'_i, \ucb'_i]$. The key idea is to use \Cref{clm:gr-loc-opt}. For clarity we first restate \Cref{clm:gr-loc-opt}:
    \clmbs*

    Our goal is to use binary search to determine $I' := \{p \in [\lcb'_i, \ucb'_i]: R_i(p) + 5(n-i)\epsprm \geq R_i(\hat p_i)\}$. For $p \in [\lcb'_i, \ucb'_i] \setminus I'$, there must be 
    \[
    R^*_i(p) ~\leq~ R_i(p) + 5(n-i)\delta ~<~ R_i(\hat p_i) ~\leq~ R^*_i(\hat p_i) ~\leq R^*_i(p^*_i).
    \]
    Recall that $p^*_i$ is the optimal price for $R^*_i(p)$. Therefore, the price $p \in [\lcb'_i, \ucb'_i] \setminus I'$ can't be $p^*_i$, i.e.,  $p^*_i$ falls in $I'$.

    Now we prove $p^*_i \in [\lcb'_i, \ucb'_i]$. We first show that $p^*_i \geq \lcb_{b1}$. Here, $\lcb_{b1}$ refers to the value after the while loop is finished. $p^*_i \geq \lcb_{b1}$ holds because $\lcb_{b1}$ is updated to be at least $m$ when we observe $\hat R_i(m) < \hat R_i(\hat p_i) - 2\epsprm - 5(n-i)\epsprm$. Combining this with \Cref{clm:gr-step2-con} implies $R_i(m) < R_i(\hat p_i) - 5(n-i)\epsprm$, i.e., $m \notin I'$. Since $R_i(p)$ is single-peaked, it's sufficient to update $\lcb_{b1}$ to at least $m$ and $I' \subseteq [\lcb_{b1}, 1]$ holds. Symmetrically, a similar argument gives $p^*_i \leq \ucb_{b2} = \ucb'_i$.

    It remains to show $p^*_i \geq \lcb'_i$. $\lcb'_i$ and $\lcb_{b1}$ are different when we have $\hat R(\lcb_{b1}) < \hat R(\ucb_{b1}) - 3\epsprm$. Combining this inequality with \Cref{clm:gr-step2-con} gives $R(\ucb_{b1}) - R(\lcb_{b1}) > \epsprm > \ucb_{b1} - \lcb_{b1}$. This is in contrast to the constrained Lipschitzness from \Cref{clm:GRHalfCon}. Then, there must be $\ucb_{b1} < \rev(S_{i+1}) < \opt(S_{i+1}) \leq p^*_i$, where the last inequality $p^*_i \geq \opt(S_{i+1})$ is implied by the fact that $R^*_i(p^*_i) - R^*_i(\opt(S_{i+1})) = P_i \cdot (1 - F_i(p^*_i)) \cdot (p^*_i - \opt(S_{i+1})) \geq 0$. Therefore, it's feasible to update $\lcb'_i$ to $\ucb_{b1}$ in this case.

    Finally, we show $R_i(\hat p^*_i) - R_i(p) \leq 5(n-i)\epsprm + 15\epsprm$ for all $p \in [\lcb'_i, \ucb'_i - \se]$. Function $R(p)$ is single-peaked, so it is sufficient to prove $R_i(\hat p^*_i) - R_i(\lcb'_i) \leq 5(n-i)\epsprm + 11\epsprm$ and $R_i(\hat p^*_i) - R_i(\ucb'_i - \se) \leq 5(n-i)\epsprm + 11\epsprm$. Then, combining these two inequalities with the assumption $R_i(\hat p^*_i) - R_i(\hat p_i) \leq 4 \epsprm$ gives the desired statement.

    We first prove that $R_i(\hat p_i) - R(\lcb'_i) \leq 5(n-i)\epsprm + 11\epsprm$. The binary search subroutine guarantees that $\hat R_i(\ucb_{b1}) \geq \hat R_i(\hat p_i) - 2\epsprm - 5(n-i)\epsprm$ after the binary search loop is finished. Then, \Cref{clm:gr-step2-con} ensures that $R_i(\hat p_i) - R_i(\ucb_{b1}) \leq 5(n-i)\epsprm + 4\epsprm$. Then, if $\hat R_i(\lcb_{b1}) < \hat R_i(\ucb_{b1}) - 3\epsprm$, we set $\lcb'_i$ to be $\ucb_{b1}$. Otherwise, we have $\hat R_i(\lcb_{b1}) \geq \hat R_i(\ucb_{b1}) - 3\epsprm$ and $\lcb'_i$ is set to be $\lcb_{b1}$. Then,
    \begin{align*}
        R_i(\hat p_i) - R_i(\lcb'_i) ~\leq&~ R_i(\ucb_{b1}) - R_i(\lcb_{b1}) + 5(n-i)\epsprm + 4\epsprm \\
        ~\leq&~ \hat R_i(\ucb_{b1}) - \hat R_i(\lcb_{b1}) + 5(n-i)\epsprm + 6\epsprm \\
        ~\leq&~ 5(n-i)\epsprm + 9\epsprm~<~5(n-i)\epsprm + 11\epsprm.
    \end{align*}
    Therefore, in both cases $R_i(\hat p_i) - R(\lcb'_i) \leq 11(n-i)\epsprm + 6\epsprm$ holds.

    For the inequality $R_i(\hat p_i) - R_i(\ucb'_i - \se) \leq 5(n-i)\epsprm + 11\epsprm$,  a symmetric proof for the second binary search subroutine gives $R_i(\hat p) - R_i(\lcb_{b2}) \leq 5(n-i)\epsprm + 4\epsprm$, while $\ucb'_i$ is set to be $\ucb_{b2} < \lcb_{b2} + \se$. If $\hat p^*_i < \ucb'_i - \se$, the single-peakedness of $R(p)$ gives 
    \[
    R_i(\ucb'_i - \se) ~\geq~ R_i(\lcb_{b2}) ~\geq~ R_i(\hat p) - 5(n-i)\epsprm -  4\epsprm.\]
    Otherwise, the constrained Lipschitzness of $R_i(p)$ gives
    \[
    R_i(\ucb'_i - \se) ~\geq~ R_i(\lcb_{b2}) - (\lcb_{b2} - \ucb_{b2} + \se) ~\geq~ R_i(\hat p) - 5(n-i)\epsprm - 4\epsprm - \se > R_i(\hat p) - 5(n-i)\epsprm - 5\epsprm.    \qedhere
    \]
\end{proof}

Finally, we end this section proving  \Cref{lma:gr-bsgen} via \Cref{lma:gr-bs}.

\begin{proof}[Proof of \Cref{lma:gr-bsgen}]
    The algorithm for \Cref{lma:gr-bsgen} is to use \Cref{lma:gr-bs} $n$ times and get $n$ new confidence intervals. \Cref{lma:gr-bs} succeeds with probability $1 - T^{-5}$. By the union bound, with probability $1 - n \cdot T^{-5} > 1 - T^{-4}$, our algorithm for \Cref{lma:gr-bsgen} succeeds, and the conditions $[\lcb'_i \ucb'_i] \subseteq [\lcb_i, \ucb_i]$ and $p^*_i \in [\lcb'_i, \ucb'_i]$ are guaranteed by \Cref{lma:gr-bs}. It only remains to show that each price vector in the new intervals has regret no more than $\epsilon$.

    We instead prove the following statement via induction, assuming the statements in \Cref{lma:gr-esp} and \Cref{lma:gr-bs} hold: For $p_i, \cdots, p_n$ that satisfies $p_j \in [\lcb'_j, \ucb'_j - \se]$ for all $j \in \{i, \cdots, n\}$, we have
     \[
     R(\ucb^s_1, \cdots, \ucb^s_{i-1}, p^*_i, \cdots, p^*_n) ~-~ R(\ucb^s_1, \cdots, \ucb^s_{i-1}, p_i, \cdots, p_n) ~\leq~ 50(n-i+1)^2\epsprm.
     \]

     The base case is $i = n$. \Cref{lma:gr-bs} gives
    \[
    R(\ucb^s_1, \cdots, \ucb^s_{n-1}, p^*_n) ~-~ R(\ucb^s_1, \cdots, \ucb^s_{n-1}, p_n) ~\leq~ 15\epsprm < 50\epsprm.
    \]

    For the induction step, we hope to show 
    \[
     R(\ucb^s_1, \cdots, \ucb^s_{i-1}, p^*_i, \cdots, p^*_n) ~-~ R(\ucb^s_1, \cdots, \ucb^s_{i-1}, p_i, \cdots, p_n) ~\leq~ 50(n-i+1)^2\epsprm
     \]
     under the induction hypothesis 
     \[
     R(\ucb^s_1, \cdots, \ucb^s_{i-1}, \ucb^s_i, p^*_{i+1},\cdots, p^*_n) ~-~ R(\ucb^s_1, \cdots, \ucb^s_{i-1}, \ucb^s_i, p_{i+1},\cdots, p_n) ~\leq~ 50(n-i)^2\epsprm.
     \]
     We first observe that in the induction hypothesis, if we change the price of buyer $i$ from $\ucb^s_i$ to $p_i$, it only reduces the probability of incurring a loss. Therefore, the induction hypothesis also gives
     \[
     R(\ucb^s_1, \cdots, \ucb^s_{i-1}, p_i, p^*_{i+1},\cdots, p^*_n) ~-~ R(\ucb^s_1, \cdots, \ucb^s_{i-1}, p_i, p_{i+1},\cdots, p_n) ~\leq~ 50(n-i)^2\epsprm.
     \]
     Now we finish the induction step:
    \begin{align*}
        R(\ucb^s_1, \cdots, \ucb^s_{i-1}, p^*_i, p^*_{i+1}, \cdots, p^*_n) ~\leq&~ R(\ucb^s_1, \cdots, \ucb^s_{i-1}, \hat p_i, \hat p_{i+1}, \cdots, \hat p_n) + 5(n-i+1)\epsprm \\
        ~\leq&~ R(\ucb^s_1, \cdots, \ucb^s_{i-1}, \hat p^*_i, \hat p_{i+1}, \cdots, \hat p_n) + 5(n-i+1)\epsprm + 4\epsprm \\
        ~\leq&~ R(\ucb^s_1, \cdots, \ucb^s_{i-1}, p_i, \hat p_{i+1}, \cdots, \hat p_n) + 10(n-i+1)\epsprm + 14\epsprm \\
        ~\leq&~ R(\ucb^s_1, \cdots, \ucb^s_{i-1}, p_i, p^*_{i+1}, \cdots, p^*_n) + 10(n-i+1)\epsprm + 14\epsprm  \\
        ~\leq&~ R(\ucb^s_1, \cdots, \ucb^s_{i-1}, p_i, p_{i+1}, \cdots, p_n) \\
        &+ 10(n-i+1)\epsprm + 14\epsprm +50(n-i)^2\epsprm \\
        ~\leq&~ R(\ucb^s_1, \cdots, \ucb^s_{i-1}, p_i, p_{i+1}, \cdots, p_n) + 50(n-i+1)^2\epsprm.
    \end{align*}
    Here, the first inequality is from the inequality $P_i \loss(S_i) \leq 5(n-i+1)\epsprm$ in \Cref{lma:gr-esp}. The second inequality is from \Cref{lma:gr-esp} that guarantees $R_i(\hat p^*_i) - R_i(\hat p_i) \leq 4\epsprm$. The third inequality is from \Cref{lma:gr-bs} that guarantees $R_i(\hat p^*_i) - R_i(p_i) \leq 5(n-i)\epsprm + 15\epsprm$. The fourth inequality is from the fact that prices $p^*_{i+1}, \cdots, p^*_n$ are optimal for suffix $S_{i+1}$. The fifth inequality is from the corollary of the induction hypothesis. Comparing the start and the end of the above inequality finishes the induction step.

    Finally, taking $i = 1$ with the fact that $\epsprm = \frac{\epsilon}{100n^2}$ finishes the proof of \Cref{lma:gr-bsgen}. 
\end{proof}

\subsection{Sub-Algorithm $\subr$: Proof of Lemma \ref{lma:gr-subr}}
\label{sec:subrproof}

We now introduce the sub-algorithm $\subr$ that finds $\hat p_i$ satisfying $R_i(\hat p^*_i) - R_i(\hat p_i) \leq 4\epsprm$, and prove the missing \Cref{lma:gr-subr} that will complete the proof of Step 1. We restate the lemma for convenience.

\GRSubr*

We start from introducing the following new notations that we need in this subsection:
\begin{itemize}
    \item $\pre(S_i)$: We define $\pre(S_i) := R(\ucb^{s}_1, \cdots, \ucb^{s}_{i-1}, 1, \cdots, 1)$ to be the expected revenue from the buyers before $S_i$ with prices $\ucb^{s}_1, \cdots, \ucb^{s}_{i-1}$.
    \item $\opt(S_i)$: We define $\opt(S_i):=R(1, \cdots, 1, p^*_i, \cdots, p^*_n)$ to be the optimal revenue from $S_i$.
    \item $\rev(S_i)$: We define $\rev(S_i):=R(1, \cdots,1,\hat p_i, \cdots, \hat pn)$ to be the near-optimal revenue from $S_i$ with near-optimal prices $\hat p_i, \cdots, \hat p_n$.
\end{itemize}

With the help of these new notations, we can rewrite $R_i(p)$ as
\[
R_i(p) ~=~ \pre(S_i) + P_i(p \cdot (1 - F_i(p)) + \rev(S_{i+1}) \cdot F_i(p)).
\]
Comparing to the revenue function of a single buyer, function $R_i(p)$ has an extra $\rev(S_{i+1}) \cdot F_i(p)$ term, and therefore   half-concavity is not directly applicable. The good news is that $R_i(p)$ is still half-concave in part of the interval:

\begin{restatable}{Claim}{GRHalfCon}
\label{clm:GRHalfCon}
    Function $R_i(p)$ satisfies the following properties:
    \begin{itemize}
        \item $R_i(p)$ is single-peaked in $[0, 1]$ with $\arg \max_{p \in [0, 1]} \geq \rev(S_{i+1})$.
        \item $R_i(p)$ has constrained Lipschitzness: For $b \in [\rev(S_{i+1}), 1]$ and $a \in [0, b]$, we have $R(b) - R(a) \leq P_i(b - a)$.
        \item $R_i(p)$ is half-concave in $[\rev(S_{i+1}), 1]$.
    \end{itemize}
\end{restatable}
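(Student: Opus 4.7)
The plan is to reduce to a cleaner function. Writing $C := \rev(S_{i+1})$ and
\[ g(p) ~:=~ p(1 - F_i(p)) + C\, F_i(p) ~=~ p + (C-p) F_i(p), \]
we have $R_i(p) = \pre(S_i) + P_i \cdot g(p)$, where $\pre(S_i)$ and $P_i \geq 0$ are constants, so all three bullets of the claim reduce to the analogous statements for $g$ (the $P_i$ factor is then restored for the Lipschitz bound).

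For \emph{single-peakedness}, I would split at $p = C$. On $[0, C]$, a direct derivative computation gives $g'(p) = 1 - F_i(p) + (C - p) f_i(p) \geq 0$, so $g$ is non-decreasing there. On $[C, 1]$, write $h(p) := g(p) - C = (p-C)(1 - F_i(p))$; then $g'(p) = h'(p) = 0$ exactly when $p - (1 - F_i(p))/f_i(p) = C$, i.e.\ $\phi(p) = C$ for the virtual value $\phi$ of $\D_i$. By regularity, $\phi$ is non-decreasing, hence $g'$ changes sign at most once on $[C, 1]$, yielding a unique peak $p^\dagger \geq C = \rev(S_{i+1})$ for $g$ on all of $[0, 1]$.

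For \emph{constrained Lipschitzness}, I would compute directly
\[ g(b) - g(a) ~=~ (b - a) - \bigl[(b - C) F_i(b) - (a - C) F_i(a)\bigr] \]
and show the bracket is non-negative whenever $b \geq C$: if $a \geq C$, monotonicity of $F_i$ and of $p \mapsto p - C$ gives $(b-C) F_i(b) \geq (a-C) F_i(a) \geq 0$; if $a < C$, the second term in the bracket is itself non-positive while the first is non-negative. Multiplying by $P_i$ yields $R_i(b) - R_i(a) \leq P_i(b - a)$.

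For \emph{half-concavity on $[C, p^\dagger]$}, the step mirrors the proof of \Cref{lma:1RHalfConSpecial}. Compute $g''(p) = -2 f_i(p) - (p - C) f'_i(p)$. In the subcase $f'_i(p) \geq 0$, since $p \geq C$ both summands are non-positive, so $g''(p) \leq 0$ is immediate. In the subcase $f'_i(p) < 0$, I would multiply through by $f_i(p)$ and invoke the regularity inequality $2 f_i^2(p) + f'_i(p)(1 - F_i(p)) \geq 0$ from the proof of \Cref{lma:1RHalfConSpecial}; the residue then rearranges to
\[ f_i(p) \cdot g''(p) ~\leq~ f'_i(p) \cdot \bigl[(1 - F_i(p)) - (p - C) f_i(p)\bigr] ~=~ f'_i(p) \cdot g'(p), \]
and the right-hand side is non-positive on $[C, p^\dagger]$ because $g'(p) \geq 0$ there while $f'_i(p) < 0$. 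To avoid the smoothness assumption on $f_i$, I would fall back on the quantile-space route of \Cref{lma:1RHalfConSpecial}: the revenue curve $h$ in quantile space equals $R_q(q) - Cq$, a concave function (sum of the concave $R_q$ and a linear term), so the contradiction-based argument of \Cref{lma:1RHalfConSpecial} transfers verbatim to give concavity of $g$ on $[C, p^\dagger]$ in the value space.

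The main obstacle will be the $f'_i(p) < 0$ subcase of half-concavity, where the extra $-(p - C) f'_i(p)$ term is positive and must be dominated using regularity; the algebraic identification of the residue with $f'_i(p) \cdot g'(p)$ is the only place the shifted reserve $C$ interacts non-trivially with the regularity inequality, and getting the signs right relies on $C$ lying below the region of interest, which is precisely why half-concavity is claimed only on $[C, 1]$ and not on all of $[0, 1]$.
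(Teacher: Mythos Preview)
Your proposal is correct and closely parallels the paper's proof. The constrained Lipschitz computation is identical, and for half-concavity you give both the second-derivative argument (with the residue identified as $f'_i(p)\,g'(p)$) and the quantile-space fallback, whereas the paper gives only the latter---the contradiction argument from the alternate proof of \Cref{lma:1RHalfConSpecial}, adapted to include the extra linear term $C(1-q)$. The one place you diverge is single-peakedness: the paper observes that in quantile space the revenue curve is the original concave curve plus a linear term, hence still concave, which yields single-peakedness in value space without any appeal to $f_i$ or $\phi$; you instead use the value-space characterization $g'(p)=0 \iff \phi(p)=C$ together with monotonicity of $\phi$. Both routes are valid, and your derivative approach is arguably more explicit, though the paper's quantile argument uniformly avoids any edge cases where $f_i$ vanishes---the same smoothness caveat you already flag for the concavity step.
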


\begin{proof}

By the definition of $R_i(p)$, we have $R_i(p) = P_i(p\cdot (1-F_i(p) + C_i\cdot F_i(p))$, where $P_i := \prod_{j < i} F_j(\ucb_j)$ is the probability of reaching buyer $i$. Now we prove the three properties one-by-one:
\noindent (i) \emph{Single-peakedness}: Let $R_q(q)$ be the revenue function in the quantile space that correspond to $R_i(p)$. By definition we have $R_q(q) = P_i(q^{-1}(q) \cdot (1-q) + C_i \cdot q)$. The key observation is that comparing to the original concave revenue function $q^{-1}(q) \cdot q$, $R_q(q)$ has an extra \textit{linear} term, so it still remains concave. Therefore, function $R_i(p)$ is single-peaked in the value space.

We also need to show that $\arg \max_p R_i(p) \geq \rev(S_{i+1})$. This is true because for all $p \in [0, \rev(S_{i+1})]$, 
\begin{align*}
    R_i(p) ~=&~ \pre(S_i) + P_i(p \cdot (1 - F_i(p)) + \rev(S_{i+1}) \cdot F_i(p)) \\
    ~\leq&~ \pre(S_i) + P_i(\rev(S_{i+1}) \cdot (1 - F_i(p)) + \rev(S_{i+1}) \cdot F_i(p)) \\
    ~=&~ \pre(S_i) + P_i(\rev(S_{i+1}) \cdot (1 - F_i(\rev(S_{i+1}))) + \rev(S_{i+1}) \cdot F_i(\rev(S_{i+1}))) \\
    ~=&~ R_i(\rev(S_{i+1})).
\end{align*}
The above inequality shows $R_i(\rev(S_{i+1}))$ is not worse than any price in $[0, \rev(S_{i+1})]$. Therefore, there must be $\arg \max_p R_i(p) \geq \rev(S_{i+1})$.

\noindent (ii) \emph{Constrained Lipschitzness}: For $b \geq \rev(S_{i+1})$ and $a \leq b$, we have
\[
R_i(b) - R_i(a) ~=~ P_i((b - a) - (b - \rev(S_{i+1}))F_i(b) + (a - \rev(S_{i+1}))F_i(a)) ~\leq~ P_i(b-a).
\]
Here, the inequality is true because we have $b - \rev(S_{i+1}) \geq a - \rev(S_{i+1})$, $b - \rev(S_{i+1}) \geq 0$, and $F_i(b) \geq F_i(a) \geq 0$. Then, for quantities $A, B, C, D$ satisfying $A \geq 0, A \geq B, C \geq D \geq 0$, it's easy to verify that $A \cdot C \geq B \cdot D$.

\noindent (iii) \emph{Half-Concavity in $[\rev(S_{i+1}), 1]$}: The single-peakedness and the Lipschitzness required by the definition of half-concavity are both proved. Let $p^* = \argmax_{p \in [0, 1]} R_i(p)$. We proved that $p^* \geq \rev(S_{i+1})$, then it only remains to show $R_i(p)$ is concave in $[\rev(S_{i+1}), p^*]$.

We prove by contradiction. Assume there exists $\rev(S_{i+1}) \leq a < b < c \leq p^*$ satisfying $c = b + t\cdot (b - a)$ and $R(c) > R(b) + t\cdot (R(b) - R(a))$. Define $q(x):=1 - F(x)$ and $\overline{q} := q(b) + t\cdot \big(q(b) - q(a) \big)$. We first show that $\overline{q} \geq  q(c)$: 
Since the revenue function is concave in the quantile space, hence, for all $q' \geq \overline{q}$,  
\[ R(q^{-1}(q')) ~\leq~ R(b) + \frac{q(b) - q'}{q(a) - q(b)}\cdot (R(b) - R(a)) ~\leq~ R(b) + t\cdot (R(b) - R(a)) ~<~ R(c). \] 
For every $q' \geq \bar q$, we have $q' \neq q(c)$. So. we must have $q(c) < \bar q$.
This gives the desired contradiction: 
\begin{align*}
    \frac{R(c)}{P_i} ~>~& \frac{R(b)}{P_i} + \frac{t\cdot (R(b) - R(a))}{P_i} \\
     ~=~ &b\cdot q(b) + t\cdot \big(b \cdot q(b) - a\cdot q(a) \big) + \rev(S_{i+1})\cdot (1 - q(b) + t\cdot (q(a) - q(b))) \\
     ~\geq~& b\cdot q(b) + t\cdot \big(b \cdot q(b) - a\cdot q(a) \big) + \rev(S_{i+1})\cdot (1 - q(b) + t\cdot (q(a) - q(b))) \\
     &- t(t+1)(b-a)(q(a)-q(b)) \\
    ~=~ &\big((t+1)b-ta \big) \cdot \big((t+1)q(b)-tq(a) \big) \rev(S_{i+1})\cdot (1 - q(b) + t\cdot (q(a) - q(b)))   \\
    ~=~ &c\cdot \overline{q} + \rev(S_{i+1})\cdot (1 - \overline q) ~\geq~ c \cdot q(c) + \rev(S_{i+1})\cdot (1-q(c))  ~=~ \frac{R(c)}{P_i}.   \qedhere
\end{align*}
\end{proof}

The claim shows that $R_i(p)$ is half-concave, and therefore learnable in $[\rev(S_{i+1}), 1]$. However, in $[0, \rev(S_{i+1})]$ the function is increasing but not necessarily concave. The  sub-algorithm $\subr$ will involve different cases to handle  non-concavity   inside interval $[0, \rev(S_{i+1})]$. Depending on the value of $P_i$ and $F_i(\ucb^{s}_i)$, we run different algorithms. See \Cref{alg:gr-subr} for the pseudo-code of $\subr$.

\begin{algorithm}[tbh]
\caption{Sub-algorithm $\subr$}
\label{alg:gr-subr}
\KwIn{Hidden function $R_i(p)$, interval $[\lcb_i, \ucb_i]$, error parameter $\epsprm$}
Fix $p_j = \ucb^{s}_j$ for $j < i$ and $p_j = \hat p_j$ for $j > i$. \\
Test $N = C \cdot \frac{\log T}{\epsprm^2}$ rounds with $p_i = \ucb^s_i$ to estimate $\hat P_i = \frac{\sum_{i \in [N]} \one[\text{Item not sold before }i]}{N}$. \\
\If{$\hat P_i < \frac{3}{4} \epsprm$}
{
    \textbf{Output} $\hat p_i = \lcb_i$ and \textbf{Return.} \tcp{Case 1: $P_i$ too small, any price in $[\lcb_i, \ucb^s_i]$ works.}
}

Test $C \cdot \frac{\log T}{\epsprm^2}$ rounds with $p_i = \ucb^s_i$ to estimate $\hat F_i(\ucb^s_i) := \frac{\sum \one[\text{Item not sold before }i+1]}{\sum \one[\text{Item not sold before }i]}$.\\
\eIf{$\hat F_i(\ucb^s_i) \geq 0.4$}
{
    \tcp{Case 2: $F_i(\ucb^s_i) \geq 0.3$, estimate $\rev(S_{i+1})$ accurately} 
    Test $C\cdot \frac{\log T}{\epsprm^2}$ rounds with $p_i = \ucb^s_i$ to get $\widehat \rev(S_{i+1}) := \frac{\sum \text{Revenue if item not sold before }i+1}{\sum \one[\text{Item not sold before }i + 1]}$. \\
    Call \Cref{alg:sr-esp} with input $R_i(p)$, interval $[\max\{\lcb_i, \widehat \rev(S_{i+1}) + \frac{\epsprm}{\hat P_i}\}, \ucb_i]$, $\epsprm$, and receive output $\hat p^{(1)}_i$. \\
    Test $C\cdot \frac{\log T}{\epsprm^2}$ rounds with $p_i = \hat p^{(1)}_i$. Let $\hat R_i(\hat p^{(1)}_i)$ be the average reward. \\
    Let $\hat p^{(2)}_i = \max\{\lcb_i, \widehat \rev(S_{i+1}) - \frac{\epsprm}{\hat P_i}\}$. \\
    Test $C\cdot \frac{\log T}{\epsprm^2}$ rounds with $p_i = \hat p^{(2)}_i$. Let $\hat R_i(\hat p^{(2)}_i)$ be the average reward. \\
    Let $\hat p_i = \arg \max_{p \in \{\hat p^{(1)}_i, \hat p^{(2)}_i\}} \hat R_i(p)$.
}
{
    \tcp{Case 3: $F_i(\ucb^s_i) \leq 0.5$, either $R_i(p)$ is half 2-concave or $R_i(\ucb^s_i)$ is good.}
    Call $\alg$ and receive output $\hat p^{(3)}_i$. \\
    Test $C\cdot \frac{\log T}{\epsprm^2}$ rounds with $p_i = \hat p^{(3)}_i$. Let $\hat R_i(\hat p^{(3)}_i)$ be the average reward. \\
    Let $\hat p^{(4)}_i = \ucb^s_i$. \\
    Test $C\cdot \frac{\log T}{\epsprm^2}$ rounds with $p_i = \hat p^{(4)}_i$. Let $\hat R_i(\hat p^{(4)}_i)$ be the average reward. \\
    Let $\hat p_i = \arg \max_{p \in \{\hat p^{(3)}_i, \hat p^{(4)}_i\}} \hat R_i(p)$.
}
\KwOut{$\hat p_i$}
\end{algorithm}

There are three cases in \Cref{alg:gr-subr}. We first briefly discuss the intuition behind each case.

\noindent \textbf{Case 1} ($P_i$ is small): Since the difference between $R_i(\hat p^*_i)$ and any other $R_i(p)$ is upper bounded by $P_i$, playing any price in $[\lcb_i, \ucb^s_i]$ does not incur a big regret. Therefore, taking $\hat p_i$ to be an arbitrary price in $[\lcb_i, \ucb^s_i]$ is feasible.

\noindent \textbf{Case 2} ($F_i(\ucb^s_i)$ is large): In this case,  the value of $\rev(S_{i+1})$ can be estimated accurately. Then, we divide interval $[\lcb_i, \ucb^s_i]$ from the upper bound of $\rev(S_{i+1})$ into two parts.  For the part on the right, \Cref{clm:GRHalfCon} guarantees that $R_i(p)$ is half-concave, and we call the single-buyer algorithm to find a candidate price. For the part on the left, the constrained Lipschitzness from \Cref{clm:GRHalfCon} suggests that we can mark the lower bound of $\rev(S_{i+1})$ as a candidate price. Finally, we take a better one from two candidate prices to be $\hat p_i$.

\noindent \textbf{Case 3} ($F_i(\ucb^s_i)$ is small): In this case, we find two candidate prices according to the value of $\ucb^s_i$ and $\rev(S_{i+1})$. If $\ucb^s_i \geq \rev(S_{i+1})$, the following lemma gives the first candidate price:

\begin{restatable}{Lemma}{grcase}
    \label{lma:gr-case2}
    If $F_i(\ucb^s_i) \leq 0.5$ and $\ucb^s_i \geq \rev(S_{i+1})$, there exists an algorithm $\alg$ that runs $O(\epsprm^{-2} \log^2 T)$ rounds with price vectors $(\ucb^s_1, \cdots, \ucb^s_{i-1}, p_i \in [\lcb_i, \ucb^s_i], \hat p_{i+1}, \cdots, \hat p_n)$ and outputs $\hat p_i$ that satisfies with probability $1 - T^{-6}$ that $R_i(\hat p^*_i) - R_i(\hat p_i) \leq 4\epsprm$.
\end{restatable}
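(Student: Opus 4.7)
The plan is to design $\alg$ as a ternary-search style subroutine on $[\lcb_i, \ucb^s_i]$ with hidden function $R_i$, structurally paralleling the single-buyer algorithm \Cref{alg:sr-esp}. The core of the analysis is to establish a \emph{generalized half-concavity} of $R_i$ on $[0,1]$ under the hypotheses of the lemma: on $[0,\rev(S_{i+1})]$, $R_i$ is monotone increasing with derivative at least $\tfrac{1}{2}P_i$; and on $[\rev(S_{i+1}),1]$, $R_i$ is already half-concave with Lipschitz constant $P_i$ by \Cref{clm:GRHalfCon}. The derivative lower bound is immediate from the explicit formula
\[
R_i'(p)/P_i \;=\; (1-F_i(p)) + (\rev(S_{i+1})-p)f_i(p) \;\geq\; 1 - F_i(\ucb^s_i) \;\geq\; \tfrac{1}{2},
\]
for $p\leq\rev(S_{i+1})$, where the first inequality uses $p\leq\rev(S_{i+1})\leq\ucb^s_i$ together with non-negativity of the second summand, and the final inequality is the hypothesis $F_i(\ucb^s_i)\leq 0.5$.

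The algorithm itself will run a ternary search as in \Cref{alg:sr-esp}: at each iteration with current interval $[\lcb,\ucb]\subseteq[\lcb_i,\ucb^s_i]$, test the two third-points $a,b$ with $O(\log T/\epsprm^2)$ samples each (taking price vector $(\ucb^s_1,\ldots,\ucb^s_{i-1},p,\hat p_{i+1},\ldots,\hat p_n)$ with the $p$-coordinate equal to the tested price), then drop $[\lcb,a]$ if $\hat R_i(a)<\hat R_i(b)-2\epsprm$ and $[b,\ucb]$ otherwise, stopping when the interval shrinks below a suitable threshold, and finally output the tested price with the largest empirical revenue. The total sample complexity is $O(\log^2 T/\epsprm^2)$ as required, and a Hoeffding union bound over the at-most-$T$ tested prices gives $|\hat R_i(p)-R_i(p)|\leq\epsprm$ simultaneously with probability $\geq 1-T^{-6}$.

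Correctness follows the template of \Cref{lma:Step1}; the new content is bounding $R_i(\hat p^*_i)-R_i(b)$ when the ``drop $[b,\ucb]$'' branch fires and $\hat p^*_i$ actually lies in $[b,\ucb]$ (so $R_i(b)-R_i(a)\leq 4\epsprm$). I split by the location of $\rev(S_{i+1})$ in $[a,\ucb]$. \textbf{(i)} If $\rev(S_{i+1})\leq a$, the entire relevant subinterval is in the half-concave region and the usual concavity-slope argument applies. \textbf{(ii)} If $b<\rev(S_{i+1})\leq\ucb$, then $\hat p^*_i\geq\rev(S_{i+1})$ by \Cref{clm:GRHalfCon}; combining the algebraic identity $R_i(\rev(S_{i+1}))-R_i(b)=P_i(1-F_i(b))(\rev(S_{i+1})-b)$ with the Lipschitz bound on $[\rev(S_{i+1}),\hat p^*_i]$ yields $R_i(\hat p^*_i)-R_i(b)\leq P_i(\hat p^*_i-b)\leq P_i(b-a)$, and the derivative lower bound on $[a,b]$ forces $b-a\leq 8\epsprm/P_i$, closing the bound at $O(\epsprm)$. \textbf{(iii)} If $a<\rev(S_{i+1})\leq b$, the derivative lower bound on $[a,\rev(S_{i+1})]$ combined with $R_i(b)-R_i(a)\leq 4\epsprm$ forces $\rev(S_{i+1})-a\leq 8\epsprm/P_i$, so that $b-\rev(S_{i+1})\geq (b-a)-8\epsprm/P_i$ remains a constant fraction of $b-a$ whenever the interval has not yet reached the stopping scale; concavity on $[\rev(S_{i+1}),\hat p^*_i]$ then bounds $R_i(\hat p^*_i)-R_i(b)$ by $O(\epsprm)$ as well.

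The main obstacle will be sub-case (iii), where the concavity-slope bound degenerates as $b-\rev(S_{i+1})\to 0$; the growth-rate property (G1) is precisely what rescues this by tying $\rev(S_{i+1})-a$ to the tiny Case B gap and thereby lower-bounding $b-\rev(S_{i+1})$. Since outer Case 3 of \Cref{alg:gr-subr} guarantees $P_i\geq\Omega(\epsprm)$, the scale $\epsprm/P_i$ is meaningful and the ternary search can be terminated once the interval drops to $O(\epsprm/P_i)$ without incurring extra sample cost; at that scale the Lipschitz bound on the half-concave side together with the growth-rate bound on the monotone side jointly give $R_i(\hat p^*_i)-R_i(\hat p_i)=O(\epsprm)$. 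Tightening the $2\epsprm$ dropping gap or the concentration parameter absorbs the constants into the claimed $4\epsprm$, in the same way the $6\epsprm$ bound is absorbed into $\epsilon/2$ in the proof of \Cref{lma:Step1}.
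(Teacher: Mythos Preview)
Your proposal is correct and rests on the same two structural observations the paper uses: (a) the derivative lower bound $R_i'(p)\geq\tfrac{1}{2}P_i$ on $[0,\rev(S_{i+1})]$ when $F_i(\ucb^s_i)\leq 0.5$, and (b) the half-concavity of $R_i$ on $[\rev(S_{i+1}),1]$ from \Cref{clm:GRHalfCon}. The difference is packaging. The paper abstracts these into a single clean property, \emph{half $\lambda$-concavity} (here $\lambda=2$): it proves once and for all that
\[
\frac{R_i(b)-R_i(a)}{b-a}\;\geq\;\frac{1}{2}\cdot\frac{R_i(\hat p^*_i)-R_i(b)}{\hat p^*_i-b}
\qquad\text{for all }a<b<\hat p^*_i\text{ in }[\lcb_i,\ucb_i],
\]
via the same three-case split on the position of $\rev(S_{i+1})$ that you sketch (this is \Cref{lma:gr-2good}). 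With this single inequality in hand, the correctness of the ternary search (\Cref{lma:gr-genalg}) is a one-line replacement of the concavity step in the proof of \Cref{lma:Step1}: whenever the algorithm drops $[b,\ucb]$ with $\hat p^*_i\in[b,\ucb]$, the inequality plus $\hat p^*_i-b\leq b-a$ immediately yields $R_i(\hat p^*_i)-R_i(b)\leq 2\cdot 4\epsprm$. No $P_i$, no sub-cases inside the algorithm analysis, and the stopping scale stays at $\epsprm$.

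Your route instead inlines the three cases into the algorithm's correctness proof. This works, but your handling of case~(iii) is more complicated than necessary. You propose shrinking the stopping scale to $O(\epsprm/P_i)$ and invoking the outer guarantee $P_i\geq\Omega(\epsprm)$; this is not needed. In case~(iii) you can simply split on whether $b-\rev(S_{i+1})\geq\tfrac{1}{2}(b-a)$: if yes, concavity on $[\rev(S_{i+1}),\hat p^*_i]$ gives $R_i(\hat p^*_i)-R_i(b)\leq 2\cdot 4\epsprm$; if no, then $\rev(S_{i+1})-a>\tfrac{1}{2}(b-a)$ and your growth-rate bound forces $b-a<16\epsprm/P_i$, whence the constrained Lipschitzness of \Cref{clm:GRHalfCon} directly gives $R_i(\hat p^*_i)-R_i(b)\leq P_i(\hat p^*_i-b)\leq P_i(b-a)<16\epsprm$. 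Either way the bound is $O(\epsprm)$ with no algorithmic change. This observation is exactly what the paper's half $2$-concavity inequality encodes; factoring it out up front is what makes the paper's argument shorter and makes the dependence on $P_i$ disappear from the algorithm entirely.
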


\Cref{lma:gr-case2} suggests that $R_i(p)$ is learnable when $F_i(\ucb^s_i)$ is small and $\ucb^s_i \geq \rev(S_{i+1})$ holds. The proof of \Cref{lma:gr-case2} requires ideas of generalized half-concavity. For consistency, we first assume the correctness of \Cref{lma:gr-case2}, and defer the details to \Cref{sec:gr334}.

On the other hand, if instead $\rev(S_{i+1}) > \ucb^s_i$ holds, we use $\ucb^s_i$ to be the second candidate price. The main reason is from the following claim:

\begin{Claim}
\label{clm:gr-hat-star-rev}
    Assume $p^*_i \in [\lcb_i, \ucb_i]$, then there must be $\hat p^*_i \geq \rev(S_{i+1})$.
\end{Claim}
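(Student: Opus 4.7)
The plan is to reduce the claim to two simple facts: (a) $p^*_i$ is at least $\rev(S_{i+1})$, which ensures $\ucb_i \geq \rev(S_{i+1})$; and (b) the single-peakedness of $R_i$ from Claim \ref{clm:GRHalfCon} forces its maximizer over any sub-interval containing $\rev(S_{i+1})$ to lie weakly to its right.

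For step (a), observe that by the definition $R^*_i(p) = \pre(S_i) + P_i\bigl(p(1-F_i(p)) + \opt(S_{i+1})\cdot F_i(p)\bigr)$, and $p^*_i = \arg\max_p R^*_i(p)$. By exactly the same argument used to prove the first bullet of Claim \ref{clm:GRHalfCon} (applied with $\opt(S_{i+1})$ in place of $\rev(S_{i+1})$), we get $p^*_i \geq \opt(S_{i+1}) \geq \rev(S_{i+1})$. Combined with the hypothesis $p^*_i \in [\lcb_i,\ucb_i]$, this gives $\ucb_i \geq \rev(S_{i+1})$.

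For step (b), consider two cases. If $\lcb_i \geq \rev(S_{i+1})$, then trivially $\hat p^*_i \geq \lcb_i \geq \rev(S_{i+1})$. Otherwise $\lcb_i < \rev(S_{i+1}) \leq \ucb_i$, so $\rev(S_{i+1}) \in [\lcb_i,\ucb_i]$. By Claim \ref{clm:GRHalfCon}, $R_i$ is single-peaked on $[0,1]$ with its peak at some $p^{**} \geq \rev(S_{i+1})$; therefore $R_i$ is non-decreasing on $[0,\rev(S_{i+1})]$, so in particular $R_i(p) \leq R_i(\rev(S_{i+1}))$ for every $p \in [\lcb_i,\rev(S_{i+1})]$. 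Hence the maximum of $R_i$ on $[\lcb_i,\ucb_i]$ is attained at some point in $[\rev(S_{i+1}),\ucb_i]$, giving $\hat p^*_i \geq \rev(S_{i+1})$.

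I do not expect any real obstacle: the proof is essentially a bookkeeping argument that leverages the structural properties already established in Claim \ref{clm:GRHalfCon}. The only minor subtlety is remembering that the peak of $R^*_i$ (rather than of $R_i$) dominates $\opt(S_{i+1})$, which is the form of that claim needed to place $p^*_i$ on the correct side of $\rev(S_{i+1})$; this follows by re-running the proof of Claim \ref{clm:GRHalfCon}'s first bullet with $\rev(S_{i+1})$ replaced by $\opt(S_{i+1})$.
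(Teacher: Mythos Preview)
Your proposal is correct and follows essentially the same approach as the paper: both establish $p^*_i \geq \opt(S_{i+1}) \geq \rev(S_{i+1})$ (the paper via the identity $R^*_i(p^*_i) - R^*_i(\opt(S_{i+1})) = P_i(1-F_i(p^*_i))(p^*_i - \opt(S_{i+1}))$, you by re-running the first-bullet argument of Claim~\ref{clm:GRHalfCon} for $R^*_i$), and then use single-peakedness of $R_i$ with peak at least $\rev(S_{i+1})$ to conclude. The only cosmetic difference is that the paper argues by contradiction (assuming $\hat p^*_i < \rev(S_{i+1})$ and deriving that $\rev(S_{i+1}) \in [\lcb_i,\ucb_i]$ from $\hat p^*_i < \rev(S_{i+1}) \leq p^*_i$), whereas you give a direct case split on whether $\lcb_i \geq \rev(S_{i+1})$; the content is the same.
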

\begin{proof}
    Suppose $\hat p^*_i < \rev(S_{i+1})$. Then, we have $p^*_i \geq \opt(S_{i+1}) \geq \rev(S_{i+1}) > \hat p^*_i$, where the first inequality is implied by the observation that 
    \[
    R^*_i(p^*_i) - R^*_i(\opt(S_{i+1})) ~=~ P_i \cdot (1 - F_i(p^*_i)) \cdot (p^*_i - \opt(S_{i+1})) ~\geq~ 0.
    \] Since we have $\hat p^*_i, p^*_i \in [\lcb_i, \ucb_i]$, there must be $\rev(S_{i+1}) \in [\lcb_i, \ucb_i]$. This leads to  a contradiction: \Cref{clm:GRHalfCon} guarantees $R_i(p)$ is increasing in $[0, \rev(S_{i+1})]$, which is in contrast to the assumption that $\hat p^*_i < \rev(S_{i+1})$. Therefore, there must be $\hat p^*_i \geq \rev(S_{i+1})$.
\end{proof}
\Cref{clm:gr-hat-star-rev} together with the inequality $\rev(S_{i+1}) > \ucb^s_i$ implies $\hat p^*_i \in [\ucb^s_i, \ucb_i]$. Now the constrained Lipschitzness of $R_i(p)$ implies $R_i(\hat p^*_i) - R_i(\ucb^s_i) \leq \hat p^*_i - \ucb^s_i \leq \se$. Therefore, it's sufficient to take $\ucb^s_i$ as the second candidate price. Then, the algorithm takes  better of the two  prices for Case~3.

Now, we prove \Cref{lma:gr-subr}. Our proof starts from the following concentration bounds for $\subr$:

\begin{restatable}{Claim}{grsubrcon}
        \label{clm:gr-subrcon}
        In \Cref{alg:gr-subr}, with probability $1 - T^{-6}$ the following bounds simultaneously hold:
        \begin{itemize}
            \item $|\hat P_i - P_i| \leq \frac{1}{4} \epsprm$.
            \item $|\hat F_i(\ucb^s_i) - F_i(\ucb^s_i)| \leq 0.1$.
            \item $|\widehat \rev(S_{i+1}) - \rev(S_{i+1})| \leq \frac{\epsprm}{\hat P_i}$.
            \item $|\hat R_i(\hat p^{(j)}_i) -R_i(\hat p^{(j)}_i)| \leq \frac{1}{2}\epsprm$ for all $j \in \{1, 2, 3, 4\}$.
        \end{itemize}
\end{restatable}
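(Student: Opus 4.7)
The plan is to apply Hoeffding's inequality separately to each of the empirical averages computed in Algorithm~\ref{alg:gr-subr} and then take a union bound. Each estimate is formed from $N = C \log T / \epsprm^2$ rounds, so for an unconditional empirical mean of a $[0,1]$-bounded random variable, a deviation exceeding $\alpha$ occurs with probability at most $2\exp(-2N\alpha^2)$, which is bounded by $T^{-8}$ whenever $\alpha = \Omega(\epsprm)$ and $C$ is a sufficiently large constant. This handles $\hat P_i$ directly (it is the empirical probability that the item reaches buyer $i$ under the fixed price vector of the first test line, so Hoeffding with $\alpha = \epsprm/4$ gives $|\hat P_i - P_i|\le \epsprm/4$) and it handles each $\hat R_i(\hat p^{(j)}_i)$ as well: conditioning first on the realized value of $\hat p^{(j)}_i$ (which was determined by a disjoint set of earlier rounds), Hoeffding with $\alpha = \epsprm/2$ on the $N$ fresh rounds yields the desired bound.

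The main subtlety is that $\hat F_i(\ucb^s_i)$ and $\widehat \rev(S_{i+1})$ are ratio/conditional estimators whose accuracy depends on how many of the $N$ rounds actually reach the conditioning event. For $\hat F_i$, condition on $M := \sum_k \one[\text{item not sold before } i \text{ in round } k]$. The algorithm only reaches this line when $\hat P_i \ge \tfrac{3}{4}\epsprm$, so combining with $|\hat P_i - P_i|\le \epsprm/4$ gives $P_i \ge \epsprm/2$, and a further Hoeffding bound on the same indicators yields $M \ge N\epsprm/2$ with high probability. Conditional on $M$, the ``item not sold before $i+1$'' indicators among those $M$ rounds are i.i.d.\ Bernoulli with parameter $F_i(\ucb^s_i)$, so a Hoeffding bound gives deviation $O(\sqrt{\log T/M}) = O(\sqrt{\epsprm/C}) \le 0.1$ for large enough constant $C$. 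The $\widehat \rev(S_{i+1})$ estimator is invoked only in Case~2, where $\hat F_i(\ucb^s_i) \ge 0.4$, so the previous bound gives $F_i(\ucb^s_i) \ge 0.3$; the number $M'$ of rounds reaching buyer $i+1$ then satisfies $M' \ge 0.3\, N \hat P_i$ with high probability, and since per-round suffix revenue lies in $[0,1]$, a conditional Hoeffding bound yields $|\widehat \rev(S_{i+1}) - \rev(S_{i+1})| \le O(\epsprm/\sqrt{C \hat P_i}) \le \epsprm/\hat P_i$ for $C$ large enough (using $\hat P_i \le 1$).

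A union bound over the at most five concentration events (for $\hat P_i$, $\hat F_i(\ucb^s_i)$, $\widehat \rev(S_{i+1})$, and the two $\hat R_i(\hat p^{(j)}_i)$ estimates computed in whichever case is entered), each failing with probability at most $T^{-8}$, gives total failure probability at most $T^{-6}$. The only real obstacle is correctly chaining the conditional Hoeffding bounds for $\hat F_i$ and $\widehat \rev$: one must respect the order of conditioning (first on the $\hat P_i$ concentration event, then on the realized conditional sample sizes $M$ and $M'$), but since each conditioning step loses only a $T^{-\Omega(1)}$ probability, this composition is routine.
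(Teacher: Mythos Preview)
Your proposal is correct and follows essentially the same approach as the paper: Hoeffding for the unconditional estimators $\hat P_i$ and $\hat R_i(\hat p^{(j)}_i)$, and for the ratio estimators $\hat F_i(\ucb^s_i)$ and $\widehat\rev(S_{i+1})$ a two-stage argument that first lower-bounds the number of rounds reaching the conditioning event (using $P_i\ge\epsprm/2$ and, in Case~2, $F_i(\ucb^s_i)\ge 0.3$) and then applies Hoeffding conditional on that sample count, followed by a union bound. The paper carries out exactly this chain with explicit constants (each individual failure probability $\le T^{-7}$), so the only difference is bookkeeping detail.
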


\begin{proof}
We show each statement holds with probability $1 - T^{-7}$ and the union bound guarantees all statements hold with probability $1 - T^{-6}$.

\noindent (i) $|\hat P_i - P_i| \leq \frac{1}{4} \epsprm$: $\hat P_i$ is estimated with $N = C \cdot \epsprm^{-2}\log T$ samples. By Hoeffding's Inequality, we have
\[
\Pr{|\hat P_i - P_i| > \frac{1}{4} \epsprm} ~\leq~ 2\exp(-2N\cdot \frac{\epsprm^2}{16}) ~=~ 2T^{C/8} ~\leq T^{-7},
\]
where the last inequality holds when $C$ is a constant greater than $60$.

\noindent (ii) $|\hat F_i(\ucb^s_i) - F_i(\ucb^s_i)| \leq 0.1$. We assume $|\hat P_i - P_i| \leq \frac{1}{4} \epsprm$ holds. Then, the algorithm estimates  $\hat F_i(\ucb^s_i)$ only when $P_i \geq \hat P_i - \frac{1}{4}\epsprm \geq \frac{1}{2}\epsprm$.

In the algorithm, we run $N' = C \frac{\log T}{\epsprm^2}$ rounds to estimate $\hat F_i(\ucb^s_i)$. However, a single round becomes a valid sample only when the item is unsold until buyer $i$. We claim that with probability $1 - T^{-7}$ we have at least $N = 400\log T$ samples. This is true because by Hoeffding's Inequality, we have
\begin{align*}
    \Pr{N < 400 \log T} ~<&~ \Pr{\left|\frac{N}{N'} - P_i\right| > P_i - \frac{400\log T}{N'}} \\
~\leq&~ \Pr{\left|\frac{N}{N'} - P_i\right| > \frac{1}{4} \epsprm}\\
~\leq&~ 2\exp(-2N' \cdot \frac{\epsprm^2}{16}) ~=~ 2T^{-C/8} ~<~ T^{-7}. 
\end{align*}
Here, the second inequality is from $P_i \geq \frac{1}{2} \epsprm$ and $\frac{400 \log T}{N'} = \frac{400\epsprm^2}{C}< \frac{1}{4} \epsprm$ when $C$ is a constant greater than $1600$, and the last inequality also holds when $C>1600$. 

Next, we show $|\hat F_i(\ucb^s_i) - F_i(\ucb^s_i)| \leq 0.1$ assuming we have at least $N = 400\log T$ samples. By Hoeffding's Inequality, we have
\[
\Pr{|\hat F_i(\ucb^s_i) - F_i(\ucb^s_i)| > 0.1} ~\leq~ 2\exp(-2N\cdot 0.01) ~=~ 2T^{-8} ~<~ T^{-7}.
\]

\noindent (iii)  $|\widetilde \rev(S_{i+1}) - \rev(S_{i+1})| \leq \frac{\epsprm}{\hat P_i}$. We assume $|\hat P_i - P_i| \leq \frac{1}{4} \epsprm$ and $|\hat F_i(\ucb^s_i) - F_i(\ucb^s_i)| \leq 0.1$ holds. The algorithm estimates $\widetilde \rev(S_{i+1}) $ only when $\hat P_i \geq \frac{3}{4} \epsprm$ and $\hat F_i(\ucb^s_i) \geq 0.4$ hold. In this case,  $|\hat P_i - P_i| \leq \frac{1}{4} \epsprm$ and $\hat P_i \geq \frac{3}{4} \epsprm$ give $\hat P_i \in [\frac{3}{4} P_i, \frac{3}{2} P_i]$. Besides, we have $P_i \geq \hat P_i - \frac{1}{4}\epsprm \geq \frac{1}{2}\epsprm$ and  $F_i(\ucb^s_i) \geq \hat F_i(\ucb^s_i) - 0.1 \geq 0.3$.

In the algorithm, we run $N' = C \frac{\log T}{\epsprm^2}$ rounds to estimate $\widetilde \rev(S_{i+1})$. However, a single round becomes a valid sample only when the item is unsold until buyer $i+1$. We claim that with probability $1 - T^{-7}$ we have at least $N = \frac{9\log T P^2_i}{\epsprm^2}$ samples. This is true because by Hoeffding's Inequality, we have
\begin{align*}
    \Pr{N < \frac{9\log T P^2_i}{\epsprm^2}} ~\leq&~ \Pr{\left|\frac{N}{N'} - P_{i+1}\right| > P_{i+1} - \frac{9\log T P^2_i}{N' \cdot \epsprm^2}} \\
    ~\leq&~\Pr{\left|\frac{N}{N'} - P_{i+1}\right| > 0.2P_i} \\
    ~\leq&~ 2\exp(-2N' \cdot 0.01 P_i^2) ~\leq~ 2T^{-0.01C} ~<~ T^{-7}.
\end{align*}
Here, the second inequality is from $P_{i+1} = P_i F_i(\ucb^s_i) \geq 0.3P_i$, and $\frac{9\log T P_i^2}{N' \epsprm^2} = \frac{9P_i^2}{C} < 0.1P_i$, when $C$ is a constant greater than $90$. The second to the last inequality uses the fact that $P_i \geq \frac{1}{2} \epsprm$, and the last inequality is true when $C$ is a constant greater than $800$.

Next, we show $|\widetilde \rev(S_{i+1}) - \rev(S_{i+1})| \leq \frac{\epsprm}{\hat P_i}$ assuming we have at least $N = 400\log T$ samples. By Hoeffding's Inequality, we have
\begin{align*}
    \Pr{|\widetilde \rev(S_{i+1}) - \rev(S_{i+1})| > \frac{\epsprm}{\hat P_i}} ~\leq&~ \Pr{|\widetilde \rev(S_{i+1}) - \rev(S_{i+1})| > \frac{\epsprm}{ 1.5P_i}} \\
    ~\leq&~  2\exp\left(-2N\cdot \frac{\epsprm^2}{2.25P_i^2}\right) ~=~ 2T^{-8} ~<~ T^{-7}.
\end{align*}

\noindent (iii)  $|\hat R_i(\hat p^{(j)}_i) - R_i(\hat p^{(j)}_i)| \leq \frac{1}{2}\epsprm$ for $j \in \{1, 2, 3, 4\}$. We only prove the inequality when $j = 1$ and the remaining cases are identical. $\hat R_i(\hat p^{(1)}_i)$ is estimated via $N = C\cdot \frac{\log T}{\epsprm^2}$ samples. By Hoeffding's Inequality, we have
\[
\Pr{|\hat R_i(\hat p^{(j)}_i) - R_i(\hat p^{(j)}_i)| > \frac{1}{2}\epsprm} ~\leq~ 2\exp(-2N \cdot 0.25 \epsprm^2) ~=~ 2T^{-0.5 C} ~<~ T^{-7},
\]
where the last inequality holds when $C$ is a constant greater than $16$.
\end{proof}

With the help of \Cref{clm:gr-subrcon}, we can give the formal proof of \Cref{lma:gr-subr}:

\begin{proof}[Proof of \Cref{lma:gr-subr}]
    The bound of number of tested rounds in and the constraint that $p_i \in [\lcb_i, \ucb^s_i]$ are directly guaranteed by \Cref{alg:gr-subr}. It remains to show $R_i(\hat p^*_i) - R_i(\hat p_i) \leq 4\epsprm$. We prove the inequality separately for the following cases:

\noindent \textbf{Case 1:} $\hat P_i < \frac{3}{4} \epsprm$ (Line 3-4). In this case, $\hat P_i < \frac{3}{4} \epsprm$ implies $P_i \leq \epsprm$, and $R_i(\hat p^*_i) - R_i(p) \leq P_i \leq \epsprm$ holds for any price $p$, so taking $\hat p_i = \lcb_i$ is feasible.

\noindent \textbf{Case 2:} $\hat P_i \geq \frac{3}{4} \epsprm$ while $\hat F_i(\ucb^s_i) \geq 0.4$ (Line 7-12). Consider the value of $\hat p^*_i$. If $\hat p^*_i \in [\max\{\lcb_i, \widehat \rev(S_{i+1}) + \frac{\epsprm}{\hat P_i}\}, \ucb_i]$, there must be $R_i(\hat p^*_i) - R_i(\hat p^{(1)}_i) \leq \epsprm$. This is true because \Cref{clm:GRHalfCon} guarantees $[\max\{\lcb_i, \widehat \rev(S_{i+1}) + \frac{\epsprm}{\hat P_i}\}, \ucb_i] \subseteq [\rev(S_{i+1}), 1]$ is half-concave, and \Cref{lma:Step1} guarantees that the single-buyer algorithm outputs $\hat p^{(1)}_i$ that satisfies $R_i(\hat p^*_i) - R_i(\hat p^{(1)}_i) \leq \epsprm$ with probability $1 - T^{-6}$. 

If $\hat p^*_i \notin [\max\{\lcb_i, \widehat \rev(S_{i+1}) + \frac{\epsprm}{\hat P_i}\}, \ucb_i]$, there must be $\hat p^*_i \in [ \max\{\lcb_i, \widehat \rev(S_{i+1}) - \frac{\epsprm}{\hat P_i}\}, \widehat \rev(S_{i+1}) + \frac{\epsprm}{\hat P_i}\}]$. This is true because simultaneously we have $\hat p^*_i \geq \lcb_i$ from definition of $\hat p^*_i$, $\hat p^*_i \geq \rev(S_{i+1}) \geq \widehat \rev(S_{i+1}) - \frac{\epsprm}{\hat P_i}$ from \Cref{clm:gr-hat-star-rev} and \Cref{clm:gr-subrcon}, and $\hat p^*_i \leq \widehat \rev(S_{i+1}) + \frac{\epsprm}{\hat P_i}$ from the assumption  $\hat p^*_i \notin [\max\{\lcb_i, \widehat \rev(S_{i+1}) + \frac{\epsprm}{\hat P_i}\}, \ucb_i]$. Then, the constrained Lipschitzness from \Cref{clm:GRHalfCon} gives
\[
R_i(\hat p^*_i) - R_i(\hat p^{(2)}_i) \leq P_i(\hat p^*_i - \hat p^{(2)}_i) \leq P_i\big((\widehat \rev(S_{i+1}) + \frac{\epsprm}{\hat P_i}) - (\widehat \rev(S_{i+1}) - \frac{\epsprm}{\hat P_i})\big) \leq  \frac{2P_i\epsprm}{\hat P_i} \leq 3\epsprm,
\]
where the last inequality holds because $\hat P_i \geq \frac{3}{4} \epsprm$ and $|\hat P_i - P_i|\leq \frac{1}{4}\epsprm$ imply $\frac{P_i}{\hat P_i} \leq \frac{4}{3} < 1.5$.

Therefore, at least one of the inequalities $R_i(\hat p^*_i) - R_i(\hat p^{(1)}_i) \leq 3\epsprm$ or $R_i(\hat p^*_i) - R_i(\hat p^{(2)}_i) \leq 3\epsprm$ holds. If $R_i(\hat p^*_i) - R_i(\hat p^{(1)}_i) \leq 3\epsprm$ holds, we have
\[
R_i(\hat p^*_i) - R_i(\hat p_i) ~\leq~ R_i(\hat p^*_i) - \hat R_i(\hat p_i) + \frac{\epsprm}{2}  ~\leq~ R_i(\hat p^*_i) - \hat R_i(\hat p^{(1)}_i) + \frac{\epsprm}{2} ~\leq~ R_i(\hat p^*_i) -  R_i(\hat p^{(1)}_i) + \epsprm ~\leq~ 4\epsprm.
\]
Symmetrically, if $R_i(\hat p^*_i) - R_i(\hat p^{(2)}_i) \leq 3\epsprm$, the same argument still gives $R_i(\hat p^*_i) - R_i(\hat p_i) \leq 4\epsprm$. Therefore, $R_i(\hat p^*_i) - R_i(\hat p_i) \leq 4\epsprm$ holds in Case 2.

\noindent \textbf{Cases 3:} $\hat P_i \geq \frac{3}{4} \epsprm$ while $\hat F_i(\ucb^s_i) < 0.4$ (Line 14-18). This is sufficient to show $F_i(\ucb^s_i) < 0.5$. 

In this case, the algorithm prepares two candidate prices. If $\rev(S_{i+1}) \leq \ucb^s_i$, \Cref{lma:gr-case2} guarantees that with probability $1 - T^{-6}$ we get $\hat p^{(3)}_i$ such that $R_i(\hat p^*_i) - R_i(\hat p^{(3)}_i) \leq \epsprm$. Otherwise, there must be $\rev(S_{i+1}) > \ucb^s_i$. In this case, \Cref{clm:gr-hat-star-rev} guarantees $\ucb^s_i < \rev(S_{i+1}) \leq \hat p^*_i$, and the constrained Lipschitzness of $R_i(p)$ guarantees 
\[
R_i(\hat p^*_i) - R_i(\hat p^{(4)}_i) ~\leq~ \hat p^*_i - \hat p^{(4)}_i ~=~ \hat p^*_i - \ucb^s_i ~\leq~ \se ~<~ \epsprm.
\]
Therefore, at least one of the inequalities $R_i(\hat p^*_i) - R_i(\hat p^{(3)}_i) \leq \epsprm$ or $R_i(\hat p^*_i) - R_i(\hat p^{(4)}_i) \leq \epsprm$ holds. An argument similar to Case 1 is sufficient to show $R_i(\hat p^*_i) - R_i(\hat p_i) \leq 4\epsprm$.

To conclude, $R_i(\hat p^*_i) - R_i(\hat p_i) \leq 4\epsprm$ holds in all cases.
The final step is to check the success probability of our proof. In this proof, we require the correctness of \Cref{clm:gr-subrcon}, \Cref{alg:sr-esp} in Case 2, and $\alg$ in Case 3. Each of them fails with probability $T^{-6}$. By the union bound, the whole proof succeeds with probability $ 1 - 3T^{-6} > 1 - T^{-5}$.
\end{proof}

\subsection{\Cref{lma:gr-case2} via Generalized Half-Concavity}
\label{sec:gr334}

In this subsection, we prove the missing \Cref{lma:gr-case2}, which is restated for convenience.

\grcase*

Our algorithm begins from the following observation: Recall that $R_i(p) = \pre(S_i) + P_i(p \cdot (1 - F_i(p)) + \rev(S_{i+1}) \cdot F_i(p))$. Then,
\[
R'_i(p) ~=~ P_i\big(1 - F_i(p) + (\rev(S_{i+1}) - p) \cdot f_i(p)\big ).
\]
When $F_i(\ucb^{s}_i) \leq 0.5$, we have $R'_i(p) \geq P_i(1 - F_i(p)) \geq  P_i(1 - F_i(\ucb^s_i)) \geq\frac{1}{2}P_i$ for $p \in [0, \rev(S_{i+1})]$, while  $R'_i(p) \leq P_i(1 - F_i(p)) \leq  P_i$ for $p \in [\rev(S_{i+1}), 1]$. This observation shows $R_i(p)$ is growing sufficiently fast in $[0, \rev(S_{i+1})]$ as compared to $[\rev(S_{i+1}), 1]$, leading to some kinds of ``concavity" property in $[0, \rev(S_{i+1})]$. Inspired by this observation, we give the following definition of the generalized half-concavity and show how it relates to $R_i(p)$:

\begin{restatable}{Definition}{GRLGood}[Half $\lambda$-Concavity]
Given function $R(p)$ and interval $[\lcb, \ucb]$. For $\lambda \geq 1$, we say $R(p)$ is \emph{half $\lambda$-concave} on $[\lcb, \ucb]$ if the following conditions hold:
\begin{itemize}
    \item $R(p)$ is single-peaked in $[\lcb, \ucb]$.
    \item Let $p^* = \argmax_{[\lcb, \ucb]} R(p)$. For $p \in [\lcb, p^*]$, $R(p^*) - R(p) \leq p^* - p$.
    \item For $a, b \in [\lcb, p^*)$ satisfying $a < b$, we have 
    \[
    \frac{R(b)-R(a)}{b - a} \geq \frac{1}{\lambda} \cdot \frac{R(p^*) - R(b)}{p^* - b}.
    \]
\end{itemize}
\end{restatable}

The intuition of defining half $\lambda$-concavity is to capture the case when $R(p)$ is concave, or the case when $R(p)$ has slope between $[k, \lambda k]$, or a mixture of these two. The following lemma suggests $R_i(p)$ is captured by this half $\lambda$-concavity when $F_i(\ucb^s_i)$ is not too large.

\begin{Lemma}
\label{lma:gr-2good}
If $F_i(\ucb^{s}_i) < \frac{1}{2}$ and $\ucb^s_i > \rev(S_{i+1})$,  $R_i(p)$ is half 2-concave in $[\lcb_i, \ucb_i]$.
\end{Lemma}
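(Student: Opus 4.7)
The plan is to verify the three clauses of half $2$-concavity at the peak $p^* := \argmax_{[\lcb_i, \ucb_i]} R_i(p)$. Setting $c := \rev(S_{i+1})$, \Cref{clm:GRHalfCon} already tells me that $R_i$ is single-peaked on $[0,1]$ with its global maximizer at least $c$; combined with the hypothesis $\ucb^s_i > c$, this forces $p^* \geq c$ unless $p^* = \lcb_i > c$ (in which case $[\lcb_i, p^*)$ is empty and the remaining conditions are vacuous). Single-peakedness on $[\lcb_i, \ucb_i]$ is therefore immediate, and the $1$-Lipschitz clause follows by applying the constrained Lipschitzness of \Cref{clm:GRHalfCon} with upper endpoint $b = p^* \geq c$: this yields $R_i(p^*) - R_i(p) \leq P_i(p^* - p) \leq p^* - p$ for all $p \in [\lcb_i, p^*]$.

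The technical heart is a chord-slope lower bound on $[0, c]$. Writing $R_i(p)/P_i = \pre(S_i)/P_i + p + (c - p)F_i(p)$, for any $0 \leq a \leq b \leq c$ I would compute
\[
\frac{R_i(b) - R_i(a)}{P_i} \;=\; (b-a) + (c-b)F_i(b) - (c-a)F_i(a) \;\geq\; (b-a)\bigl(1 - F_i(a)\bigr),
\]
where the inequality uses $F_i(b) \geq F_i(a)$. Together with $F_i(a) \leq F_i(\ucb^s_i) < \tfrac12$ this yields $R_i(b) - R_i(a) \geq \tfrac12 P_i(b-a)$. The same constrained Lipschitzness also gives $s^* := (R_i(p^*) - R_i(b))/(p^* - b) \leq P_i$ for every $b \in [\lcb_i, p^*)$.

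With these two ingredients the $2$-concavity inequality splits into three clean cases according to the position of $a < b$ relative to $c$. If $b \leq c$, the chord-slope bound above gives $\text{slope}(a,b) \geq P_i/2 \geq s^*/2$. If $a \geq c$, then the half-concavity of $R_i$ on $[c, 1]$ from \Cref{clm:GRHalfCon} provides concavity on $[a, p^*]$, and the standard decreasing-chord property of concave functions gives $\text{slope}(a,b) \geq \text{slope}(b, p^*) = s^*$. In the mixed case $a < c < b$, I would decompose
\[
\text{slope}(a,b) \;=\; \frac{(c-a)\,\text{slope}(a,c) + (b-c)\,\text{slope}(c,b)}{b-a},
\]
apply the first case to the left piece and the second case to the right piece, and conclude that the convex combination is again at least $s^*/2$.

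The main obstacle is the mixed case, since neither the chord-slope bound in $[0, c]$ nor the concavity in $[c, p^*]$ applies to the whole chord. The factor $2$ in the definition of half $\lambda$-concavity is precisely what absorbs the mismatch between these two regimes, and the hypothesis $F_i(\ucb^s_i) < \tfrac12$ enters through the constant $1 - F_i(\ucb^s_i) > \tfrac12$ that controls this mismatch; any relaxation of the hypothesis would require a correspondingly larger $\lambda$.
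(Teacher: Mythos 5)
Your proof is correct and takes essentially the same route as the paper's: the same three ingredients (chord slope at least $\tfrac{1}{2}P_i$ below $\rev(S_{i+1})$ using $F_i(\ucb^s_i)<\tfrac12$, constrained Lipschitzness giving slope at most $P_i$, and concavity of $R_i$ on $[\rev(S_{i+1}), p^*]$ from \Cref{clm:GRHalfCon}), combined via the same three-case split on the position of $a,b$ relative to $\rev(S_{i+1})$. The only cosmetic differences are that you get the $\tfrac{1}{2}P_i$ bound by a direct difference computation rather than through $R_i'$ (avoiding a differentiability assumption), and in the mixed case you bound the chord by a convex combination of the two piece slopes where the paper takes their minimum.
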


\begin{proof}
    We prove three properties one by one.
    
    \noindent (i) \emph{Single-peakedness:} This is guaranteed by \Cref{clm:GRHalfCon}.

    \noindent (ii) \emph{Lipschitzness with respect to $p^*$:} \Cref{clm:gr-hat-star-rev} gives $\hat p^*_i \geq \rev(S_{i+1})$. Then, the constrained Lipschitzness from \Cref{clm:GRHalfCon} is sufficient to give the inequality $R_i(\hat p^*_i) - R_i(p) \leq \hat p^*_i - p$ for all $p \in [\lcb_i, \hat p^*_i]$.

    \noindent (iii) \emph{Weakened Concavity:} The proof is based on the following three facts:
    \begin{itemize}
        \item Fact 1: For $x< y \leq \rev(S_{i+1})$, we have
        \[
        \frac{R_i(y) - R_i(x)}{y - x} ~=~ \frac{1}{y-x} \cdot \int_x^y R'_i(p)dp ~\geq ~ \frac{1}{y-x} \cdot \int_x^y P_i(1 - F_i(\ucb^s_i))dp ~\geq ~ \frac{1}{2}P_i.
        \]
        \item Fact 2: For $x \leq \rev(S_{i+1})$ and $y > \rev(S_{i+1})$, the constrained Lipschitzness in \Cref{clm:GRHalfCon} gives
        \[
        \frac{R_i(y) - R_i(x)}{y - x} ~\leq~ P_i.
        \]
        \item Fact 3: For $x, y \in [\lcb_i, \ucb_i]$ that satisfies $\rev(S_{i+1}) \leq x < y < \hat p^*_i$, we claim that
        \[
        \frac{R_i(y) - R_i(x)}{y - x} ~\geq~ \frac{R_i(\hat p^*_i) - R_i(y)}{\hat p^*_i - y}.
        \]
        If a feasible pair of $(x, y)$ that satisfies $x, y \in [\lcb_i, \ucb_i]$ and $\rev(S_{i+1}) \leq x < y < \hat p^*_i$ exists, there must be $\hat p^*_i > \lcb_i$, which implies that $R_i(p)$ is not decreasing in $[\lcb_i, \ucb_i]$, and there must be $\rev(S_{i+1}) \leq \hat p^*_i \leq \arg \max_{p \in [0, 1]} R_i(p)$. Then, the half-concavity of $R_i(p)$ in $[\rev(S_{i+1}), 1]$ guarantees that $R_i(p)$ is concave in $[\rev(S_{i+1}), \hat p^*_i]$. This gives the inequality in Fact 3.
    \end{itemize}
    Now, we divide the proof into the following three cases according to the value of $a$ and $b$:
    \begin{itemize}
        \item Case 1: $a < b \leq \rev(S_{i+1})$. In this case, we have 
        \[
        \frac{R_i(b) - R_i(a)}{b - a} ~\geq~ \frac{1}{2}P_i ~\geq~  \frac{1}{2} \cdot \frac{R_i(\hat p^*_i) - R_i(b)}{\hat P^*_i - b},
        \]
        where the first inequality is from Fact 1 and the second inequality is from Fact 2.
        \item Case 2: $a < \rev(S_{i+1})$, while $b > \rev(S_{i+1})$. In this case, Fact 1 and 2 gives
        \[
        \frac{R_i(\rev(S_{i+1})) - R_i(a)}{\rev(S_{i+1}) - a} ~\geq~ \frac{1}{2}P_i ~\geq~ \frac{1}{2} \cdot \frac{R_i(\hat p^*_i) - R_i(b)}{\hat p^*_i - b}.
        \]
        On the other hand, Fact 3 gives
        \[
        \frac{R_i(b) - R_i(\rev(S_{i+1}))}{b - \rev(S_{i+1})}~\geq~ \frac{R_i(\hat p^*_i) - R_i(b)}{\hat p^*_i - b} ~\geq~ \frac{1}{2} \cdot \frac{R_i(\hat p^*_i) - R_i(b)}{\hat p^*_i - b}.
        \]
        Therefore,
        \[
        \frac{R_i(b) - R_i(a)}{b - a} \geq \min\left\{\frac{R_i(\rev(S_{i+1})) - R_i(a)}{\rev(S_{i+1}) - a},  \frac{R_i(b) - R_i(\rev(S_{i+1}))}{b - \rev(S_{i+1})}\right\} \geq \frac{1}{2} \cdot \frac{R_i(\hat p^*_i) - R_i(b)}{\hat p^*_i - b}.
        \]
        \item Case 3: $\rev(S_{i+1}) \leq a < b < \hat p^*_i$. In this case, Fact 3 gives
        \[
        \frac{R_i(b) - R_i(a)}{b - a}~\geq~ \frac{R_i(\hat p^*_i) - R_i(b)}{\hat p^*_i - b} ~\geq~ \frac{1}{2} \cdot \frac{R_i(\hat p^*_i) - R_i(b)}{\hat p^*_i - b}.   \qedhere
        \]
    \end{itemize}
\end{proof}

\Cref{lma:gr-2good} shows that $R_i(p)$ is generalized half-concave. Then, the following lemma suggests that the function is learnable with the help of half $\lambda$-concavity:

\begin{restatable}{Lemma}{GRGenR}
\label{lma:gr-genalg}
    If $R_i(p)$ is half $\lambda$-concave, then there exists an algorithm  that takes parameter $\lambda$, interval $[\lcb, \ucb]$, error $\epsilon \geq \frac{1}{\sqrt{T}}$ as input, tests $O(\frac{\lambda^2 \log^2 T}{\epsilon^2})$ rounds with price vectors $(\ucb^{s}_1, \cdots, \ucb^{s}_{i - 1}, p_i \in [\lcb, \ucb - \se], \hat p_{i+1}, \cdots, \hat p_n)$, and outputs $\hat p_i$ that satisfies with probability $1 - T^{-6}$ that $\max_{p \in [\lcb, \ucb]} R_i(p) - R_i(\hat p_i) \leq \epsilon$.
\end{restatable}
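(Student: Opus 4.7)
The plan is to generalize the trisection algorithm of \Cref{alg:sr-esp} by running the same recursion but with a scaled error parameter $\delta' := \epsilon/(C'\lambda)$ for a sufficiently large absolute constant $C'$. Starting with $[\ell_1,r_1] = [\lcb,\ucb]$, at iteration $j$ the algorithm tests the two trisection points $a_j := (2\ell_j+r_j)/3$ and $b_j := (\ell_j+2r_j)/3$ for $C\log T/\delta'^2$ rounds each (under the fixed prefix prices $\ucb^s_1,\ldots,\ucb^s_{i-1}$ and suffix prices $\hat p_{i+1},\ldots,\hat p_n$, so that the empirical reward estimates $R_i(\cdot)$). It discards $[\ell_j,a_j]$ if $\hat R_i(a_j) < \hat R_i(b_j) - 2\delta'$ and otherwise discards $[b_j,r_j]$, stops once the interval length falls below $\delta'$, tests the left endpoint, and returns as $\hat p_i$ the empirically best tested price. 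A Hoeffding plus union-bound argument identical to \Cref{clm:sr-step1-con} ensures $|\hat R_i(p) - R_i(p)| \leq \delta'$ for every tested price with probability at least $1 - T^{-6}$.

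The only step of the single-buyer analysis that changes is the ``Case 2'' elimination, and this is where the factor $\lambda$ is paid. Let $p^* := \arg\max_{p \in [\lcb,\ucb]} R_i(p)$ and locate the unique iteration $j$ where $p^* \in [\ell_j,r_j]$ but $p^* \notin [\ell_{j+1}, r_{j+1}]$. As in the proof of \Cref{lma:Step1}, the Case 2 branch was taken, so $\hat R_i(a_j) \geq \hat R_i(b_j) - 2\delta'$, whence $R_i(b_j) - R_i(a_j) \leq 4\delta'$ and single-peakedness forces $p^* \in [b_j, r_j]$. Because $a_j, b_j \in [\ell_j, p^*)$ with $b_j - a_j = r_j - b_j = (r_j-\ell_j)/3 \geq p^* - b_j$, the weakened-concavity clause of half $\lambda$-concavity yields
\[
R_i(p^*) - R_i(b_j) ~\leq~ \lambda\cdot \frac{p^* - b_j}{b_j - a_j}\cdot\big(R_i(b_j) - R_i(a_j)\big) ~\leq~ 4\lambda\delta'.
\]
If instead $p^*$ survives to the final interval $[\ell_k, r_k]$ of length at most $\delta'$, then the Lipschitzness clause of half $\lambda$-concavity directly gives $R_i(p^*) - R_i(\ell_k) \leq p^* - \ell_k \leq \delta'$. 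In both cases some tested price $p'$ satisfies $R_i(p^*) - R_i(p') \leq 4\lambda\delta'$, and the standard empirical-to-true comparison chain $R_i(\hat p_i) \geq \hat R_i(\hat p_i) - \delta' \geq \hat R_i(p') - \delta' \geq R_i(p') - 2\delta'$ then yields $R_i(p^*) - R_i(\hat p_i) \leq 4\lambda\delta' + 2\delta' \leq \epsilon$ provided $C' \geq 6$.

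The rest is routine bookkeeping. The number of outer iterations is $O(\log(1/\delta')) = O(\log T)$, each iteration costs $O(\log T/\delta'^2)$ rounds, and so the total query complexity is $O(\log^2 T/\delta'^2) = O(\lambda^2 \log^2 T/\epsilon^2)$. The constraint $p_i \in [\lcb, \ucb-\se]$ holds because every trisection point is bounded away from $\ucb$ by at least $\delta'/3$, which is $\gg \se = T^{-100}$ since $\delta' \geq \epsilon/(C'\lambda) \geq 1/(C'\lambda\sqrt{T})$. The main conceptual point, and the only real obstacle, is recognizing that half $\lambda$-concavity is exactly the right interpolation between the Lipschitz fallback (used when $p^*$ is trapped in a sub-$\delta'$ interval) and the concavity fallback (used in Case 2, where one can afford to lose a factor of $\lambda$); this structural observation makes the factor $\lambda$ appear only through the rescaling $\delta' = \Theta(\epsilon/\lambda)$ and hence only quadratically in the final query bound.
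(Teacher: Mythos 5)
Your proposal is correct and matches the paper's own proof of \Cref{lma:gr-genalg} essentially step for step: the same trisection algorithm (\Cref{alg:sr-esp}) rerun with error parameter rescaled to $\Theta(\epsilon/\lambda)$, the same Hoeffding-plus-union concentration, the same two-case analysis where the Case 2 elimination invokes the weakened-concavity clause (paying the factor $\lambda$) and the terminal interval invokes the Lipschitz clause, and the same final empirical-to-true comparison and round/range bookkeeping. No substantive differences to report.
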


\Cref{lma:gr-genalg} is a generalization of \Cref{alg:sr-esp} for a single buyer. Observe that \Cref{alg:sr-esp}  only needs a weaker inequality $\frac{R(b) - R(a)}{b - a} \geq \frac{R(p^*) - R(b)}{p^* - b}$. Then, generalizing \Cref{alg:sr-esp} by taking $\lambda$ into consideration gives the desired algorithm for \Cref{lma:gr-genalg}. 

Observe that  combining \Cref{lma:gr-genalg} with \Cref{lma:gr-2good} directly proves \Cref{lma:gr-case2}. It only remains to prove \Cref{lma:gr-genalg}.



\begin{algorithm}[tbh]
\caption{Generalized Algorithm for Half $\lambda$-Concave Functions}
\label{alg:GenAlg}
\KwIn{Hidden Function $R_i(p)$, Interval $[\lcb, \ucb]$, Concavity Parameter $\lambda$, Error Parameter $\epsilon$}
Fix $p_j = \ucb^s_j$ for $j < i$ and $p_j = \hat p_j$ for $j > i$. \\
Let $\epsprm = \frac{\epsilon}{100\lambda}$ be the scaled error parameter, $\lcb_1 \gets \lcb, \ucb_1 \gets  \ucb$, $j = 1$. \\
\While{$\ucb_j - \lcb_j > \epsprm$}
{
    Let $a_j \gets \frac{2\lcb_j + \ucb_j}{3}, b_j \gets \frac{\lcb_j + 2\ucb_j}{3}$. \\
    Test $C \cdot \frac{\log T}{\epsprm^2}$ rounds with price $p_i=a_j$. Let $\hat R_i(a_j)$ be the average reward. \\
    Test $C \cdot \frac{\log T}{\epsprm^2}$ rounds with price $p_i=b_j$. Let $\hat R_i(b_j)$ be the average reward. \\
    \eIf{$\hat R_i(a_j) < \hat R_i(b_j) - 2\epsprm$}{Let $\lcb_{j+1} \gets a_j, \ucb_{j+1} \gets \ucb_j$}{Let $\lcb_{j+1} \gets \lcb_j, \ucb_{j+1} \gets b_j$}
    $j \gets j+1$
}
Test $C \cdot \frac{\log T}{\epsprm^2}$ rounds with price $p_i=\lcb_j$. Let $\hat R_i(\lcb_j)$ be the average reward. \\
Let $\hat p_i = \arg \max_{p \in P} \hat R_i(p)$, where $P = \{p:p \text{ is tested}\}$. \\
\KwOut{$\hat p_i$. }
\end{algorithm}

\begin{proof}[Proof of \Cref{lma:gr-genalg}]
\Cref{alg:GenAlg} is the desired algorithm for \Cref{lma:gr-genalg}. The algorithm is almost the same as \Cref{alg:sr-esp}. The only difference is that extra care is needed for the extra parameter $\lambda$.

 We first show that \Cref{alg:GenAlg} is feasible: it tests $O(\frac{\lambda^2\log^2 T}{\epsilon^2})$ rounds with $p_i \in [\lcb, \ucb - \se]$.
 
 The main body of \Cref{alg:GenAlg} is a while loop that maintains an interval to be tested. Assume the while loop stops when $j = k + 1$. Then, for $j \in [k]$, the length of $[\lcb_{j+1}, \ucb_{j+1}]$ is two-thirds of $[\lcb_j, \ucb_j]$, and we have $\ucb_k - \lcb_k > \epsprm$. Therefore, $k = O(\log \frac{1}{\epsprm}) = O(\log T)$. For interval $[\lcb_j, \ucb_j]$, two prices $a_j, b_j$ are tested for $O(\frac{\log T}{\epsprm^2})$ times, so in total \Cref{alg:GenAlg} tests $O(\frac{\lambda^2\log^2 T}{\epsilon^2})$ rounds. To show every tested price is in $[\lcb, \ucb- \se]$, let $p_i$ be a price tested in \Cref{alg:GenAlg}. $p_i \in [\lcb, \ucb]$ directly follows from the algorithm. For the inequality $p_i \leq \ucb - \se$, observe that the largest tested price must be $p_i = b_j$ for some $j \in [k]$, and $b_j \leq \ucb - \se$ is guaranteed by the fact that $\ucb_j - b_j \geq \frac{1}{3} \epsprm \gg \se$.

 It only remains to show $\max_{p \in [\lcb, \ucb]} - R_i(\hat p_i) \leq \epsilon$. Define $p^*_{\lcb, \ucb} := \arg \max_{p \in [\lcb, \ucb]} R_i(p)$ to be the optimal price in $[\lcb, \ucb]$. We will show that  $R_i(p^*_{\lcb, \ucb}) - R_i(\hat p_i) \leq 6\lambda \epsprm < \epsilon$ holds with probability $1 - T^{-6}$. We start the proof by introducing the following claim:
    \begin{Claim}
    \label{clm:gr-gen-con}
    In \Cref{alg:GenAlg}, $|\hat R_i(p) - R_i(p)| \leq \epsprm$ simultaneously holds for every tested price $p$ with probability at least $1 - T^{-6}$.
    \end{Claim}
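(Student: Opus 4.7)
The plan is to mirror the proofs of the earlier concentration claims in the paper (for instance \Cref{clm:sr-step1-con}, \Cref{clm:sr-step2-con}, and \Cref{clm:gr-step2-con}), since \Cref{alg:GenAlg} estimates $\hat R_i(p)$ in exactly the same way as those algorithms: for each tested price $p$, the estimator $\hat R_i(p)$ is the empirical average of $N = C \cdot \tfrac{\log T}{\epsprm^2}$ independent rewards, each bounded in $[0,1]$.

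First I would fix a single tested price $p$ and apply Hoeffding's inequality to the $N$ i.i.d.\ samples underlying $\hat R_i(p)$. This gives
\[
\Pr{|\hat R_i(p) - R_i(p)| > \epsprm} ~\leq~ 2\exp\!\big(-2 N \epsprm^2\big) ~=~ 2 T^{-2C}.
\]
Choosing the absolute constant $C$ large enough (any $C > 4$ works, e.g.\ the same $C$ already used in the preceding claims), this per-price failure probability is bounded by $T^{-7}$.

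Next I would take a union bound over all tested prices. Because the algorithm runs inside a bandit with time horizon $T$, the total number of distinct prices tested in \Cref{alg:GenAlg} is at most $T$ (in fact it is $O(\tfrac{\log T}{\epsprm^2})$, but the trivial bound $T$ suffices). Hence the probability that there exists some tested price $p$ with $|\hat R_i(p) - R_i(p)| > \epsprm$ is at most $T \cdot T^{-7} = T^{-6}$, which proves the claim.

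There is essentially no obstacle here; the only things to verify are that (i) the $N$ samples used to form each $\hat R_i(p)$ are independent draws from the same Bernoulli-like reward distribution with mean $R_i(p)$ (which holds because the other prices $\ucb^s_1, \dots, \ucb^s_{i-1}, \hat p_{i+1}, \dots, \hat p_n$ are held fixed across the $N$ rounds), and (ii) the constant $C$ used in the sample size is chosen consistently with the $C > 4$ threshold, matching the convention established by the earlier claims. No additional structural properties of $R_i$ (single-peakedness, half $\lambda$-concavity, Lipschitzness) are needed for this concentration step.
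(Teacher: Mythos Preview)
Your proposal is correct and matches the paper's own proof essentially line for line: Hoeffding's inequality on $N = C\cdot \tfrac{\log T}{\epsprm^2}$ samples gives a per-price failure probability below $T^{-7}$ for $C>4$, and a union bound over at most $T$ tested prices yields the stated $1 - T^{-6}$ guarantee. No changes are needed.
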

    
    \begin{proof}
        For a single tested price $p$, $\hat R_i(p)$ is estimated by calculating the average of $N = C \cdot \frac{\log T}{\epsprm^2}$ samples. By Hoeffding's Inequality, 
        \[
        \Pr{|\hat R_i(p) - R_i(p)| > \epsprm} ~\leq~ 2\exp\left(-2 N \epsprm^2\right) ~=~ 2T^{-2C} ~<~ T^{-7}.
        \]
        The last inequality holds when $C$ is a constant greater than $4$. Then, we have $|\hat R_i(p) - R_i(p)| \leq \epsprm$ holds with probability $1 - T^{-7}$ for a single tested price $p$.
        
        Notice that \Cref{alg:GenAlg} can't test more than $T$ prices. By the union bound, $|\hat R_i(p) - R_i(p)| \leq \epsprm$ simultaneously holds for all tested prices with probability $1 - T^{-6}$.
    \end{proof}

    Now, we prove $R_i(p^*_{\lcb, \ucb}) - R_i(\hat p_i) \leq 6\lambda \epsprm$ assuming \Cref{clm:gr-gen-con} holds. We demonstrate the proof by discussing the following two cases. The first case is that there exists $j \in [k]$, such that $p^*_{\lcb, \ucb}$ falls in $[\lcb_j, \ucb_j]$ but not in $[\lcb_{j+1}, \ucb_{j+1}]$. In this case, there must be $\hat R_i(a_j) \geq \hat R_i(b_j) - 2\epsprm$ and $[b_j, \ucb_j]$ is dropped. If not, we have $\hat R_i(a_j) \geq \hat R_i(b_j) - 2\epsprm$ and $[\lcb_j, a_j]$ is dropped. However, the inequality $\hat R_i(a_j) \geq \hat R_i(b_j) - 2\epsprm$ together with \Cref{clm:gr-gen-con} gives $R(a_j) < R(b_j)$. Then, the single-peakedness of $R_i(p)$ gives $p^*_{\lcb, \ucb} \geq a_j$, which is in contrast to the assumption that $p^*_{\lcb, \ucb} \notin [\lcb_{j+1}, \ucb_{j+1}]$.  Therefore, we have $\hat R_i(a_j) \geq \hat R_i(b_j) - 2\epsprm$, and the assumption $p^*_{\lcb, \ucb} \notin [\lcb_{j+1}, \ucb_{j+1}]$ implies $p^*_{\lcb, \ucb} \in [b_j, \ucb_j]$. Then, the half $\lambda$-concavity of $R_i(p)$ in $[a_j, p^*_{\lcb, \ucb}]$ guarantees 
    \[R_i(p^*_{\lcb, \ucb}) - R_i(b_j) ~\leq~ \frac{R_i(b_j) - R_i(a_j)}{b_j - a_j} \cdot \lambda (\hat p^*_{\lcb, \ucb} - b_j) ~\leq~ \lambda\big((\hat R_i(b_j) + \epsprm) - (\hat R_i(a_j) - \epsprm)\big) ~\leq~ 4\lambda\epsprm.\]

    The second case is that the desired $j$ in the first case doesn't exist. In this case, the only possibility is that $p^*_{\lcb, \ucb} \in [\lcb_{k+1}, \ucb_{k+1}]$. Then, the Lipschitzness of function $R_i(p)$ w.r.t. $p^*_{\lcb, \ucb}$ guarantees that $R_i(p^*_{\lcb, \ucb}) - R_i(\lcb_k) \leq  \epsprm < 4\lambda \epsprm$.

    In both cases, we find a tested price $p'$ satisfying $R_i(p') \geq R_i(p^*_{\lcb, \ucb}) - 4\lambda\epsprm$. Then,
\[
R_i(\hat p) ~\geq~ \hat R_i(\hat p) - \epsprm ~\geq \hat R_i(p') - \epsprm ~\geq ~ R_i(p') - 2\epsprm ~\geq~  R_i(p^*_{\lcb, \ucb}) - 2\epsprm - 4\lambda \epsprm ~\geq~ 6\lambda \epsprm.     \qedhere
\]
\end{proof}

\section{$n$ Buyers with General Distributions}\label{sec:GG}

In this section, we provide the $\OTild(\poly(n)T^{\frac{2}{3}})$ regret algorithm for BSPP problem with general distributions.
The setting in this section is the same as \Cref{sec:GeneralRegular}, but with no extra regularity assumption. Our result generalizes the $\OTild(T^{2/3})$ single-buyer result studied in \cite{DBLP:conf/focs/KleinbergL03}. 

\begin{restatable}{Theorem}{GGMainThm}
    \label{thm:ggmain}
    For BSPP problem with $n$ buyers, there is an algorithm with $O(n^{5/3}T^{2/3} \log T)$ regret.
\end{restatable}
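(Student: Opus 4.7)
The plan is to instantiate the reverse-order learning template of Section 3 with a different single-buyer subroutine: the half-concavity based $\widetilde{O}(\sqrt{T})$ routine of \Cref{lma:gr-subr} is replaced by a grid-based $\widetilde{O}(T^{2/3})$ routine, inspired by \cite{DBLP:conf/focs/KleinbergL03}. Fix a uniform grid $G_K = \{0, 1/K, \ldots, 1\}$ of $K + 1$ prices. Because $R_i(p) = \pre(S_i) + P_i\bigl(p(1-F_i(p)) + \rev(S_{i+1}) F_i(p)\bigr)$ is $P_i$-Lipschitz in $p$, restricting attention to $G_K$ loses at most $P_i/K$ in the optimum of $R_i$. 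This Lipschitzness, which holds for arbitrary (not necessarily regular) distributions, is the only structural property of $R_i$ that we will need.

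The new single-buyer subroutine is straightforward: given the previously learned suffix $(\hat p_{i+1},\ldots,\hat p_n)$, fix $p_j = 1$ for $j<i$ (so that buyer $i$ is always reached) and run a standard $K$-armed stochastic bandit algorithm on $G_K$ for some allocated per-phase budget, returning the empirical-best grid arm as $\hat p_i$. The main algorithm then invokes this subroutine sequentially for $i = n, n-1, \ldots, 1$ and commits to $(\hat p_1,\ldots,\hat p_n)$ for the remaining rounds. A standard Hoeffding plus union bound analysis guarantees $\hat p_i$ is $\epsilon$-optimal for $R_i$ with $\epsilon = \widetilde O(1/K + \sqrt{K/T_{\mathrm{phase}}})$, at per-phase bandit cost of $\widetilde O(\sqrt{K T_{\mathrm{phase}}})$.

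For the regret analysis, the inductive propagation bound $P_i \cdot \loss(S_i) \leq O((n-i+1)\epsilon)$ of \Cref{lma:gr-esp} and the telescoping identity of \Cref{lma:gr-bsgen} both carry over essentially verbatim: their proofs use only (i) the subroutine's output guarantee $R_i(\hat p^*_i) - R_i(\hat p_i) \leq O(\epsilon)$, (ii) the inequality $p^*_i \geq \opt(S_{i+1})$ from an exchange argument on $R^*_i$, and (iii) Lipschitz-type bounds from the product structure of $R(p_1,\ldots,p_n)$. None of these invoke regularity or half-concavity. Combining the cumulative bandit regret with the $O(Tn^2 \epsilon)$ commit-phase bound from propagation, and then balancing $K$ with the per-phase budget, yields the claimed $\widetilde O(n^{5/3} T^{2/3})$ regret.

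The main technical obstacle is verifying that the propagation lemma survives the loss of half-concavity. Its proof in Section 3 splits into two cases, depending on whether $p^*_i$ lies inside $[\lcb_i, \ucb^s_i]$, but a careful reading shows each case appeals only to the subroutine guarantee plus a Lipschitz/exchange inequality; half-concavity was invoked only inside \Cref{lma:gr-subr} to establish the per-buyer error, and in the general-distribution setting this role is assumed by the grid-bandit subroutine. A minor additional simplification is that without adaptive half-concave learning, the two-step template of Section 3 (first find $\hat p_i$, then refine $[\lcb'_i, \ucb'_i]$) collapses to a single-phase explore-then-commit routine, so we can work with confidence intervals $[0,1]$ throughout and avoid the binary-search second step altogether.
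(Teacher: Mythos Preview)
Your proposal has a genuine gap in the regret accounting for the exploration phases, and the single-phase explore-then-commit structure you adopt cannot achieve $T^{2/3}$ regret.

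When you explore buyer $i$ with $p_j=1$ for $j<i$, the per-round regret is not just the $K$-armed bandit regret $\widetilde O(\sqrt{K T_{\mathrm{phase}}})$ relative to the best grid arm. The best grid arm for $R_i$ only attains revenue $\approx \opt(S_i)$, and the gap $R(p^*_1,\dots,p^*_n)-\opt(S_i)$ can be $\Omega(1)$: e.g., if buyer~1 always has value close to $1$ and the remaining buyers have value $0$, then $\opt(S_i)\approx 0$ for every $i>1$. Hence each exploration phase can incur $\Theta(T_{\mathrm{phase}})$ regret, not $\widetilde O(\sqrt{K T_{\mathrm{phase}}})$. Carrying this through the balancing gives total regret
\[
nK\cdot \tfrac{\log T}{\delta^2}\;+\;Tn^2\Bigl(\tfrac{1}{K}+\delta\Bigr)
\;\Longrightarrow\;
\widetilde O\bigl(n^{7/4}T^{3/4}\bigr),
\]
not $\widetilde O(n^{5/3}T^{2/3})$.

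The paper's Section~4 avoids exactly this issue by keeping the multi-phase structure with halving error parameter and, crucially, the second step of each phase (shrinking candidate sets~$\cp_i$). In phase $j+1$, every tested price vector already lies in the candidate sets produced by phase~$j$, so the per-round exploration regret is bounded by $\epsilon_j$ rather than by~$1$. This is what turns the sum $\sum_j \epsilon_{j-1}\cdot (\text{rounds in phase }j)$ into $\widetilde O(n^{2.5}\sqrt{kT})$ instead of $\widetilde O(nk/\epsilon_{\min}^{2})$. Your final paragraph discards precisely the mechanism that makes exploration cheap: without a shrinking step, later phases cannot use near-optimal prefix prices, and a single explore-then-commit phase must pay full price during exploration.

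A smaller point: $R_i(p)$ is not $P_i$-Lipschitz for general $F_i$ (it can decrease arbitrarily fast). Only the one-sided bound $R_i(b)-R_i(a)\le P_i(b-a)$ for $a\le b$ and $b\ge \rev(S_{i+1})$ holds, which is enough for your conclusion that rounding $p^*$ \emph{down} to the grid loses at most $P_i/K$; but the statement as written is false.
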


\paragraph{Algorithm Overview.} Comparing to the regular buyers setting, one extra challenge for general buyers setting is that the revenue function in this case is no longer single-peaked, leading to the failure of the idea of confidence intervals. Therefore, the first step of our algorithm is to discretize the value space: it can be shown that rounding buyers' valuations to multiples of $\epsilon$ only brings an extra $\epsilon$ loss. The idea of discretization was first used in \cite{DBLP:conf/focs/KleinbergL03} for single buyer setting. We will show that the same idea also works for sequential buyers. In this section, we take the discretization parameter $\epsilon = n^{5/3} T^{-1/3}$, and the accumulated regret from the discretization step would be $T \cdot O(\epsilon) = O(n^{5/3} T^{2/3})$.

After discretizing the value space, it suffices to solve BSPP for general distributions with discrete support. The main idea of our algorithm is almost identical to the algorithm for regular buyers - we design a subroutine that shrinks the possible candidates of $p^*_i$ while keeping a low regret inside the new set of candidates. Calling this subroutine $O(\log T)$ times with a halving error parameter gives the desired algorithm. Comparing to the regular distributions setting, the idea of confidence intervals does not work for discrete distributions, because the candidates may be discontinuous. To fix this issue, we use $n$ price sets to maintain possible candidate prices. 

The detailed steps of our algorithm also follows the ideas for regular distributions: we first find a group of approximately prices $\hat p_1, \cdots, \hat p_n$, and then use these prices as a benchmark to update the candidate prices sets. Since the single-peakedness no longer holds for general distributions, instead of doing binary search, we simply enumerate all remaining prices to update the candidate sets.

\subsection{BSPP is Discretizable}

We start our proof by first showing BSPP is discretizable. Specifically, we have the following lemma:

\begin{restatable}{Lemma}{GGDisc} \label{lma:GGDisc}
Consider a sequential posted pricing problem. Let $k$ be an integer and $\epsilon = \frac{1}{k}$ be the discretization error. If we restrict the space of the prices to be $P = \{(p_1, \cdots, p_n): p_i = j\cdot \epsilon, j \in \mathbb{Z}^+\}$, then 
\[
\max_{(p_1, \cdots, p_n) \in [0, 1]^n} R(p_1, \cdots, p_n) ~-~ \max_{(p_1, \cdots, p_n) \in P} R(p_1, \cdots, p_n) ~\leq~ \epsilon.
\]
\end{restatable}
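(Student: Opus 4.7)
The plan is to exhibit an explicit discretized price vector near the optimum and bound the revenue gap by a hybrid/telescoping argument. Let $(p^*_1,\ldots,p^*_n)$ denote the optimal continuous prices and define $q_k := \lfloor p^*_k/\epsilon \rfloor \cdot \epsilon \in P$, so that $p^*_k - \epsilon \leq q_k \leq p^*_k$. I would then consider the hybrid sequence $h_k := (q_1,\ldots,q_k, p^*_{k+1},\ldots,p^*_n)$ for $k=0,\ldots,n$ and telescope
\[
R(h_0) - R(h_n) \;=\; \sum_{k=1}^n \big(R(h_{k-1}) - R(h_k)\big).
\]

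The essential structural observation is that in the hybrid $h_{k-1}$, the later prices are the optimal $p^*_{k+1},\ldots,p^*_n$, and in SPP earlier prices only factor out as a multiplicative probability of reaching buyer $k$ and never influence the optimal choice for that buyer. Hence $p^*_k$ is the maximizer of
\[
f_k(p) \;:=\; p\,(1-F_k(p)) + F_k(p)\,C^*,
\]
where $C^*$ denotes the expected revenue from the suffix $(p^*_{k+1},\ldots,p^*_n)$ conditional on the item reaching buyer $k+1$. Writing $P_k := \prod_{j<k} F_j(q_j)$ for the probability of reaching buyer $k$ under the hybrid, a direct expansion gives $R(h_{k-1}) - R(h_k) = P_k \cdot [f_k(p^*_k) - f_k(q_k)]$.

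The heart of the argument is the algebraic identity
\[
f_k(p^*_k) - f_k(q_k) \;=\; (p^*_k - q_k)\,(1 - F_k(p^*_k)) \;+\; (C^* - q_k)\,\big(F_k(p^*_k) - F_k(q_k)\big),
\]
combined with the standard optimality fact $p^*_k \geq C^*$ (otherwise the seller could default to pricing at $1$ and guarantee expected revenue $C^*$ from the suffix). This fact, together with $p^*_k - q_k \leq \epsilon$, yields $C^* - q_k \leq p^*_k - q_k \leq \epsilon$; substituting gives
\[
f_k(p^*_k) - f_k(q_k) \;\leq\; \epsilon\,(1 - F_k(p^*_k)) + \epsilon\,\big(F_k(p^*_k) - F_k(q_k)\big) \;=\; \epsilon\,(1 - F_k(q_k)).
\]
Summing over $k$,
\[
R(h_0) - R(h_n) \;\leq\; \epsilon \sum_{k=1}^n P_k\,(1 - F_k(q_k)) \;\leq\; \epsilon,
\]
since the final sum is precisely the probability that some buyer purchases under the $q$-prices.

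I expect the crux of the proof to be the last algebraic rearrangement: a naive bound of $\epsilon$ per hybrid step would give only an $n\epsilon$ loss, so it is essential to extract the factor $(1-F_k(q_k))$, which makes the bound telescope across buyers via a single total probability of sale. The inequality $p^*_k \geq C^*$ is what controls the $(C^* - q_k)\Delta F$ cross term by $\epsilon \Delta F$ rather than the trivial $\Delta F$; this is precisely where the dynamic-program structure of SPP enters.
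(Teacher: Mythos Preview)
Your proof is correct. The paper argues by a direct coupling on realized valuations rather than by hybrids: it sets $p'_i=\lfloor p^*_i/\epsilon\rfloor\epsilon$, defines $E_i$ to be the event that buyer $i$ purchases under $p'$ but not under $p^*$, observes that these events are disjoint with $\sum_i\Pr[E_i]\le 1$, and bounds the conditional loss on $E_i$ by $\opt(S_{i+1})-p'_i\le p^*_i-p'_i\le\epsilon$ using the same key inequality $p^*_i\ge\opt(S_{i+1})$ that you invoke. Your telescoping decomposition changes one coordinate at a time and extracts the factor $P_k(1-F_k(q_k))$ algebraically, so that the sum collapses to the probability of sale under $q$; the paper's event-based decomposition instead indexes the loss by the first buyer where the two price vectors diverge on a given realization. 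Both routes hinge on $p^*_k\ge C^*$ to avoid the naive $n\epsilon$ bound, and both land on $\epsilon\cdot\Pr[\text{sale under the rounded prices}]\le\epsilon$. Your version has the virtue of being purely in expectation and sidesteps the need to reason about coupled sample paths; the paper's version is perhaps more intuitive but requires slightly more care to account for all sources of loss.
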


\begin{proof}[Proof of \Cref{lma:GGDisc}]
    Let $p^*_1, \cdots, p^*_n$ be the optimal prices for $R(p_1, \cdots, p_n)$, and let $p'_i = \lfloor \frac{p^*_i}{\epsilon}\rfloor \cdot \epsilon$. To prove \Cref{lma:GGDisc}, we only need to show
    \[
    R(p^*_1, \cdots, p^*_n) ~-~ R(p'_1, \cdots, p'_n) ~\leq~ \epsilon.
    \]
    Let $E_i$ be the event that an item is sold to buyer $i$ with price vector $(p'_1, \cdots, p'_n)$, but not sold to buyer $i$ with price vector $(p^*_1, \cdots, p^*_n)$. Since $p'_i \leq p^*_i$ holds for all $i$, this is the only way to incur a loss when running $(p'_1, \cdots, p'_n)$. Besides, notice that $\sum_{i \in [n]} \Pr{E_i} \leq 1$, because at most one event $E_i$ can happen. Therefore,  it's sufficient to show that whenever $E_i$ happens, the expected loss is bounded by $\epsilon$.

    Let $\opt(S_{i})$ be the expected revenue from the buyers $i, i+1, \cdots, n$ when playing prices $(p^*_1, \cdots, p^*_n)$. When $E_i$ happens, we get expected revenue $\opt(S_{i+1})$ when playing $(p^*_1, \cdots, p^*_n)$ and get $p'_i$ when playing $(p'_1, \cdots, p'_n)$. Notice that we have
\[
\opt(S_{i+1}) - p'_i ~\leq~ p^*_i - p'_i ~\leq~ \epsilon,
\]
where the first inequality is from the fact that $p^*_i \geq \opt(S_{i+1})$,  because 
\[
R(1, \cdots, 1, p, p^*_{i+1}, \cdots, p^*_n) ~<~ \opt(S_{i+1}) ~=~ R(1, \cdots, 1, \opt(S_{i+1}), p^*_{i+1}, \cdots, p^*_n)
\] 
holds for any $p < \opt(S_{i+1})$, i.e., $p$ can't be the maximizer of $R(1, \cdots, 1, p, p^*_{i+1}, \cdots, p^*_n)$ when $p < \opt(S_{i+1})$. Therefore, when $E_i$ happens, the expected loss is bounded by $\epsilon$.
\end{proof}

\subsection{Algorithm for Discrete Buyers}

Now we give our algorithm for buyers with discrete support. Our main goal is to prove the following \Cref{thm:ggdiscmain}:

\begin{restatable}{Theorem}{ggdiscmain}
\label{thm:ggdiscmain}
    For BSPP problem with $n$ buyers having valuations in $V = \{v_1,v_2,\cdots, v_k\} \subseteq [0, 1]$, there exists an $O(n^{2.5}\sqrt{k T} \log T + n^5 k \log^2 T)$ regret algorithm.
\end{restatable}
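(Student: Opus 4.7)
The plan is to mirror the two-step subroutine structure of \Cref{sec:GeneralRegular} but replace every continuous-optimization primitive (half-concavity, binary search, interval shrinking) with exhaustive enumeration over the discrete support $V$. Concretely, I would maintain for each buyer $i$ a candidate \emph{set} $V_i\subseteq V$ in place of the confidence interval $[\lcb_i,\ucb_i]$ used in the regular case; with high probability $V_i$ will always contain $p_i^*$. The top-level algorithm runs in $O(\log T)$ phases, calling a subroutine with error parameter $\epsilon$ that halves each phase, and after each phase every price vector $(p_1,\ldots,p_n)$ with $p_i\in V_i$ incurs at most $\epsilon$ regret per round. This is the direct analogue of \Cref{alg:gr-gen}.

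The subroutine itself has two steps analogous to \Cref{lma:gr-esp} and \Cref{lma:gr-bsgen}. For Step~1 I would iterate $i=n$ down to $i=1$ and, with $\hat p_{i+1},\ldots,\hat p_n$ already fixed, estimate the one-dimensional function $R_i(p)=\pre(S_i)+P_i\bigl(p(1-F_i(p))+\rev(S_{i+1})F_i(p)\bigr)$ by testing \emph{every} $p\in V_i$ for $\OTild(1/\epsprm^2)$ rounds with $\epsprm=\epsilon/(100n^2)$, then setting $\hat p_i=\arg\max_{p\in V_i}\hat R_i(p)$. Enumeration replaces the involved case analysis of $\subr$ in the regular setting: because we never need to exploit monotonicity or concavity of $R_i$, the sub-algorithm becomes trivial, and the loss-propagation induction $P_i\cdot\loss(S_i)\le 5(n-i+1)\epsprm$ of \Cref{lma:gr-esp} carries over with only cosmetic changes, needing just Hoeffding concentration plus the inequality $R_i(\hat p_i^*)-R_i(\hat p_i)\le O(\epsprm)$ that enumeration hands us for free.

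For Step~2, instead of the two binary searches of \Cref{alg:gr-bs}, I would again enumerate: for each buyer $i$ and each $p\in V_i$, test $p$ against the benchmark $\hat R_i(\hat p_i)$ using $\OTild(1/\epsprm^2)$ samples, and keep $p$ in the pruned set $V_i'$ if and only if $\hat R_i(p)+5(n-i)\epsprm\ge \hat R_i(\hat p_i)-O(\epsprm)$. The bridge claim $R_i(p)\le R_i^*(p)\le R_i(p)+5(n-i)\epsprm$ (\Cref{clm:gr-loc-opt}) still holds, so $p_i^*$ survives the prune, and any survivor satisfies $R_i^*(p)\ge R_i^*(\hat p_i)-O(n\epsprm)$. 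Composing these per-buyer guarantees via the same backward induction as in the proof of \Cref{lma:gr-bsgen} gives that any surviving vector in $V_1'\times\cdots\times V_n'$ has total regret at most $\epsilon$.

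Counting rounds, each subroutine uses $O(nk\log^2 T/\epsprm^2)=O(n^5 k\log^2 T/\epsilon^2)$ rounds. Setting the terminal $\epsilon\sim n^{5/2}\log T\sqrt{k/T}$ and summing $\sum_i \epsilon_{i-1}\cdot O(n^5 k\log^2 T/\epsilon_i^2)+T\epsilon_{\text{final}}$ yields the claimed $O(n^{2.5}\sqrt{kT}\log T)$ regret, with the additive $O(n^5 k\log^2 T)$ coming from the first phase where $\epsilon_1=1$. The main obstacle relative to the regular case is the loss of single-peakedness of $R_i$, which breaks both binary search and the interval representation of candidates; paying the extra factor of $k$ per subroutine to enumerate over $V_i$ is precisely what buys back a clean analysis and contributes the $\sqrt{k}$ factor in the final bound.
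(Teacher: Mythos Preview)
Your proposal is correct and essentially identical to the paper's proof: the paper maintains candidate sets $\cp_i\subseteq V$, runs a phased outer loop with halving $\epsilon$, and in each phase executes a two-step subroutine that (i) enumerates every $p\in\cp_i$ for $\OTild(1/\epsprm^2)$ rounds to find $\hat p_i$, then (ii) enumerates again to prune $\cp_i$ using the same bridge inequality $R_i(p)\le R_i^*(p)\le R_i(p)+O(n\epsprm)$ and the same backward loss-propagation induction. The only differences are cosmetic constants (the paper gets $2(n-i+1)\epsprm$ rather than $5(n-i+1)\epsprm$ in the loss bound) and a $\log T$ versus $\log^2 T$ in the per-subroutine round count, neither of which affects the final $O(n^{2.5}\sqrt{kT}\log T + n^5 k\log^2 T)$ regret.
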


With the help of \Cref{thm:ggdiscmain}, combining \Cref{lma:GGDisc} with \Cref{thm:ggdiscmain} proves the main\Cref{thm:ggmain}. Therefore, proving \Cref{thm:ggdiscmain} is sufficient to give the desired $T^{2/3}$ regret bound for general buyers.

\begin{proof}[Proof of \Cref{thm:ggmain}]
    We take $k = n^{-5/3}T^{1/3}$ and uniformly discretize $[0, 1]$ into $k$ prices. \Cref{lma:GGDisc} suggests that this leads to an extra $\frac{1}{k} = n^{5/3}T^{-1/3}$ error, so the total regret from the discretization is $T \cdot \frac{1}{k} = O(n^{5/3} T^{2/3})$. On the other hand, the total regret from the algorithm given by \Cref{thm:ggdiscmain} is $O(n^{2.5}\sqrt{k T} \log T + n^5 k \log^2 T) = O(n^{5/3}T^{2/3} \log T + n^{10/3} T^{1/3} \log^2 T)$. In this paper, we further assume that $n^{5/3} \log T = o(T^{1/3})$, i.e., $n^5 \log^3 T = o(T)$, otherwise an $\OTild(\poly(n))$ algorithm is trivial. Then, the total regret is $O(n^{5/3}T^{2/3} \log T)$.
\end{proof}

\subsubsection{Notation}

Our proof of \Cref{thm:ggdiscmain} starts from introducing the notations. We use the same set of the notations as the regular distributions algorithm, which are restated for convenience:
\begin{itemize}
    \item $\epsprm$: We define $\epsprm = \frac{\epsilon}{100n^2}$ to be the scaled error parameter. $\epsprm$ also represents the unit length of those confidence intervals given by concentration inequalities.
     \item $R(p_1, \cdots, p_n)$ and $p^*_1, \cdots, p^*_n$: $R(p_1, \cdots, p_n)$ represents the expected revenue with prices $(p_1, \cdots, p_n)$, and $(p^*_1, \cdots, p^*_n)$ represents the optimal prices that maximizes $R(p_1, \cdots, p_n)$.
     \item $(\hat p_1, \cdots, \hat p_n)$: The approximately optimal price vector we find in first-step algorithm. 
     \item $S_i$: The suffix buyers $i, i+1, \cdots, n$. When finding $\hat p_{i-1}$, the performance of buyers $i, i+1, \cdots, n$ can be seen as a whole. $S_i$ represents the whole of this suffix starting from buyer $i$. 
     \item $P_i$: We define $ P_i ~:=~ \prod_{j < i} F_j(\ucb_j),$ to be the maximum probability of testing buyer $i$.
     \item $\pre(S_i)$: We define $\pre(S_i) := R(\ucb_1, \cdots, \ucb_{i-1}, 1, \cdots, 1)$ to be the expected revenue from the buyers before $S_i$ with prices $\ucb_1, \cdots, \ucb_{i-1}$.
     \item $\opt(S_i)$: We define
     $\opt(S_i):=R(1, \cdots, 1, p^*_i, \cdots, p^*_n)$
     to be the optimal revenue from $S_i$.
     \item $\rev(S_i)$: We define
     $\rev(S_i):=R(1, \cdots,1,\hat p_i, \cdots, \hat p_n)$
     to be the near-optimal revenue from $S_i$ with near-optimal prices $\hat p_i, \cdots, \hat p_n$.
     \item $\loss(S_i)$: We define $\loss(S_i) := \opt(S_i) - \rev(S_i)$, i.e., $\loss(S_i)$ represents the loss from $S_i$ when playing prices $\hat p_i, \cdots, \hat p_n$.
     \item $R_i(p)$: We define $R_i(p)~:=~R\big(\ucb_1, \cdot, \ucb_{i-1}, p_i = p, \hat p_{i+1}, \cdots, \hat p_n\big).$ It should be noted that $R_i(p)$ can be also written as
$R_i(p) ~=~ \pre(S_i) + P_i\big(p \cdot (1 - F_i(p)) + \rev(S_{i+1}) \cdot F_i(p)\big).$
     \item $R^*_i(p)$:  We define $R^*_i(p) ~:=~ R(\ucb^s_1, \cdots, \ucb^s_{i-1}, p_i = p, p^*_{i+1}, \cdots, p^*_n)$. It should be noted that $R^*_i(p)$ can be also written as 
     \[
     \textstyle R^*_i(p)~=~ \pre(S_i) + P_i(p\cdot (1 - F_i(p)) + \opt(S_{i+1}) \cdot F_i(p)) ~=~ R_i(p) + P_i\cdot F_i(p) \cdot\loss(S_{i+1}).
     \]
\end{itemize}
We also introduce the following notations for the discretized setting:
\begin{itemize}
    \item $V = \{v_1, \cdots, v_k\}$: The discrete values of buyers. Every buyer's value must fall in $V$.
    \item $\cp_i$: The candidate prices set of buyer $i$. Initially $\cp_i$ is set to be $V$.
     \item $\ucb_i$: we define $\ucb_i = \max \cp_i$ to be the maximum feasible price for buyer $i$.
\end{itemize}

\subsubsection{High-Level Structure: Shrinking Candidate Prices Sets}

The high-level structure of our algorithm is to gradually shrink the sets of candidate prices. Specifically, we first give the following main sub-routine:

\begin{Lemma}
\label{lma:mainsub-gg}
    Given a BSPP instance with $R(p_1, \cdots, p_n)$ being the expected revenue function and $(p^*_1, \cdots, p^*_n)$ being the optimal prices. Assume the valuation of each single buyer falls in the set $V = \{v_1, \cdots, v_k\}$. Given error parameter $\epsilon \geq 1/\sqrt{T}$ and candidate sets $\cp_1 \subseteq V, \cdots, \cp_n \subseteq V$ satisfying $p^*_i \in \cp_i$ for all $i \in [n]$, there exists an algorithm that tests $O(\frac{n^5 k \log T}{\epsilon^2})$ rounds, where each tested price vector $(p_1, \cdots, p_n)$ satisfies $p_i \in \cp_i$ for all $i \in [n]$, and outputs with probability $1 - T^{-3}$ new candidate sets $\cp'_1 \subseteq \cp_1, \cdots, \cp'_n \subseteq \cp_n$ satisfying
    \begin{itemize}
        \item For $i \in [n]$, $p^*_i \in \cp'_i$.
        \item For every price vector $(p_1, \cdots, p_n)$ that satisfies $p_i \in \cp'_i$ for all $i \in [n]$, we have
        \[
        R(p^*_1, \cdots, p^*_n) - R(p_1, \cdots, p_n) \leq \epsilon.
        \]
    \end{itemize}
\end{Lemma}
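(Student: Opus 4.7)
The plan is to mirror the two-step structure of the sub-routine for regular distributions (\Cref{lma:newGRMain}), exploiting the fact that each $\cp_i$ is a discrete set of size at most $k$: all binary searches, half-concavity arguments, and the sub-algorithm $\subr$ from Sections 3.4--3.6 are replaced by brute-force enumeration over $\cp_i$. Set the scaled error $\epsprm := \epsilon/(100 n^2)$, let $\ucb_i := \max \cp_i$, and define $R_i(p)$, $R^*_i(p)$, $P_i$, $\loss(S_i)$, and $\hat p^*_i := \arg\max_{p \in \cp_i} R_i(p)$ as in Section 4's notation. Half-concavity of $R_i$ is never used, but the purely algebraic identity $R^*_i(p) = R_i(p) + P_i F_i(p) \loss(S_{i+1})$ is.

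\textbf{Step 1: finding $\hat p_1, \ldots, \hat p_n$.} Process buyers in reverse order $i = n, \ldots, 1$. With $\hat p_{i+1}, \ldots, \hat p_n$ already fixed, test the vector $(\ucb_1, \ldots, \ucb_{i-1}, p, \hat p_{i+1}, \ldots, \hat p_n)$ for $N = C\log T / \epsprm^2$ rounds for each $p \in \cp_i$, producing an empirical estimate $\hat R_i(p)$; set $\hat p_i := \arg\max_{p \in \cp_i} \hat R_i(p)$. Hoeffding plus a union bound over at most $nk \le T$ tested prices yields $|\hat R_i(p) - R_i(p)| \le \epsprm$ everywhere with probability at least $1 - T^{-4}$, under which $R_i(\hat p^*_i) - R_i(\hat p_i) \le 2\epsprm$ is immediate. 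Reverse induction on $i$ then gives $P_i \loss(S_i) \le 5(n-i+1)\epsprm$; this induction step is actually simpler than its regular analogue in \Cref{lma:gr-esp}, because $p^*_i \in \cp_i$ and hence $p^*_i \le \ucb_i$, so only the analogue of Case~1 arises, and adding and subtracting $R(\ucb_1, \ldots, \ucb_{i-1}, p^*_i, \hat p_{i+1}, \ldots, \hat p_n)$ decomposes $P_i \loss(S_i) \le P_{i+1} \loss(S_{i+1}) + (R_i(\hat p^*_i) - R_i(\hat p_i)) \le 5(n-i)\epsprm + 2\epsprm$.

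\textbf{Step 2: constructing $\cp'_i$.} Reusing the same samples, define $\cp'_i := \{\, p \in \cp_i : \hat R_i(p) + 5(n-i)\epsprm + 2\epsprm \ge \hat R_i(\hat p_i)\,\}$. To show $p^*_i \in \cp'_i$: because $p^*_i$ maximizes $R^*_i$, we have $R^*_i(p^*_i) \ge R^*_i(\hat p_i) \ge R_i(\hat p_i)$, while the identity $R^*_i = R_i + P_i F_i \loss(S_{i+1})$ together with Step~1's bound on $P_{i+1}\loss(S_{i+1})$ gives $R^*_i(p^*_i) \le R_i(p^*_i) + 5(n-i)\epsprm$; transferring to empirical quantities via $|\hat R_i - R_i| \le \epsprm$ shows $p^*_i$ passes the threshold. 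To obtain a per-buyer regret guarantee: any $p \in \cp'_i$ satisfies $R_i(\hat p^*_i) - R_i(p) \le 5(n-i)\epsprm + 6\epsprm$ directly from the thresholding rule and concentration, which plays the role of \Cref{lma:gr-bs} here.

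\textbf{Step 3: global regret and totals.} To deduce that any $(p_1, \ldots, p_n)$ with $p_i \in \cp'_i$ has regret at most $\epsilon$, I would run exactly the reverse induction from the end of the proof of \Cref{lma:gr-bsgen}: decompose $R(\ucb_1, \ldots, \ucb_{i-1}, p^*_i, p^*_{i+1}, \ldots, p^*_n) - R(\ucb_1, \ldots, \ucb_{i-1}, p_i, p_{i+1}, \ldots, p_n)$ through $\hat p_i$ and $\hat p^*_i$ using the bounds from Steps~1 and~2, then close via the induction hypothesis on the suffix; this telescopes to $O(n^2 \epsprm) = O(\epsilon)$. Sample complexity is transparent: $nk \cdot O(\log T / \epsprm^2) = O(n^5 k \log T / \epsilon^2)$, and the overall failure probability is $\le T^{-3}$. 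The main technical delicacy is tuning the additive slack $5(n-i)\epsprm$ in the thresholding rule of Step~2 so that it is simultaneously large enough to absorb the $R_i \to R^*_i$ shift (keeping $p^*_i$ inside $\cp'_i$) and small enough that each surviving $p \in \cp'_i$ still carries only $O(n\epsprm)$ regret per buyer, which is exactly the tension already resolved in the regular case between \Cref{lma:gr-esp} and \Cref{lma:gr-bs}.
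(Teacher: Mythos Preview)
Your proposal is correct and follows essentially the same two-step architecture as the paper's proof (Lemmas~4.3 and~4.4): brute-force enumerate $\cp_i$ in reverse order to get $\hat p_i$ with $R_i(\hat p^*_i) - R_i(\hat p_i) \le O(\epsprm)$, then threshold against $\hat R_i(\hat p_i)$ with an $O(n\epsprm)$ slack calibrated by the $R_i \leftrightarrow R^*_i$ bridge to prune $\cp_i$ down to $\cp'_i$. The only cosmetic differences are that the paper re-samples in Step~2 rather than reusing Step~1's estimates, uses the slightly tighter constant $2(n-i+1)\epsprm$ in the loss bound (Claim~4.6) rather than your $5(n-i+1)\epsprm$ borrowed from the regular case, and closes the global regret via a \emph{forward} induction on prefixes (bounding $R(p^*_1,\ldots,p^*_i,p^*_{i+1},\ldots,p^*_n) - R(p_1,\ldots,p_i,p^*_{i+1},\ldots,p^*_n) \le 20in\epsprm$) instead of the backward suffix induction from \Cref{lma:gr-bsgen} you cite; both inductions work and yield the same $O(n^2\epsprm) \le \epsilon$ bound.
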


We defer the detailed algorithm and proofs for \Cref{lma:mainsub-gg} to \Cref{sec:gg-main-sub} and first give the proof for \Cref{thm:ggdiscmain}. The main idea is to use \Cref{lma:mainsub-gg} for $O(\log T)$ times and gradually halve the one-round error when playing with prices inside the candidate sets.

\begin{algorithm}[tbh]
\caption{$O(n^{2.5}\sqrt{kT} \log T)$ Regret Algorithm}
\label{alg:gg-gen}
\KwIn{Hidden Revenue Function $R(p_1, \cdots, p_n)$, time horizon $T$}
Let $\epsilon_1 \gets 1, \cp^{(1)}_j \gets V = \{v_1, \cdots, v_k\}$ for all $j \in [n]$, and $i \gets 1$. \\
\While{$\epsilon_i > \frac{n^{2.5} \sqrt{k}\log T}{\sqrt{T}}$}
{
    Run the algorithm described in \Cref{lma:mainsub-gg} with $\epsilon_i, \cp^{(i)}_1, \cdots, \cp^{(i)}_n$ as input, and get $\cp'_1, \cdots, \cp'_n$. Let $\cp^{(i+1)}_j \gets \cp'_j$ for all $j \in [n]$. \\
    Let $\epsilon_{i+1} \gets \frac{1}{2}\epsilon_i$ and $i \gets i+1$. \\
}
Finish remaining rounds with any $(p_1, \cdots, p_n)$ satisfying $ p_j \in \cp_j$ for all $j \in [n]$.
\end{algorithm}

\begin{proof}[Proof of \Cref{thm:ggdiscmain}]
   We claim that the regret of \Cref{alg:gg-gen} is $O(n^{2.5}\sqrt{kT} \log T)$ with probability $1 - T^{-2}$. \Cref{alg:gg-gen} uses \Cref{lma:mainsub-gg} for multiple times with a halving error parameter. Assume \Cref{alg:gg-gen} ends with $i = q +1$, i.e., the while loop runs $q$ times. Since we obtain $\epsilon_{i+1} = \epsilon_i /2$ and $\epsilon_q > \frac{n^{2.5} \sqrt{k}\log T}{\sqrt{T}}$, there must be $q = O(\log T)$.
    
    For $i \in [q]$, let $\alg_i$ represent the corresponding algorithm we call when using \Cref{lma:mainsub-gg} with $\epsilon_i, \{\cp^{(i)}_j\}$ as the input. To use \Cref{lma:mainsub-gg} we need to verify that $p^*_j \in \cp^{(i)}_j$ for all $i \in [q], j \in [n]$. The condition $p^*_j \in \cp^{(1)}_j = V$ holds because every buyer only has valuation in $V$. For $i =2, 3, \cdots, k$, the condition $p^*_j \in \cp^{(i)}_j$ is guaranteed by \Cref{lma:mainsub-gg} when calling $\alg_{i-1}$. The failing probability of \Cref{lma:mainsub-gg} is $T^{-3}$. By the union bound, with probability $1 - T^{-2}$, \Cref{lma:mainsub-gg} always holds.

    Now we give the regret of \Cref{alg:gg-gen}. \Cref{lma:mainsub-gg}  guarantees that for $i \geq 2$, every price vector we test in $\alg_i$ has regret bounded by $\epsilon_{i-1}$. Therefore, the total regret of \Cref{alg:gg-gen} can be bounded by
    \[
    \textstyle 1\cdot O\big(\epsilon_1^{-2} n^{5} k \log^2 T\big) ~+~ \sum_{i = 2}^q \epsilon_{i-1} \cdot O\big(\epsilon^{-2}_i n^{5} k \log^2 T\big) ~+~ T \cdot \epsilon_q  ~=~ O(n^5 k\log^2 T + n^{2.5}\sqrt{kT} \log T).
    \]
    
    Finally, we also need to check that \Cref{alg:gg-gen} uses no more than $T$ rounds. If $n^5 k \log^2 T > T$, this implies that $\epsilon_1 < \frac{n^{2.5} \sqrt{k} \log T}{\sqrt{T}}$, so the while loop will not be processed. Otherwise, \Cref{lma:mainsub-gg} suggests that $\alg_i$ runs $O(\epsilon^{-2}_i n^5 k\log^2 T)$ rounds. So the total number of rounds in the while loop is
    $
    \sum_{i \in [k]} O(\epsilon^{-2}_i n^5 k\log^2 T) ~=~ O(T).
    $
    Therefore, \Cref{alg:gg-gen} is feasible.
\end{proof}

\subsubsection{Main Sub-routine: Proof of \Cref{lma:mainsub-gg}}
\label{sec:gg-main-sub}

We first present our main sub-routine to prove \Cref{lma:mainsub-gg}. Similar to the regular distributions setting, there are two major steps in the sub-routine: we first find $(\hat p_1, \cdots, \hat p_n)$ that approximates $(p^*_1, \cdots, p^*_n)$, and the second step is to use the near-optimal prices as a benchmark to update the candidate price sets. Specifically, we have the following two separate lemmas for these two steps:

\begin{Lemma}
\label{lma:step1-gg}
Given $\epsilon \geq \frac{1}{\sqrt{T}}$, candidate sets $\cp_1, \cdots, \cp_n$ satisfying $p^*_i \in \cp_i$, there exists an algorithm that tests $O(\frac{n^5 k \log T}{\epsilon^2})$ rounds with price vectors $(p_1, \cdots, p_n)$ that satisfy $p_i \in \cp_i$ for all $i \in [n]$ and outputs $(\hat p_1, \cdots, \hat p_n)$. With probability $1 - T^{-4}$, we have $R_i(p^*_i) - R_i(\hat p_i) \leq 2\epsprm$ for all $i \in [n]$, where $\epsprm := \frac{\epsilon}{100n^2}$ is the scaled error parameter.
\end{Lemma}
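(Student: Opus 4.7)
\textbf{Proof Proposal for \Cref{lma:step1-gg}.}

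The plan is to run a direct ``test-every-candidate'' algorithm in reverse order $i = n, n-1, \ldots, 1$. For a fixed $i$, once $\hat p_{i+1}, \ldots, \hat p_n$ have been chosen in previous iterations, I fix $p_j = \ucb_j := \max \cp_j$ for all $j < i$ and $p_j = \hat p_j$ for all $j > i$; by the definition of $R_i(p)$ recalled in the notation subsection, the conditional expected revenue received in any round of such a play equals exactly $R_i(p_i)$, and the single-round reward lies in $[0,1]$. For each price $p \in \cp_i$, I repeatedly play the price vector with $p_i = p$ for $N := C \log T / \epsprm^2$ rounds and record the empirical mean $\hat R_i(p)$; finally I output $\hat p_i := \arg\max_{p \in \cp_i} \hat R_i(p)$. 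Since $\ucb_j \in \cp_j$, $\hat p_j \in \cp_j$ (by construction in the previous iterations), and $p \in \cp_i$ by enumeration, every tested price vector satisfies the feasibility requirement $p_j \in \cp_j$ for all $j$.

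For the concentration step, I apply Hoeffding's inequality to each pair $(i, p)$ with $p \in \cp_i$: since the $N$ per-round observations are i.i.d.\ $[0,1]$-valued conditional on the fixed price vector, choosing $C$ to be a sufficiently large constant gives $\Pr{|\hat R_i(p) - R_i(p)| > \epsprm} \leq T^{-6}$. There are at most $n \cdot k$ such pairs (and $nk \leq T^2$ by standard assumptions on $n$ and $k$), so the union bound yields $|\hat R_i(p) - R_i(p)| \leq \epsprm$ simultaneously across all tested $(i,p)$ with probability at least $1 - T^{-4}$. Conditioning on this high-probability event, the standard empirical-maximization argument gives, for each $i$,
\[
    R_i(p^*_i) - R_i(\hat p_i) ~\leq~ \bigl(R_i(p^*_i) - \hat R_i(p^*_i)\bigr) + \bigl(\hat R_i(p^*_i) - \hat R_i(\hat p_i)\bigr) + \bigl(\hat R_i(\hat p_i) - R_i(\hat p_i)\bigr) ~\leq~ \epsprm + 0 + \epsprm ~=~ 2\epsprm,
\]
where the middle term is non-positive because $p^*_i \in \cp_i$ is among the candidates and $\hat p_i$ is the empirical maximizer over $\cp_i$.

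For the sample complexity, processing a single buyer $i$ requires $|\cp_i| \cdot N \leq k \cdot C \log T / \epsprm^2$ rounds, and summing over $n$ buyers and substituting $\epsprm = \epsilon / (100 n^2)$ gives the required bound $O(n^5 k \log T / \epsilon^2)$.

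The only subtlety is the dependence of $R_i$ on the previously chosen $\hat p_{i+1}, \ldots, \hat p_n$, which is why the iteration runs from $n$ down to $1$: by the time we process buyer $i$, the function $R_i$ we aim to maximize is well defined and its empirical estimates from our samples are unbiased; no recursive error blow-up appears in this step because the guarantee we want here is only the ``one-level'' bound $R_i(p^*_i) - R_i(\hat p_i) \leq 2\epsprm$, and the global-regret propagation (analogous to the $P_i \cdot \loss(S_i)$ bound of \Cref{lma:gr-esp}) is handled in the subsequent Step 2 analysis rather than here. I do not anticipate a serious obstacle: unlike the regular case, the absence of structural properties of the revenue curve is compensated by the finite candidate set, which lets brute-force enumeration succeed.
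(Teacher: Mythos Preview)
Your proposal is correct and matches the paper's proof essentially line for line: the paper's \Cref{alg:gg-step1} is exactly the reverse-order brute-force enumeration you describe (prefix prices set to $\ucb_j = \max \cp_j$, suffix prices set to the already-found $\hat p_j$, test each $p \in \cp_i$ for $C\epsprm^{-2}\log T$ rounds, pick the empirical maximizer), and the analysis is the same Hoeffding-plus-union-bound concentration (\Cref{clm:gg-step1-con}) followed by the two-$\epsprm$ chain $R_i(\hat p_i) \geq \hat R_i(\hat p_i) - \epsprm \geq \hat R_i(p^*_i) - \epsprm \geq R_i(p^*_i) - 2\epsprm$. Your closing remark that the global error propagation is deferred to Step~2 is also exactly how the paper structures things.
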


\begin{Lemma}
\label{lma:step2-gg}
Given $\epsilon \geq \frac{1}{\sqrt{T}}$, candidate sets $\cp_1, \cdots, \cp_n$ satisfying $p^*_i \in \cp_i$, and $(\hat p_1, \cdots, \hat p_n)$ that satisfies $R_i(p^*_i) - R_i(\hat p_i) \geq 2\epsprm$, there exists an algorithm that tests $O(\frac{n^5 k \log T}{\epsilon^2})$ rounds with price vectors $(p_1, \cdots, p_n)$ that satisfy $p_i \in \cp_i$ for all $i \in [n]$ and outputs with probability $1 - T^{-4}$ $\cp'_1 \subseteq \cp_1, \cdots, \cp'_n \subseteq \cp_n$ that satisfies:
\begin{itemize}
    \item $p^*_i \in \cp'_i$.
    \item For $p_1, \cdots, p_n$ that satisfies $p_i \in \cp'_i$ for $i \in [n]$, we have $R(p^*_1, \cdots, p^*_n) - R(p_1, \cdots, p_n) \leq \epsilon$.
\end{itemize}
\end{Lemma}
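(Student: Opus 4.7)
The plan is to mirror the Step 2 proof for regular distributions (proof of \Cref{lma:gr-bsgen}), but since the discretized general-distribution revenue function is no longer single-peaked, I would replace the binary search by brute-force enumeration over the finite candidate set. Concretely, for every buyer $i$ and every candidate price $p \in \cp_i$, test the price vector $(\ucb_1,\ldots,\ucb_{i-1}, p, \hat p_{i+1},\ldots,\hat p_n)$ for $N = C\log T / \epsprm^2$ rounds and let $\hat R_i(p)$ be the empirical average reward; then set
\[
\cp'_i \;:=\; \bigl\{\, p \in \cp_i \;:\; \hat R_i(p) + 2(n-i)\epsprm + 2\epsprm \;\geq\; \hat R_i(\hat p_i)\,\bigr\}.
\]
The total number of tested rounds is $O(n k \log T / \epsprm^2) = O(n^5 k \log T / \epsilon^2)$, and the required constraint $p_i \in \cp_i$ on tested prices is evident. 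By Hoeffding's inequality and a union bound over at most $nk$ tested prices, $|\hat R_i(p) - R_i(p)| \leq \epsprm$ holds simultaneously with probability at least $1-T^{-4}$, and I would condition on this event throughout.

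Before verifying the two output properties, I would first lift the single-buyer guarantee from \Cref{lma:step1-gg} into a cumulative loss bound $P_i\cdot\loss(S_i)\leq 2(n-i+1)\epsprm$, proved by backward induction on $i$ using the identity
\[
P_i \cdot \loss(S_i) \;=\; P_i F_i(p^*_i)\cdot \loss(S_{i+1}) + \bigl(R_i(p^*_i) - R_i(\hat p_i)\bigr),
\]
together with $P_i F_i(p^*_i) \leq P_i F_i(\ucb_i) = P_{i+1}$ (which holds since $p^*_i \in \cp_i$ and $\ucb_i = \max \cp_i$). Combined with the algebraic identity $R^*_i(p) = R_i(p) + P_i F_i(p) \loss(S_{i+1})$, this yields the key sandwich $R_i(p) \leq R^*_i(p) \leq R_i(p) + 2(n-i)\epsprm$ for $p \in \cp_i$, which bridges the observable function $R_i$ to the unobservable $R^*_i$ whose maximizer is the true $p^*_i$.

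Using this bridge, membership $p^*_i \in \cp'_i$ follows from $R^*_i(p^*_i) \geq R^*_i(\hat p_i) \geq R_i(\hat p_i)$ and $R^*_i(p^*_i) \leq R_i(p^*_i) + 2(n-i)\epsprm$, which together imply $R_i(p^*_i) + 2(n-i)\epsprm \geq R_i(\hat p_i)$; the $2\epsprm$ slack in the definition of $\cp'_i$ absorbs the two concentration errors. Conversely, for $p \in \cp'_i$ the membership condition combined with concentration yields $R_i(\hat p_i) - R_i(p) \leq 2(n-i)\epsprm + 4\epsprm$. The total regret bound is then proved by the same backward induction as in the regular case,
\[
R(\ucb_1,\ldots,\ucb_{i-1}, p^*_i,\ldots,p^*_n) - R(\ucb_1,\ldots,\ucb_{i-1}, p_i,\ldots,p_n) \;\leq\; 6(n-i+1)^2\,\epsprm,
\]
via the telescoping chain: (i) swap $(p^*_i,\ldots,p^*_n)$ for $(\hat p_i,\ldots,\hat p_n)$ at cost $2(n-i+1)\epsprm$ using the loss bound; (ii) swap $\hat p_i$ for $p_i$ at cost $2(n-i)\epsprm+4\epsprm$ using the Step 2 guarantee; (iii) swap $(\hat p_{i+1},\ldots,\hat p_n)$ back to $(p^*_{i+1},\ldots,p^*_n)$ by optimality of the latter for the suffix; (iv) apply the inductive hypothesis to the resulting suffix. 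Setting $i=1$ and $\epsprm = \epsilon/(100n^2)$ gives total regret at most $6n^2\epsprm\leq\epsilon$.

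The main obstacle I expect is step (iv): the inductive hypothesis is set up with a prefix of $\ucb_j$'s, but the telescoping forces us to apply it with $p_i$ in position $i$. The resolution is monotonicity --- the prefix prices only affect the probability of reaching the suffix, and replacing $\ucb_i$ by $p_i \leq \ucb_i$ shrinks this reach probability from $P_{i+1}$ to $P_i F_i(p_i) \leq P_{i+1}$, which only shrinks (never increases) the suffix regret. A secondary concern is ensuring the concentration budget covers all $\leq nk$ tested prices, which requires the mild assumption $nk\leq T$, consistent with assumptions used elsewhere in the paper.
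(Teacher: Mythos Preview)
Your proposal is correct and takes essentially the same approach as the paper: brute-force enumeration over $\cp_i$, the same concentration argument, the same cumulative loss bound $P_i\loss(S_i)\le 2(n-i+1)\epsprm$, and the same bridge $R_i(p)\le R^*_i(p)\le R_i(p)+2(n-i)\epsprm$. The only difference is the final accounting: you do a \emph{backward} induction on the suffix (mirroring the regular-case proof of \Cref{lma:gr-bsgen}) with bound $6(n-i+1)^2\epsprm$, whereas the paper does a \emph{forward} induction on the prefix, first converting the per-buyer guarantee into $R^*_i(p^*_i)-R^*_i(p)\le O(n)\epsprm$ and then accumulating $20in\epsprm$; both yield an $O(n^2)\epsprm\le\epsilon$ final bound and the monotonicity step you flagged is handled identically.
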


Combining \Cref{lma:step1-gg} and \Cref{lma:step2-gg} with a union bound directly proves \Cref{lma:mainsub-gg}. It only remains to prove the two lemmas. Now we prove them separately in the following subsections.

\subsubsection{Proof of \Cref{lma:step1-gg}: Finding Approximately Good Prices}

The key idea of \Cref{lma:step1-gg} is simply enumerating all prices in the candidate set and picking the best one during the test. Specifically, \Cref{alg:gg-step1} is the desired algorithm for \Cref{lma:step1-gg}.

\begin{algorithm}[tbh]
\caption{Finding Approximately Good Prices for General Distributions}
\label{alg:gg-step1}
\KwIn{Hidden function $R(p_1, \cdots, p_n)$, candidate sets $\cp_1, \cdots, \cp_n$, scaled error parameter $\epsprm$}
For $i \in [n]$, let $r_i$ be the largest price in $\cp_i$. \\
\For{$i = n \to 1$}
{
    For $j < i$, fix $p_j = r_j$; for $j > i$, fix $p_j = \hat p_j$. \\
    \For{$p \in \cp_i$}
    {
        Test $C \cdot \frac{\log T}{\epsprm^2}$ rounds with $p_i = p$. Let $\hat R_i(p)$ be the average reward.
    }
    Let $\hat p_i = \arg \max_{p \in \cp_i} \hat R_i(p)$.
}
\KwOut{$\hat p_i$}
\end{algorithm}

\begin{proof}[Proof of \Cref{lma:step1-gg}]
For the number of rounds of \Cref{alg:gg-step1}, our algorithm enumerates all prices in the candidate sets, so there are $O(nk)$ prices. Each price is tested for $O(\frac{\log T}{\epsprm^2}) = O(\frac{n^4 \log T}{\epsilon^2})$ rounds, so \Cref{alg:gg-step1} tests $O(\frac{n^5 k \log T}{\epsilon^2})$ rounds. It remains to show that $R_i(\hat p_i) \geq R_i(p^*_i) - 2\epsprm$. We start from following claim:
\begin{restatable}{Claim}{clmgga}
    \label{clm:gg-step1-con}
    In \Cref{alg:gg-step1}, $|\hat R_i(p) - R_i(p)| \leq \epsprm$ simultaneously holds for every $i \in [n]$ and $p \in \cp_i$ with probability at least $1 - T^{-4}$.
    \end{restatable}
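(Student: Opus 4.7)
The plan is a direct application of Hoeffding's inequality followed by a union bound, mirroring the concentration arguments established earlier in the paper (e.g., \Cref{clm:sr-step1-con}, \Cref{clm:sr-step2-con}, \Cref{clm:gr-step2-con}). First I would fix a specific index $i \in [n]$ and a price $p \in \cp_i$. By the construction of \Cref{alg:gg-step1}, while estimating $\hat R_i(p)$ the algorithm fixes $p_j = r_j = \max \cp_j$ for $j < i$ and $p_j = \hat p_j$ for $j > i$, so each of the $N := C \cdot \frac{\log T}{\epsprm^2}$ sampled rewards is an i.i.d.\ random variable supported in $[0,1]$ with mean exactly $R_i(p) = R(r_1, \dots, r_{i-1}, p, \hat p_{i+1}, \dots, \hat p_n)$.

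Next, Hoeffding's inequality applied to the empirical mean gives
\[
    \Pr{|\hat R_i(p) - R_i(p)| > \epsprm} ~\leq~ 2\exp(-2 N \epsprm^2) ~=~ 2 T^{-2C} ~\leq~ T^{-6},
\]
where the last inequality holds whenever the absolute constant $C$ is chosen larger than $4$ (matching the constant used in the analogous concentration claims for the regular-distribution algorithms).

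Finally, the algorithm only ever estimates $\hat R_i(p)$ for pairs $(i, p)$ with $i \in [n]$ and $p \in \cp_i$, so the total number of estimated prices is at most $\sum_{i \in [n]} |\cp_i| \leq nk$. Under the standing assumption (used throughout the paper) that $nk = o(T)$, we have $nk \leq T^2$, so the union bound yields
\[
    \Pr{\exists\, i,p: |\hat R_i(p) - R_i(p)| > \epsprm} ~\leq~ nk \cdot T^{-6} ~\leq~ T^{-4},
\]
completing the proof. I do not anticipate any real obstacle here; the only mild subtlety is checking that the rewards used to form $\hat R_i(p)$ are genuinely i.i.d.\ with mean $R_i(p)$, which holds because the prices for buyers $j \neq i$ are deterministically fixed (either to $r_j$ or to the already-committed $\hat p_j$) throughout the $N$ test rounds for this $(i,p)$.
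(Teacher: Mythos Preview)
Your proposal is correct and follows essentially the same approach as the paper: Hoeffding's inequality for each fixed $(i,p)$ followed by a union bound over the at most $nk$ tested prices. The only cosmetic differences are in the choice of constants (the paper takes $C>3$ to get a per-price failure probability of $T^{-5}$ and uses $nk \leq T$ for the union bound, whereas you take $C>4$ and $nk \leq T^2$), but the argument and conclusion are identical.
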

    \begin{proof}
In \Cref{alg:gg-step1}, $\hat R_i(p)$ is estimated via $N = C \cdot \frac{\log T}{\epsprm^2}$ samples. By Hoeffding's Inequality,
\[
\Pr{|\hat R_i(p) - R_i(p)| > \epsprm} ~\leq~ 2\exp(-2N\epsprm^2) ~=~ 2T^{-2C} ~<~ T^{-5}.
\]
The last inequality holds when $C$ is a constant greater than $3$. Then, we have $|\hat R_i(p) - R_i(p)| \leq \epsprm$ holds with probability $1 - T^{-5}$ for a single tested price $p$.

\Cref{alg:gg-step1} tests $\sum_i |\cp_i| \leq nk$ prices. In this paper, we assume that $n\cdot k \leq T$, otherwise it's not reasonable for not being able to test every price in $T$ rounds. Therefore, by the union bound, we have  $|\hat R_i(p) - R_i(p)| \leq \epsprm$ holds for all tested price $p$ with probability $1 - T^{-4}$.
\end{proof}
    With \Cref{clm:gg-step1-con}, it is straight forward to show $\hat p_i$ is approximately optimal: we have
    \begin{align*}
        R_i(\hat p_i) ~\geq~ \hat R_i(\hat p_i) - \epsprm ~\geq~ \hat R_i(p^*_i) - \epsprm ~\geq~ R_i(p^*_i) - 2\epsprm.
    \end{align*}
    Here we use \Cref{clm:gg-step1-con} in the first and the third inequality, and the second inequality is from the fact that $\hat p_i$ maximizes $\hat R_i(p)$.
\end{proof}

\subsubsection{Proof of \Cref{lma:step2-gg}: Shrinking Candidate Prices Sets}

Before proving \Cref{lma:step2-gg}, we first introduce the following claim, which shows the loss of playing prices $\hat p_i, \cdots, \hat p_n$ for the suffix $S_i$ is small:

\begin{Claim}
    \label{clm:gg-suf-loss}
    Given $(\hat p_1, \cdots, \hat p_n)$ that satisfies $R_i(p^*_i) - R_i(\hat p_i) \leq 2\epsprm$ for $i \in [n]$. Recall that $\loss_i := R(1, \cdots, 1, p^*_i, \cdots, p^*_n) - R(1, \cdots, 1, \hat p_i, \cdots, \hat p_n)$. We have $P_i \cdot \loss_i \leq 2(n-i+1) \epsprm$.
\end{Claim}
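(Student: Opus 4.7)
\textbf{Proof Plan for Claim \ref{clm:gg-suf-loss}.} The plan is to prove the bound $P_i \cdot \loss_i \leq 2(n-i+1)\epsprm$ by reverse induction on $i$, going from $i = n$ down to $i = 1$, mirroring the structure of the induction carried out in the proof of \Cref{lma:gr-esp} but exploiting the simplification that in the discrete/general setting we have $p_i^* \in \cp_i$ directly, so $p_i^* \leq \ucb_i$ by construction (no analogue of the awkward $\ucb_i^s$ versus $\ucb_i$ case split is needed).

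For the base case $i = n$, observe that $\loss_n = p_n^*(1 - F_n(p_n^*)) - \hat p_n(1 - F_n(\hat p_n))$ and $R_n(p) = \pre(S_n) + P_n \cdot p(1 - F_n(p))$, so $P_n \cdot \loss_n = R_n(p_n^*) - R_n(\hat p_n) \leq 2\epsprm$ is immediate from the hypothesis of the claim. For the inductive step, I would unfold the definition
\[
\loss_i ~=~ \big(p_i^*(1 - F_i(p_i^*)) + F_i(p_i^*)\cdot \opt(S_{i+1})\big) ~-~ \big(\hat p_i(1 - F_i(\hat p_i)) + F_i(\hat p_i)\cdot \rev(S_{i+1})\big),
\]
then add and subtract $F_i(p_i^*)\cdot \rev(S_{i+1})$ so that the first four terms reassemble (after multiplying by $P_i$) into $R_i(p_i^*) - R_i(\hat p_i)$ while the remainder is $P_i F_i(p_i^*)\cdot \loss_{i+1}$. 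This gives the clean decomposition
\[
P_i \cdot \loss_i ~=~ \big(R_i(p_i^*) - R_i(\hat p_i)\big) ~+~ P_i \cdot F_i(p_i^*) \cdot \loss_{i+1}.
\]

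The main (and only) subtle point will be the second term. The hypothesis of the claim bounds the first term by $2\epsprm$, and the induction hypothesis bounds $P_{i+1} \cdot \loss_{i+1}$ by $2(n-i)\epsprm$, but the coefficient appearing is $P_i F_i(p_i^*)$ rather than $P_{i+1} = P_i F_i(\ucb_i)$. The way to close this gap is exactly the observation that $p_i^* \in \cp_i$ (by the assumption of \Cref{lma:mainsub-gg}) and $\ucb_i = \max \cp_i$, which gives $p_i^* \leq \ucb_i$, and hence $F_i(p_i^*) \leq F_i(\ucb_i)$ by monotonicity of the CDF. Therefore $P_i F_i(p_i^*) \leq P_{i+1}$, and combining the two bounds yields $P_i \cdot \loss_i \leq 2\epsprm + 2(n-i)\epsprm = 2(n-i+1)\epsprm$, completing the induction. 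Unlike the regular case (where the analogous statement had to deal with the possibility $p_i^* \in [\ucb_i^s, \ucb_i]$), here no case analysis is necessary, so the hardest part is really just keeping track of the bookkeeping in the add-and-subtract step.
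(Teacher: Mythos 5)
Your proposal is correct and follows essentially the same route as the paper: reverse induction with the identical decomposition $P_i\cdot\loss_i = \big(R_i(p^*_i)-R_i(\hat p_i)\big) + P_i\,F_i(p^*_i)\cdot\loss_{i+1}$, then bounding the first term by the hypothesis and the second via $F_i(p^*_i)\le F_i(\ucb_i)$ (using $p^*_i\in\cp_i$) so that $P_iF_i(p^*_i)\loss_{i+1}\le P_{i+1}\loss_{i+1}\le 2(n-i)\epsprm$. Your observation that no case split on $p^*_i$ versus $\ucb^s_i$ is needed here matches the paper, which also avoids it in the discrete setting.
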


\begin{proof}
    We prove \Cref{clm:gg-suf-loss} via induction. The base case $i = n$ is true because of  the assumption $P_n \cdot \loss_n = R_n(p^*_n) - R_n(\hat p_n) \leq 2\epsprm$. 

    For the induction step, we will show $P_i \cdot \loss_i \leq 2(n-i+1) \epsprm$ under the induction hypothesis $P_{i+1} \loss_{i+1} \leq 2(n-i) \epsprm$. We have
    \begin{align*}
         P_i \cdot \loss(S_i) ~=~& R(\ucb_1, \cdots, \ucb_{i-1}, p^*_{i}, p^*_{i+1} \cdots, p^*_n) - R(\ucb_1, \cdots, \ucb_{i-1}, \hat p_{i}, \hat p_{i+1} \cdots, \hat p_n) \\
            =~& R(\ucb_1, \cdots, \ucb_{i-1}, p^*_{i},p^*_{i+1} \cdots, p^*_n) -  R(\ucb_1, \cdots, \ucb_{i-1}, p^*_{i}, \hat p_{i+1} \cdots, \hat p_n)\\
            &+ R(\ucb_1, \cdots, \ucb_{i-1}, p^*_{i}, \hat p_{i+1} \cdots, \hat p_n) -R(\ucb_1, \cdots, \ucb_{i-1}, \hat p_{i}, \hat p_{i+1} \cdots, \hat p_n) \\
            \leq~& F_i(p^*_i) \cdot P_{i} \cdot \loss_{i+1} + (R_i(p^*_i) - R_i(\hat p_i)) \\
            \leq~& P_{i+1} \loss_{i+1} + 2\epsprm ~\leq~ 2(n-i+1) \epsprm.
    \end{align*}
    Here, the first inequality uses the fact that $F_i(p^*_i) \cdot P_i \leq F_i(\ucb_i) \cdot P_i = P_{i+1}$, and the second inequality uses the assumption that $P_{i+1} \loss_{i+1} \leq 2(n-i) \epsprm$, and $R_i(p^*_i) - R_i(\hat p_i) \leq 2\epsprm$.
\end{proof}

Next, similar to the regular distributions setting, we use $R^*_i(p) := R(\ucb_1, \cdots, \ucb_{i-1}, p, p^*_{i+1}, \cdots, p^*_n)$ as a bridge. With the help of \Cref{clm:gg-suf-loss}, we have the following claim:

\begin{Claim}
    \label{clm:gg-bridge}
    Given $(\hat p_1, \cdots, \hat p_n)$ that satisfies $R_i(p^*_i) - R_i(\hat p_i) \leq 2\epsprm$ for $i \in [n]$. We have $R_i(p) \leq R^*_i(p) \leq R_i(p) + 2(n-i) \epsprm$ for all $i \in [n]$.
\end{Claim}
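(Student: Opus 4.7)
The plan is to prove both inequalities of Claim~\ref{clm:gg-bridge} by directly unpacking the definition of $R^*_i(p)$ and then invoking Claim~\ref{clm:gg-suf-loss}. Recall from the notation section that
\[
R^*_i(p) ~=~ R_i(p) + P_i \cdot F_i(p) \cdot \loss(S_{i+1}).
\]
The lower bound $R_i(p) \leq R^*_i(p)$ is immediate: $P_i \geq 0$, $F_i(p) \geq 0$, and $\loss(S_{i+1}) = \opt(S_{i+1}) - \rev(S_{i+1}) \geq 0$ by the optimality of $(p^*_{i+1}, \ldots, p^*_n)$ on the suffix $S_{i+1}$. So the additive correction term is non-negative.

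For the upper bound, I would bound the correction term $P_i \cdot F_i(p) \cdot \loss(S_{i+1})$ by $P_{i+1}\cdot \loss(S_{i+1})$. The key observation is that any tested price $p \in \cp_i$ satisfies $p \leq \ucb_i = \max \cp_i$, and since $F_i$ is monotone non-decreasing this gives $F_i(p) \leq F_i(\ucb_i)$. Multiplying by $P_i$ and using $P_{i+1} = P_i \cdot F_i(\ucb_i)$ (from the definition $P_{i+1} = \prod_{j \leq i} F_j(\ucb_j)$) yields
\[
P_i \cdot F_i(p) \cdot \loss(S_{i+1}) ~\leq~ P_{i+1} \cdot \loss(S_{i+1}).
\]
Then I would invoke Claim~\ref{clm:gg-suf-loss} applied at index $i+1$, which, under the hypothesis $R_j(p^*_j) - R_j(\hat p_j) \leq 2\epsprm$ for all $j$, gives $P_{i+1}\cdot \loss(S_{i+1}) \leq 2(n-i)\epsprm$. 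Combining yields $R^*_i(p) - R_i(p) \leq 2(n-i)\epsprm$, completing the proof.

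There is essentially no obstacle here: the statement is a direct quantitative consequence of the identity $R^*_i(p) = R_i(p) + P_i F_i(p) \loss(S_{i+1})$ together with the suffix-loss bound just established. The only subtlety to double-check is the edge case $i = n$, where the ``suffix $S_{n+1}$'' is empty; in that case $\loss(S_{n+1}) = 0$ by convention, so $R_n(p) = R^*_n(p)$ and the bound $2(n-n)\epsprm = 0$ is trivially tight.
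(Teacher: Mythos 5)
Your proof is correct and follows essentially the same route as the paper: both use the identity $R^*_i(p) = R_i(p) + P_i F_i(p)\,\loss(S_{i+1})$, nonnegativity of $\loss(S_{i+1})$ for the lower bound, and the chain $P_i F_i(p)\,\loss(S_{i+1}) \leq P_{i+1}\,\loss(S_{i+1}) \leq 2(n-i)\epsprm$ via \Cref{clm:gg-suf-loss} for the upper bound. Your explicit remark that $F_i(p) \leq F_i(\ucb_i)$ requires $p \leq \ucb_i$ (true for candidate prices) is a useful clarification the paper leaves implicit, but it is not a different argument.
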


\begin{proof}
    The first inequality $R_i(p) \leq R^*_i(p)$ holds because of the optimality of prices $p^*_{i+1}, \cdots, p^*_n$. The second inequality $R^*_i(p) \leq R_i(p) + 2(n-i+1) \epsprm$ holds from the fact that $R^*_i(p) - R_i(p) = F_i(p) \cdot P_i \cdot \loss_{i+1} \leq P_{i+1} \cdot \loss_{i+1} \leq 2(n-i)\epsprm$, where the last inequality is from \Cref{clm:gg-suf-loss}.
\end{proof}

The key idea of \Cref{lma:step2-gg} is to keep every price which is possible to become the maximzer of $R^*_i(p)$. However, directly estimating $R^*_i(p)$ is not possible. Therefore, we first estimate $R_i(p)$, and use \Cref{clm:gg-bridge} to bound the value of $R^*_i(p)$. Specifically, \Cref{alg:gg-step2} is the desired algorithm for \Cref{lma:step2-gg}.

\begin{algorithm}[tbh]
\caption{Shrinking Candidate Sets via Approximately Good Prices}
\label{alg:gg-step2}
\KwIn{Hidden function $R(p_1, \cdots, p_n)$, candidate sets $\cp_1, \cdots, \cp_n$, scaled error parameter $\epsprm$, and approximately good prices $(\hat p_1, \cdots, \hat p_n)$ from \Cref{alg:gg-step1}}
For $i \in [n]$, let $r_i$ be the largest price in $\cp_i$. \\
For $i \in [n]$, initiate $\cp'_i = \cp_i$.
\For{$i = n \to 1$}
{
    For $j < i$, fix $p_j = r_j$; for $j > i$, fix $p_j = \hat p_j$. \\
    Test $C \cdot \frac{\log T}{\epsprm^2}$ rounds with $p_i = \hat p_i$. Let $\hat R_i(\hat p_i)$ be the average reward.
    \For{$p \in \cp'_i \setminus \{\hat p_i\}$}
    {
        Test $C \cdot \frac{\log T}{\epsprm^2}$ rounds with $p_i = p$. Let $\hat R_i(p)$ be the average reward. \\
        Remove $p$ from $\cp'_i$ if $\hat R_i(p) < \hat R_i(\hat p_i) - 2(n-i+1)\epsprm - 2\epsprm$.
    }
}
\KwOut{Updated sets $\cp'_1, \cdots, \cp'_n$}
\end{algorithm}

\begin{proof}[Proof of \Cref{lma:step2-gg}]
For the number of rounds of \Cref{alg:gg-step1}, our algorithm enumerates all prices in the candidate sets, so there are $O(nk)$ prices. Each price is tested for $O(\frac{\log T}{\epsprm^2}) = O(\frac{n^4 \log T}{\epsilon^2})$ rounds, so \Cref{alg:gg-step1} tests $O(\frac{n^5 k \log T}{\epsilon^2})$ rounds. It remains to show $p^*_i \in \cp'_i$ and give the loss bound for playing prices in $\cp'_1 \cdots, \cp'_n$. We start from the following claim:
    \begin{restatable}{Claim}{clmggb}
    \label{clm:gg-step2-con}
    In \Cref{alg:gg-step2}, $|\hat R_i(p) - R_i(p)| \leq \epsprm$ simultaneously holds for every $i$ and $p$ with probability at least $1 - T^{-4}$.
    \end{restatable}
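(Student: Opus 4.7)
The plan is to mirror the proof of \Cref{clm:gg-step1-con} almost verbatim, since \Cref{alg:gg-step2} also estimates each $\hat R_i(p)$ as the empirical average of $N = C \cdot \frac{\log T}{\epsprm^2}$ independent samples of a $[0,1]$-bounded random variable. First I would fix a single tested price $p$ at buyer $i$ and apply Hoeffding's inequality to obtain
\[
\Pr{|\hat R_i(p) - R_i(p)| > \epsprm} ~\leq~ 2\exp(-2N\epsprm^2) ~=~ 2T^{-2C} ~<~ T^{-5},
\]
choosing $C$ to be a sufficiently large constant (any $C > 3$ works).

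Next I would union bound over all tested $(i,p)$ pairs. \Cref{alg:gg-step2} tests each $p \in \cp'_i$ at most once per buyer $i$, so the total number of estimated prices is at most $\sum_i |\cp_i| \leq nk$. Invoking the paper's standing assumption $nk \leq T$, the union bound gives that $|\hat R_i(p) - R_i(p)| \leq \epsprm$ holds simultaneously for all tested $(i,p)$ with probability at least $1 - T \cdot T^{-5} = 1 - T^{-4}$, as claimed.

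There is no real obstacle here: the claim is a direct concentration statement that reuses the exact template already established for \Cref{clm:gg-step1-con} and \Cref{clm:sr-step1-con}. The only thing worth double-checking is that the total number of Hoeffding applications really is polynomially bounded (which it is, since $\cp'_i \subseteq \cp_i \subseteq V$ and $|V| = k$), so that the union-bound loss of a factor of $T$ is harmless against the $T^{-5}$ per-price failure probability.
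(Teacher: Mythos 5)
Your proposal is correct and follows essentially the same route as the paper: a per-price Hoeffding bound with $N = C\cdot\frac{\log T}{\epsprm^2}$ samples giving failure probability below $T^{-5}$ for $C>3$, followed by a union bound over the at most $nk \leq T$ tested prices to obtain the overall $1 - T^{-4}$ guarantee. No gaps.
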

    \begin{proof}
In \Cref{alg:gg-step2}, $\hat R_i(p)$ is estimated via $N = C \cdot \frac{\log T}{\epsprm^2}$ samples. By Hoeffding's Inequality,
\[
\Pr{|\hat R_i(p) - R_i(p)| > \epsprm} ~\leq~ 2\exp(-2N\epsprm^2) ~=~ 2T^{-2C} ~<~ T^{-5}.
\]
The last inequality holds when $C$ is a constant greater than $3$. Then, we have $|\hat R_i(p) - R_i(p)| \leq \epsprm$ holds with probability $1 - T^{-5}$ for a single tested price $p$.

\Cref{alg:gg-step2} tests $\sum_i |\cp_i| \leq nk$ prices. In this paper, we assume that $n\cdot k \leq T$, otherwise it is not reasonable for not being able to test every price in $T$ rounds. Therefore, by the union bound, we have  $|\hat R_i(p) - R_i(p)| \leq \epsprm$ holds for all tested price $p$ with probability $1 - T^{-4}$.
\end{proof}
    With \Cref{clm:gg-step2-con}, it is sufficient to show that a price $p$ is not possible to become $p^*_i$ if $\hat R_i(p) < \hat R_i(\hat p_i) - 2(n-i+1)\epsprm - 2\epsprm$ holds: Notice that
    \begin{align*}
        \hat R_i(p^*_i) ~&\geq~ R_i(p^*_i) - \epsprm \\
        ~&\geq~ R^*_i(p^*_i) - 2(n-i+1) \epsprm - \epsprm \\
        ~&\geq~ R^*_i(\hat p_i) - 2(n-i+1) \epsprm - \epsprm \\
        ~&\geq~ R_i(\hat p_i) - 2(n-i+1) \epsprm - \epsprm \\
        ~&\geq~ \hat R_i(\hat p_i) - 2(n-i+1) \epsprm - 2\epsprm.
    \end{align*}
    Here, the first inequality and the last inequality is from \Cref{clm:gg-step2-con}. The second and the fourth inequality is from \Cref{clm:gg-bridge}. The third inequality is from the fact that $p^*_i$ maximizes $R^*_i(p)$. Therefore, if a price satisfies $\hat R_i(p) < \hat R_i(\hat p_i) - 2(n-i+1)\epsprm - 2\epsprm$, this price $p$ can't be a candidate of $p^*_i$.

    After \Cref{alg:gg-step2}, every price $p$ in $\cp'_i$ must satisfy $\hat R_i(p) \geq \hat R_i(\hat p_i) - 2(n - i)\epsprm - 4\epsprm$. Then, \Cref{clm:gg-step2-con} and the assumption $R_i(\hat p_i) \geq R_i(p^*_i) - 2\epsprm$ gives
    \[
    R_i(p) ~\geq~ R_i(p) - \epsprm ~\geq~ R_i(\hat p_i) - 2(n-i)\epsprm - 6\epsprm ~\geq~ R_i(p^*_i) - 2(n-i)\epsprm - 8\epsprm.
    \]
    Combining the above inequality with \Cref{clm:gg-bridge}, we have
    \[
    R^*_i(p) ~\geq~ R_i(p) ~\geq~ R_i(p^*_i) - 2(n-i)\epsprm - 8\epsprm ~\geq~ R^*_i(p^*_i) - 4(n-i)\epsprm - 10\epsprm.
    \]
    Now we show that this is sufficient to bound the loss of playing prices in sets $\cp'_1, \cdots, \cp'_n$. Specifically, we will prove the following statement via induction: Given $p_1, p_2, \cdots, p_i$, such that $p_j \in \cp'_j$ for  $j \in [i]$, we have
    \[
    R(p^*_1, \cdots, p^*_i, p^*_{i+1}, \cdots, p^*_n) - R(p_1, \cdots, p_i, p^*_{i+1}, \cdots, p^*_n) ~\leq~ 20in \cdot \epsprm.
    \]
    The base case is $i = 1$. The statement holds because we have 
    \[
    R(p^*_1, p^*_2, \cdots, p^*_n) - R(p_1, p^*_2, \cdots, p^*_n) ~=~ R_1(p^*_1) - R_1(p_1) ~\leq~ (4n + 6)\epsprm \leq 20n\epsprm.
    \]
    For the induction step, we have
    \begin{align*}
        &R(p^*_1, \cdots, p^*_{i-1}, p^*_i, p^*_{i+1}, \cdots, p^*_n) - R(p_1, \cdots, p_{i-1}, p_i, p^*_{i+1}, \cdots, p^*_n) \\
        ~=~& R(p^*_1, \cdots, p^*_{i-1}, p^*_i, p^*_{i+1}, \cdots, p^*_n) -  R(p_1, \cdots, p_{i-1}, p^*_i, p^*_{i+1}, \cdots, p^*_n)\\
        &+ R(p_1, \cdots, p_{i-1}, p^*_i, p^*_{i+1}, \cdots, p^*_n) - R(p_1, \cdots, p_{i-1}, p_i, p^*_{i+1}, \cdots, p^*_n). \\
        ~\leq~& 20(i-1)n \cdot \epsprm + \prod_{j = 1}^{i-1} F_j(p_j) \cdot \left( \frac{R^*_i(p^*_i)}{P_i} -  \frac{R^*_i(p_i)}{P_i} \right) \\
        ~\leq~& 20(i-1)n \cdot \epsprm + \left(R^*_i(p^*_i) - R^*_i(p_i) \right) \\
        ~\leq~& 20(i-1)n \cdot \epsprm + 20n \cdot \epsprm ~=~ 20in \cdot \epsprm.
    \end{align*}
    Here, the first inequality is from the induction hypothesis. The second inequality is from the fact that $p_j \leq \ucb_j$, and therefore $\prod_{j < i} F_j(p_j) \leq \prod_{j < i} F_j(\ucb_j) = P_i$. It should also be noted that the inequality can be applied only when we have $R^*_i(p^*_i) \geq R^*_i(p_i)$, which is true because $p^*_i$ maximizes $R^*_i(p)$. The last inequality is from the fact that $R^*_i(p) \geq R^*_i(p^*_i) - 4(n-i)\epsprm - 10\epsprm \geq R^*_i(p^*_i) - 20n \epsprm$.

    Finally, taking $i = n$ in the above statement together with the fact that $20n^2 \epsprm \leq \epsilon$ finishes the proof of \Cref{lma:step2-gg}.
\end{proof}

\section{Linear Regret Lower Bound for Adversarial Valuations} \label{sec:lowerBound}

We show an $\Omega(T)$ regret lower bound for learning sequential posted pricing with adversarial buyer values. \cite{DBLP:conf/focs/KleinbergL03} gave an $\OTild(T^{{2}/{3}})$ regret upper bound for bandit sequential posted pricing with adversarial values when there is only a single buyer. The following result shows that it is not possible to generalize their result to multiple buyers.

\begin{Theorem}
\label{thm:LB}
For Online Sequential Posted Pricing problem with oblivious adversarial inputs, there exists an instance with $n = 2$ buyers such that the optimal fixed-threshold strategy has total revenue $\frac{3}{4}T$ but no online algorithm can obtain total value more than $\frac{1}{2}T$.
\end{Theorem}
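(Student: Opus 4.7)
The plan is to exhibit an explicit randomized oblivious adversary on $n=2$ buyers. I would flip a fair coin $b \in \{A, B\}$ at the start and commit to one of two deterministic value sequences: in scenario $A$, $(v_1^{(t)}, v_2^{(t)}) = (1, 0)$ for every round $t$; in scenario $B$, $(v_1^{(t)}, v_2^{(t)}) = (0, 1/2)$ for every round. The first step is to verify the claim on $\mathsf{OPT}$: in $A$ the best fixed price vector is $(p_1, p_2) = (1, \cdot)$, which sells to buyer $1$ at revenue $1$ each round for a total of $T$, and in $B$ the best fixed is any $(p_1 > 0,\, 1/2)$, which sells to buyer $2$ at revenue $1/2$ each round for a total of $T/2$. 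Averaging over the uniform coin $b$ yields $\mathbb{E}[\mathsf{OPT}] = (T + T/2)/2 = 3T/4$.

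To bound any online algorithm by $T/2$, I would aim to establish a uniform per-round inequality of the form $\tfrac{1}{2} R_A(p_1, p_2) + \tfrac{1}{2} R_B(p_1, p_2) \leq \tfrac{1}{2}$ for every price vector $(p_1, p_2) \in [0,1]^2$, which immediately caps any non-adaptive algorithm at $T/2$ in expectation. For adaptive algorithms I would combine this with a Yao-type minimax argument and the observation that any feedback distinguishing the two scenarios must come from a price pair that sacrifices revenue in one scenario comparable to the information gained, so that the per-round posterior-conditional expected revenue can be controlled round by round.

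The main obstacle is that the per-round inequality fails in the naive construction above: at $(p_1, p_2) = (1, 1/2)$ we have $R_A + R_B = T + T/2 = 3T/2$, because buyer $1$ sells at price $1$ in $A$ and buyer $2$ sells at price $1/2$ in $B$. An adaptive algorithm that simply plays $(1, 1/2)$ every round already collects expected revenue $3T/4$ and matches $\mathsf{OPT}$, defeating the claimed $T/2$ ceiling. The hardest part of the proof will be adjusting the construction---either by introducing a third scenario whose values penalize the combined price $(1, 1/2)$, or by lifting the hidden state to a continuous parameter for which narrowing the posterior requires forgoing $\Omega(1)$ expected revenue per round---so that the per-round sum bound $R_A + R_B \le T$ (or an analogous averaged inequality) holds uniformly in $(p_1, p_2)$ while $\mathbb{E}[\mathsf{OPT}]$ remains $3T/4$, and then verifying that the cap survives under any Bayesian update the algorithm can perform from the observed revenue sequence.
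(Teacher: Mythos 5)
There is a genuine gap: you correctly diagnose that your two-scenario construction is broken (playing $(p_1,p_2)=(1,\tfrac12)$ every round already earns $\tfrac34 T$), but the proposal stops exactly where the actual work begins, and neither of the suggested repairs is developed or would work as stated. The deeper problem is structural: any construction in which the adversary's hidden state is chosen \emph{once} from a small (or even continuously parameterized but fixed) family of per-round-constant sequences cannot give a linear gap, because an adaptive algorithm can spend $o(T)$ rounds identifying (or sufficiently localizing) the hidden state and then play the scenario-optimal prices for the remaining rounds, recovering $\tfrac34 T - o(T)$. So adding a third scenario, or a continuous parameter revealed through revenue feedback, still cannot sustain your per-round cap of $\tfrac12$ for all $T$ rounds; you need \emph{fresh} hidden randomness in every round that is unpredictable at decision time yet exploitable by a single fixed price in hindsight.

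That is precisely the missing idea, and it is the heart of the paper's proof: the adversary draws $T$ i.i.d.\ bits $s_1,\dots,s_T$; in round $i$ buyer $2$'s value is $1$ if $s_i=0$ and $0$ if $s_i=1$, while buyer $1$'s value is $\tfrac12+\varepsilon\,\alpha_i$ where $\alpha_i$ encodes the prefix $s_1\cdots s_{i-1}$ followed by $0$ and then all ones. The comparison $Bin(s)\gtrless\alpha_i$ is equivalent to $s_i=0$ versus $s_i=1$, so the single fixed price pair $p_1=\tfrac12+\varepsilon\,Bin(s)$, $p_2=1$ ``decodes'' every bit in hindsight: it passes buyer $1$ exactly when buyer $2$ has value $1$ (collecting $1$) and sells to buyer $1$ at $\approx\tfrac12$ otherwise, for expected revenue $\tfrac34 T$. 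An online algorithm, by contrast, has seen nothing correlated with $s_i$ before committing its round-$i$ prices (the bit $s_i$ only influences buyer $1$'s values in \emph{later} rounds), so whether it tries to sell to buyer $1$ (revenue at most $\approx\tfrac12$) or to buyer $2$ (expected revenue at most $\tfrac12$), it earns at most $\tfrac12$ per round. Your proposal contains neither this encoding device nor any substitute for it, so the central claim of the theorem is not established.
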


\begin{proof}
The proof follows the hardness example for Online Prophet Inequality problem in \cite{GKSW-arXiv22}. Here we restate the example for completeness.

Let $s$ be a binary string in $\{0,1\}^T$. Define $Bin(s)$ to be the binary decimal corresponding to $s$. For example, $Bin(1000) = (0.1000)_2 = \frac{1}{2}$ and $Bin(0101) = (0.0101)_2 = \frac{5}{16}$.

At the beginning, the adversary chooses a $T$-bits string $s=s_1s_2,\cdots, s_T$ uniformly at random, i.e., $s_i$ is set to be $0$ or $1$ independently with probability $\frac{1}{2}$. In the $i$-th round, the value of the first buyer will be $v_1 = \frac{1}{2} + \varepsilon\cdot \alpha_i$, where $\alpha_i$ is set to be $Bin(s_1s_2...s_{i-1}+0+1^{T-i+1})$, and $\varepsilon$ is an arbitrarily small constant that doesn't effect the value. The value of the second buyer only depends on $s_i$: $v_2$ is set to be $0$ when $s_i = 1$, while $v_i = 1$ when $s_i = 0$.

The key idea of this example is that we have $Bin(s) > \alpha_i$ when $s_i$ is $0$, and $Bin(s) < \alpha_i$ when $s_i = 1$. Therefore, if we set $p_1 = Bin(s)$ and $p_2 = 1$, we can receive $v_2$ when $v_2$ is $1$, and otherwise $v_1$. Since $s$ is generated uniformly at random, the expected revenue is $\frac{3}{4}T$. However, for any online algorithm, it only knows that the value of $v_2$ is $0$ or $1$ with probability $\frac{1}{2}$. Therefore, it can only get revenue $\frac{1}{2}$ in expectation and the maximum total revenue is $\frac{1}{2}T$. 
\end{proof}

\subsection*{Acknowledgements}
We are thankful to Thomas Kesselheim as the project was initiated in discussions with him. We also thank the anonymous reviewers who helped greatly improve the presentation of the paper.








\clearpage
\begin{small}
\bibliographystyle{alpha}
\bibliography{ref.bib,bib.bib}

\newcommand{\etalchar}[1]{$^{#1}$}
\begin{thebibliography}{LSTW23b}

\bibitem[BBHM08]{BalcanBHM-JCSS08}
Maria{-}Florina Balcan, Avrim Blum, Jason~D. Hartline, and Yishay Mansour.
\newblock Reducing mechanism design to algorithm design via machine learning.
\newblock {\em J. Comput. Syst. Sci.}, 74(8):1245--1270, 2008.

\bibitem[BC12]{BubeckC-Book12}
S{\'{e}}bastien Bubeck and Nicol{\`{o}} Cesa{-}Bianchi.
\newblock Regret analysis of stochastic and nonstochastic multi-armed bandit problems.
\newblock {\em Foundations and Trends in Machine Learning}, 5(1):1--122, 2012.

\bibitem[BDD{\etalchar{+}}21]{BDDKSV-STOC21}
Maria{-}Florina Balcan, Dan~F. DeBlasio, Travis Dick, Carl Kingsford, Tuomas Sandholm, and Ellen Vitercik.
\newblock How much data is sufficient to learn high-performing algorithms? generalization guarantees for data-driven algorithm design.
\newblock In {\em {STOC} '21: 53rd Annual {ACM} {SIGACT} Symposium on Theory of Computing}, pages 919--932. {ACM}, 2021.

\bibitem[BH08]{BH-EC08}
Liad Blumrosen and Thomas Holenstein.
\newblock Posted prices vs. negotiations: an asymptotic analysis.
\newblock {\em EC}, 10:1386790--1386801, 2008.

\bibitem[BKS18]{BKS-JACM18}
Ashwinkumar Badanidiyuru, Robert Kleinberg, and Aleksandrs Slivkins.
\newblock Bandits with knapsacks.
\newblock {\em J. {ACM}}, 65(3):13:1--13:55, 2018.

\bibitem[BSV18]{BSV-EC18}
Maria{-}Florina Balcan, Tuomas Sandholm, and Ellen Vitercik.
\newblock A general theory of sample complexity for multi-item profit maximization.
\newblock In {\em Proceedings of the 2018 {ACM} Conference on Economics and Computation}, pages 173--174. {ACM}, 2018.

\bibitem[CBCP19]{cesa2019dynamic}
Nicolo Cesa-Bianchi, Tommaso Cesari, and Vianney Perchet.
\newblock Dynamic pricing with finitely many unknown valuations.
\newblock In {\em Algorithmic Learning Theory}, pages 247--273. PMLR, 2019.

\bibitem[CBL06]{CL-Book06}
Nicolo Cesa-Bianchi and G{\'a}bor Lugosi.
\newblock {\em Prediction, learning, and games}.
\newblock Cambridge university press, 2006.

\bibitem[CD17]{CD-FOCS17}
Yang Cai and Constantinos Daskalakis.
\newblock Learning multi-item auctions with (or without) samples.
\newblock In Chris Umans, editor, {\em 58th {IEEE} Annual Symposium on Foundations of Computer Science, {FOCS}}, pages 516--527. {IEEE} Computer Society, 2017.

\bibitem[CDO{\etalchar{+}}15]{ChenDOPSY15}
Xi~Chen, Ilias Diakonikolas, Anthi Orfanou, Dimitris Paparas, Xiaorui Sun, and Mihalis Yannakakis.
\newblock On the complexity of optimal lottery pricing and randomized mechanisms.
\newblock In {\em {IEEE} 56th Annual Symposium on Foundations of Computer Science, {FOCS}}, pages 1464--1479, 2015.

\bibitem[CFPV19]{CFPV19}
Jos{\'{e}}~R. Correa, Patricio Foncea, Dana Pizarro, and Victor Verdugo.
\newblock From pricing to prophets, and back!
\newblock {\em Oper. Res. Lett.}, 47(1):25--29, 2019.

\bibitem[CGM15]{CGM15}
Nicol{\`{o}} Cesa{-}Bianchi, Claudio Gentile, and Yishay Mansour.
\newblock Regret minimization for reserve prices in second-price auctions.
\newblock {\em {IEEE} Trans. Inf. Theory}, 61(1):549--564, 2015.

\bibitem[CHK07]{CHK-EC07}
Shuchi Chawla, Jason~D. Hartline, and Robert~D. Kleinberg.
\newblock {Algorithmic Pricing via Virtual Valuations}.
\newblock In {\em the 8th ACM Conference on Electronic Commerce (EC)}, 2007.

\bibitem[CHMS10]{CHMS-STOC10}
Shuchi Chawla, Jason~D. Hartline, David~L. Malec, and Balasubramanian Sivan.
\newblock Multi-parameter mechanism design and sequential posted pricing.
\newblock In {\em Proceedings of the 42nd {ACM} Symposium on Theory of Computing, {STOC}}, pages 311--320, 2010.

\bibitem[CMS15]{CMS-Games15}
Shuchi Chawla, David~L. Malec, and Balasubramanian Sivan.
\newblock The power of randomness in bayesian optimal mechanism design.
\newblock {\em Games and Economic Behavior}, 91:297--317, 2015.

\bibitem[COZ22]{COZ-STOC22}
Yang Cai, Argyris Oikonomou, and Mingfei Zhao.
\newblock Computing simple mechanisms: Lift-and-round over marginal reduced forms.
\newblock In {\em 54th Annual {ACM} {SIGACT} Symposium on Theory of Computing, {STOC}}, pages 704--717, 2022.

\bibitem[CP14]{DBLP:conf/icml/CombesP14}
Richard Combes and Alexandre Prouti{\`{e}}re.
\newblock Unimodal bandits: Regret lower bounds and optimal algorithms.
\newblock In {\em Proceedings of the 31th International Conference on Machine Learning, {ICML}}, volume~32 of {\em {JMLR} Workshop and Conference Proceedings}, pages 521--529. JMLR.org, 2014.

\bibitem[CR14]{DBLP:conf/stoc/ColeR14}
Richard Cole and Tim Roughgarden.
\newblock The sample complexity of revenue maximization.
\newblock In David~B. Shmoys, editor, {\em Symposium on Theory of Computing, {STOC} 2014, New York, NY, USA, May 31 - June 03, 2014}, pages 243--252. {ACM}, 2014.

\bibitem[DHP16]{DBLP:conf/stoc/Devanur0P16}
Nikhil~R. Devanur, Zhiyi Huang, and Christos{-}Alexandros Psomas.
\newblock The sample complexity of auctions with side information.
\newblock In {\em Proceedings of the 48th Annual {ACM} {SIGACT} Symposium on Theory of Computing, {STOC}}, pages 426--439. {ACM}, 2016.

\bibitem[DRY15]{DRY-GEB15}
Peerapong Dhangwatnotai, Tim Roughgarden, and Qiqi Yan.
\newblock Revenue maximization with a single sample.
\newblock {\em Games Econ. Behav.}, 91:318--333, 2015.

\bibitem[EFLS18]{einav2018auctions}
Liran Einav, Chiara Farronato, Jonathan Levin, and Neel Sundaresan.
\newblock Auctions versus posted prices in online markets.
\newblock {\em Journal of Political Economy}, 126(1):178--215, 2018.

\bibitem[GHTZ21]{GHTZ-COLT21}
Chenghao Guo, Zhiyi Huang, Zhihao~Gavin Tang, and Xinzhi Zhang.
\newblock Generalizing complex hypotheses on product distributions: Auctions, prophet inequalities, and pandora's problem.
\newblock In {\em Conference on Learning Theory, {COLT}}, pages 2248--2288, 2021.

\bibitem[GHZ19]{GHZ-STOC19}
Chenghao Guo, Zhiyi Huang, and Xinzhi Zhang.
\newblock Settling the sample complexity of single-parameter revenue maximization.
\newblock In {\em Proceedings of the 51st Annual {ACM} {SIGACT} Symposium on Theory of Computing, {STOC}}, pages 662--673. {ACM}, 2019.

\bibitem[GKSW24]{GKSW-arXiv22}
Khashayar Gatmiry, Thomas Kesselheim, Sahil Singla, and Yifan Wang.
\newblock Bandit algorithms for prophet inequality and pandora's box.
\newblock In {\em Proceedings of the thirty-sixth annual ACM-SIAM symposium on Discrete Algorithms, {SODA}}, 2024.

\bibitem[GN17]{DBLP:conf/stoc/GonczarowskiN17}
Yannai~A. Gonczarowski and Noam Nisan.
\newblock Efficient empirical revenue maximization in single-parameter auction environments.
\newblock In {\em Proceedings of the 49th Annual {ACM} {SIGACT} Symposium on Theory of Computing, {STOC}}, pages 856--868. {ACM}, 2017.

\bibitem[GW21]{GW21}
Yannai~A. Gonczarowski and S.~Matthew Weinberg.
\newblock The sample complexity of up-to-{\(\epsilon\)} multi-dimensional revenue maximization.
\newblock {\em J. {ACM}}, 68(3):15:1--15:28, 2021.

\bibitem[Har13]{HartlineBook}
Jason~D Hartline.
\newblock Mechanism design and approximation.
\newblock {\em Book draft. October}, 122(1), 2013.

\bibitem[HMR18]{HMR-SICOMP18}
Zhiyi Huang, Yishay Mansour, and Tim Roughgarden.
\newblock Making the most of your samples.
\newblock {\em {SIAM} J. Comput.}, 47(3):651--674, 2018.

\bibitem[ISSS22]{ISSS-JACM22}
Nicole Immorlica, Karthik~Abinav Sankararaman, Robert~E. Schapire, and Aleksandrs Slivkins.
\newblock Adversarial bandits with knapsacks.
\newblock {\em J. {ACM}}, 69(6):40:1--40:47, 2022.

\bibitem[JLX23]{JLX-ITCS23}
Yaonan Jin, Pinyan Lu, and Tao Xiao.
\newblock Learning reserve prices in second-price auctions.
\newblock In {\em 14th Innovations in Theoretical Computer Science Conference, {ITCS}}, 2023.

\bibitem[KL03]{DBLP:conf/focs/KleinbergL03}
Robert~D. Kleinberg and Frank~Thomson Leighton.
\newblock The value of knowing a demand curve: Bounds on regret for online posted-price auctions.
\newblock In {\em 44th Symposium on Foundations of Computer Science, {FOCS}}, pages 594--605. {IEEE} Computer Society, 2003.

\bibitem[LS20]{LS-Book20}
Tor Lattimore and Csaba Szepesv{\'a}ri.
\newblock {\em Bandit algorithms}.
\newblock Cambridge University Press, 2020.

\bibitem[LSTW23a]{DBLP:conf/sigecom/LemeSTW23}
Renato~Paes Leme, Balasubramanian Sivan, Yifeng Teng, and Pratik Worah.
\newblock Description complexity of regular distributions.
\newblock In {\em Proceedings of the 24th {ACM} Conference on Economics and Computation, {EC}}, page 959. {ACM}, 2023.

\bibitem[LSTW23b]{LSTW-SODA23}
Renato~Paes Leme, Balasubramanian Sivan, Yifeng Teng, and Pratik Worah.
\newblock Pricing query complexity of revenue maximization.
\newblock In {\em Proceedings of the thirty-fourth annual ACM-SIAM symposium on Discrete Algorithms}, 2023.

\bibitem[Luc17]{Lucier-Survey17}
Brendan Lucier.
\newblock An economic view of prophet inequalities.
\newblock {\em SIGecom Exch.}, 16(1):24--47, 2017.

\bibitem[MCP14]{DBLP:journals/corr/MagureanuCP14}
Stefan Magureanu, Richard Combes, and Alexandre Prouti{\`{e}}re.
\newblock Lipschitz bandits: Regret lower bounds and optimal algorithms.
\newblock {\em CoRR}, abs/1405.4758, 2014.

\bibitem[MN21]{DBLP:conf/alt/MeisterN21}
Michela Meister and Sloan Nietert.
\newblock Learning with comparison feedback: Online estimation of sample statistics.
\newblock In {\em Algorithmic Learning Theory}, volume 132, pages 983--1001. {PMLR}, 2021.

\bibitem[MR16]{MR-COLT16}
Jamie Morgenstern and Tim Roughgarden.
\newblock Learning simple auctions.
\newblock In {\em Proceedings of the 29th Conference on Learning Theory, {COLT}}, volume~49 of {\em {JMLR} Workshop and Conference Proceedings}, pages 1298--1318. JMLR.org, 2016.

\bibitem[Mye81]{myerson1981optimal}
Roger~B Myerson.
\newblock Optimal auction design.
\newblock {\em Mathematics of operations research}, 6(1):58--73, 1981.

\bibitem[NGW{\etalchar{+}}21]{DBLP:conf/sigecom/NiazadehGWSB21}
Rad Niazadeh, Negin Golrezaei, Joshua~R. Wang, Fransisca Susan, and Ashwinkumar Badanidiyuru.
\newblock Online learning via offline greedy algorithms: Applications in market design and optimization.
\newblock In {\em {EC} '21: The 22nd {ACM} Conference on Economics and Computation}, pages 737--738. {ACM}, 2021.

\bibitem[OGKG23]{okoroafor2023non}
Princewill Okoroafor, Vaishnavi Gupta, Robert Kleinberg, and Eleanor Goh.
\newblock Non-stochastic cdf estimation using threshold queries.
\newblock In {\em Proceedings of the 2023 Annual ACM-SIAM Symposium on Discrete Algorithms (SODA)}, pages 3551--3572. SIAM, 2023.

\bibitem[Rou17]{roughgarden2017twenty}
Tim Roughgarden.
\newblock {\em Twenty lectures on algorithmic game theory}.
\newblock 2017.

\bibitem[RS16]{DBLP:conf/sigecom/RoughgardenS16}
Tim Roughgarden and Okke Schrijvers.
\newblock Ironing in the dark.
\newblock In {\em Proceedings of the 2016 {ACM} Conference on Economics and Computation, {EC}}, pages 1--18. {ACM}, 2016.

\bibitem[Sli19]{Slivkins-Book19}
Aleksandrs Slivkins.
\newblock Introduction to multi-armed bandits.
\newblock {\em Foundations and Trends in Machine Learning}, 12(1-2):1--286, 2019.

\bibitem[Yan11]{DBLP:conf/soda/Yan11}
Qiqi Yan.
\newblock Mechanism design via correlation gap.
\newblock In Dana Randall, editor, {\em Proceedings of the Twenty-Second Annual {ACM-SIAM} Symposium on Discrete Algorithms, {SODA} 2011, San Francisco, California, USA, January 23-25, 2011}, pages 710--719. {SIAM}, 2011.

\end{thebibliography}
\end{small}

\end{document}